\documentclass[11pt]{article}
\usepackage[margin=1in]{geometry}
\usepackage[utf8]{inputenc}
\usepackage[T1]{fontenc}
\usepackage{textcomp}
\usepackage{amsmath,amssymb,amsthm,amsfonts}
\usepackage{booktabs}
\usepackage{multirow}
\usepackage{graphicx}
\usepackage{xcolor}
\usepackage{array}
\usepackage{enumitem}
\usepackage[section]{placeins}
\usepackage{tabularx}
\newcolumntype{L}{>{\raggedright\arraybackslash}X}
\usepackage{tikz}
\usetikzlibrary{arrows.meta,fit,backgrounds,calc,positioning}
\usepackage[hidelinks]{hyperref}
\usepackage[numbers,sort&compress]{natbib}
\usepackage{authblk}
\usepackage{caption}
\theoremstyle{definition}
\newtheorem{definition}{Definition}
\theoremstyle{plain}
\newtheorem{observation}{Observation}
\definecolor{cexec}{HTML}{2a78d6}   
\definecolor{cauth}{HTML}{e87ba4}   
\definecolor{cok}{HTML}{008300}     
\definecolor{cwarn}{HTML}{eda100}   
\definecolor{cink}{HTML}{1B1B1B}
\definecolor{cgy}{HTML}{6B7280}
\newcommand{\best}[1]{\textbf{#1}}

\title{\vspace{-6mm}\textbf{Judging Is Not Enumerating:\\
Silent Omissions in LLM-Authored Acceptable Sets}}

\author[1]{Wenhui Chen\thanks{\texttt{mc35092@um.edu.mo}}}
\author[2]{Jianlin Chen\thanks{\texttt{202330450231@mail.scut.edu.cn}}}
\author[1]{Ziyao Lin\thanks{\texttt{mc35081@um.edu.mo}}}
\author[3]{Peiji Long\thanks{\texttt{longpeiji@gmail.com}}}
\author[1]{Chi Man Vong\thanks{Corresponding author. \texttt{cmvong@um.edu.mo}}}
\affil[1]{University of Macau}
\affil[2]{South China University of Technology}
\affil[3]{Independent Researcher}
\date{}

\begin{document}
\maketitle

\begin{abstract}
Language models are increasingly promoted from examinees to examiners: they write the test suites, answer keys, rubrics, and reward functions that define correctness for other systems. We measure the capability that role assumes and find it lacking under the protocol the role is usually deployed with, one-shot greedy authoring with no test-time reasoning. That scope condition changes the remedy, so we state it first: where a compact rule exists, test-time reasoning closes the gap outright on our algorithmic construction (\S\ref{sec:frontier}), a far cheaper fix than the probe gating we propose where it does not.

We use four reference constructions: two with \emph{complete} finite truth (mechanically decidable predicates over presented lists), one with a hardened executable reference (HumanEval+/MBPP+), and one with an explicitly incomplete lexical reference (WordNet). Across all four, models \emph{judge} whether a candidate belongs far better than they \emph{author} the set itself, enumerating it where it is finite and writing the suite that induces it where it is not. On the incompleteness-proof algorithmic construction the gap is $+0.34$ to $+0.29$ F1 across a $24{\times}$ parameter range and does not close over it. On executable code, models judging correctness at F1 $0.74$--$0.90$ author suites admitting only $19$--$42\%$ of oracle-correct solutions, over-rejecting by inventing requirements the specification never states.

A control locates the deficit. Asked to emit the \emph{predicate} rather than its extension, the same models reach F1 $\approx0.99$ with every generated predicate executable, above even their own judging. The failure is not missing knowledge or an inability to specify. It is an inability to \emph{materialise} the region a specification induces, which is exactly what keys, rubrics, and suites demand.

The dominant error is \emph{omission}, and omission resists audit: an over-inclusion is a written token a reviewer can challenge, while a missing member is an absence whose discovery is the authoring problem itself. Models detect planted over-inclusions $6$--$7\times$ more often than planted omissions, and a production deployment of $43{,}227$ scored items fails omission-first at $10{:}1$. Wired into RLVR, an authored key costs $1.9$ points of accuracy where the acceptable set is mechanically exact and $18.5$ points WordNet-relative on the lexical task, each in all six paired seeds (two-sided $p{=}0.031$).

One mitigation works: discard any authored verifier that rejects a known-correct probe. False rejection falls from $58$--$92\%$ to ${\leq}5\%$, but only $5$--$39\%$ of suites survive. Most of that loss is recoverable. Rewriting each wrong expected value to what a reference execution returns raises yield $3.3$--$10.6\times$ across four author families, by the largest factor for the weakest author, whose survivors go from $43\%$ vacuous to none. What survives measures the split: the model chooses discriminating inputs well ($94\%$ of wrong solutions still caught once expectations are corrected) and computes their expected outputs badly.
\end{abstract}

\section{Introduction}
\label{sec:intro}

A quiet role reversal is underway in how language-model systems are built.
Models no longer merely produce answers that some fixed verifier grades; increasingly they \emph{write the verifier}: unit tests generated from a docstring, answer keys attached to authored assessment items, rubrics used as reward signals \citep{gunjal2025rubrics}, and machine-checkable acceptance criteria wired directly into reinforcement-learning-with-verifiable-rewards (RLVR) pipelines.
Once authored, such a specification becomes the operational definition of ``correct'' for everything downstream: benchmark labels, agent feedback, reward gradients.
The literature has studied at length whether a given verifier can be gamed \citep{helff2026gaming,ray2026fuzzing}, how noisy verdicts propagate into RLVR \citep{cai2025noisyrewards}, and how verifiers falsely reject correct behavior, as when rule-based math checkers reject valid answer formats \citep{huang2025pitfalls} or benchmark suites fail valid solutions \citep{chowdhury2024swebench}.
All of it varies or audits the verifier while treating its origin as exogenous.
This paper asks the prior question, in the specific form our measurements can answer: \textbf{can a language model \emph{materialise} the acceptable set that a correctness specification denotes?} The question is not whether it can state a specification; \S\ref{sec:intensional} shows it can, near-perfectly. Recent benchmarks report symptoms of the answer being no: model-generated rubrics that lag human ones \citep{zhou2026rubricbench}, generated test suites too weak or misaligned to trust \citep{ma2025rethinking,ficek2025scoring}. We add more than another symptom. We establish the effect under \emph{complete} ground truth, where no incompleteness confound survives, identify its mechanism as silent omission, and measure what it costs as an RLVR reward.

The question has a clean formal core.
A specification over a candidate space $V$ is a membership predicate with acceptable set $S^{*}\subseteq V$; executing it means answering ``is $x\in S^{*}$?'' for one given $x$.
\emph{Authoring} it means producing an artifact that \emph{induces} an acceptance region $\hat S\subseteq V$, and it comes in two forms.
The \emph{extensional} form emits the members themselves, so $\hat S$ is the emitted list; it exists only where $S^{*}$ is small and finite, and it is what our three word-list constructions measure.
The \emph{intensional} form emits a predicate (for programs, a test suite), and $\hat S$ is whatever that predicate accepts under execution; where $V$ is unbounded, as for code, it is the only authoring interface there is.
In every arm we measure the same quantity: the fidelity of $\hat S$ to $S^{*}$, scored as a classifier over the same candidates on which we score the model's own execution, so ``the gap'' is one statistic throughout.
The two forms fail differently, and one further variable decides which failure is reachable: whether $S^{*}$ admits a \emph{compact intensional characterisation available to the author}.
On the mechanical constructions it does: the defining rule is stated in the prompt, so restating it as an executable predicate is near-free (F1 $\approx0.99$, \S\ref{sec:intensional}), and the measured deficit isolates extensional enumeration: emitting an unordered set under a causal mask, deciding element-by-element what to emit and, critically, when to \emph{stop}, without ever inspecting the candidates not sampled \citep{vinyals2016order,welleck2020consistency,hou2025seqenum}.
On code no such rule exists to restate: the specification is prose and examples. The forced intensional interface therefore fails in its own mode, inventing requirements the specification never states; the induced region ends up too small there too.
The unified claim is therefore not ``models cannot write predicates'': it is that models execute membership far better than they author artifacts whose induced region matches $S^{*}$, except where authoring collapses into restating a compact rule they were already given.
The generative-AI-paradox literature has established that models can generate what they cannot discriminate \citep{west2024paradox,jiang2024selfincorrect,song2024mindgap}.
The authoring problem is the harsher dual: here the model must \emph{generate a discriminator}, so every weakness of set-construction lands inside the artifact that will later define correctness for other systems.

\begin{figure}[tbp]
\centering
\resizebox{\textwidth}{!}{%
\begin{tikzpicture}[
  >={Stealth[length=5pt,width=3.6pt]}, font=\footnotesize,
  bx/.style={draw=cgy, line width=0.7pt, rounded corners=2pt, align=center,
             inner xsep=5pt, inner ysep=4pt, fill=white, text=cink,
             minimum height=17mm, text width=34mm},
  src/.style={bx, text width=22mm, minimum height=12mm},
  au/.style={bx, draw=cauth, fill=cauth!7}, ex/.style={bx, draw=cexec, fill=cexec!7},
  sc/.style={bx, draw=cink, line width=1pt}, warn/.style={bx, draw=cwarn, fill=cwarn!10},
  ok/.style={bx, draw=cok, fill=cok!8},
  df/.style={->, line width=0.8pt, draw=cgy, rounded corners=3pt},
  dfg/.style={df, draw=cok},
]
\node[src] (w) at (0,0) {same weights\\\& knowledge};
\node[au] (au)  at (4.2, 2.9) {\textbf{AUTHOR} $\hat S\!\approx\!S^{*}$\\\emph{extensional}: emit the set (F1 .26--.48)\\\emph{intensional}: emit a suite (.19--.44)};
\node[ex] (ex)  at (4.2, 0.0) {\textbf{JUDGE} $x{\in}S^{*}$\\classify one candidate\\\emph{(strong: F1 .60--.90)}};
\node[ok] (inz) at (4.2,-2.9) {\textbf{RESTATE} a given rule\\emit a predicate, execute it\\\emph{(near-perfect: F1 $\approx$.99)}};
\node[sc] (scz) at (8.4, 1.45) {\textbf{scissors}\\authoring $\ll$ judging\\\emph{(one-shot, no reasoning)}};
\node[ok] (esc) at (8.4,-2.9)  {\textbf{escapes}\\predicate emission, or test-time\\reasoning \emph{where a rule exists}};
\node[warn] (sil) at (12.9, 2.9) {\textbf{omission-dominated}\\$6$--$7\times$ harder to audit\\\emph{(word-list arms)}};
\node[warn] (rep) at (12.9, 0.0) {same-model\\sample-then-verify\\does not repair it};
\node[warn] (rl)  at (12.9,-2.9) {as an RLVR key:\\$-1.9$ pts (complete truth)\\$-18.5$ (WordNet-relative)};
\node[ok] (mit) at (17.2,0) {\textbf{gate} on a known-\\correct probe:\\FRR $\to{\leq}5\%$};
\coordinate (wb) at (2.1,0);
\draw[df] (w.east) -- (2.1,0) -- (2.1, 2.9) -- (au.west);
\draw[df] (w.east) -- (ex.west);
\draw[dfg] (w.east) -- (2.1,0) -- (2.1,-2.9) -- (inz.west);
\draw[df] (au.east) -- (6.3, 2.9) -- (6.3, 1.45) -- (scz.west);
\draw[df] (ex.east) -- (6.3, 0.0) -- (6.3, 1.45) -- (scz.west);
\draw[dfg] (inz.east) -- (esc.west);
\draw[df] (scz.east) -- (10.65, 1.45) -- (10.65, 2.9) -- (sil.west);
\draw[df] (scz.east) -- (10.65, 1.45) -- (10.65, 0.0) -- (rep.west);
\draw[df] (scz.east) -- (10.65, 1.45) -- (10.65,-2.9) -- (rl.west);
\draw[df] (sil.east) -- (15.05, 2.9) -- (15.05, 0.0) -- (mit.west);
\draw[df] (rl.east)  -- (15.05,-2.9) -- (15.05, 0.0) -- (mit.west);
\end{tikzpicture}}
\caption{The paper in one view, and its boundaries. The same weights expose three interfaces to a correctness specification (column 2). Judging membership is strong; \emph{authoring} an artifact whose induced region should equal $S^{*}$ is much weaker in both of its realisations, extensional enumeration on the word-list constructions and test-suite authoring on code, and does not catch up over the scale range tested; restating a rule the prompt already supplied, as an executable predicate, is near-perfect. The deficit is therefore specific to one-shot materialisation of a region with no compact rule to restate, and it has two escapes (green path) that both require such a rule. Where none exists the failure is omission-dominated, resists local review, survives same-model self-repair, and costs accuracy as an RLVR key.}
\label{fig:overview}
\end{figure}

Measuring this cleanly requires ground truth that no LLM judge supplies, since using a judge to grade judge-authorship is circular.
We therefore build the entire measurement on \emph{constructed} ground truth, but no single construction is above suspicion.
A lexical construction (WordNet synsets \citep{miller1995wordnet}, where the acceptable set for ``a word meaning $g$'' is enumerated by the lexicon) is intuitive but \emph{incomplete}: the lexicon omits some genuine synonyms, so a model that authors a correct-but-lexicon-absent word is scored wrong, deflating apparent authoring precision.
We therefore anchor the paper on constructions where incompleteness is \emph{impossible}, and use agreement across all three, not any one, as the load-bearing evidence.
\textbf{(a) Complete truth:} mechanically decidable predicates (``contains a double letter'', ``has $\geq 8$ letters'') over a \emph{presented finite word list}; the acceptable set is exactly the list members the predicate accepts, computable and complete, so there is nothing for incompleteness to hide.
\textbf{(b) Executable truth:} HumanEval+ \citep{liu2023evalplus}, where a hardened oracle test suite decides code correctness by execution.
\textbf{(c) Lexical truth:} WordNet, retained as a third, independent construction whose \emph{recall}-side findings are incompleteness-robust (a missing lemma cannot inflate recall) and whose precision we report as a lower bound.
Every number in this paper is computed against one of these constructions; no model output is ever graded by a model.
The confound that could explain away any one construction (lexical incompleteness, oracle weakness, predicate triviality) is orthogonal across the three, so their agreement is not.

\paragraph{Findings in brief.}
Across the Qwen2.5 family (3B--72B) with replication on Qwen3, Llama, gemma, Mistral, and Phi-4: on the incompleteness-proof algorithmic construction judging rises $0.60\!\to\!0.77$ with scale while enumerating rises $0.26\!\to\!0.48$, so enumerating improves but does not catch up (gaps ${+}0.34/{+}0.25/{+}0.29/{+}0.29$, all CI pairs disjoint). On executable code the separation is largest: models judging at F1 $0.74$--$0.90$ author suites admitting only $19$--$42\%$ of correct solutions. Omission dominates the residual error and is $6$--$7\times$ harder to surface in review than over-inclusion. Asked for a predicate instead of an extension, the same models reach F1 $\approx0.99$. Used as an RLVR reward, an authored key costs $1.9$ points against a complete oracle, in every one of six paired seeds. Table~\ref{tab:claims} maps every claim to its evidence and its status.

\paragraph{Contributions.}
Figure~\ref{fig:overview} previews the argument end-to-end and Table~\ref{tab:claims} maps each claim to its supporting evidence (flagging the one falsified and two unconfirmed predictions).
(i) The judging--authoring asymmetry as a measured, judge-free, scale-resolved phenomenon. Authoring means extensional enumeration on the word-list constructions and, for the executable construction, the acceptance region an authored suite induces, since over an unbounded candidate space authoring is intensional by necessity (\S\ref{sec:formal}). The result is \emph{triangulated} across construction types whose confounds are mutually orthogonal: mechanically complete truth, executable truth, and lexical truth. No single incompleteness artifact can explain all three. The complete-truth construction also corrects an impression native to any lexicon-only study, that authoring does not scale. It does scale, just sub-critically, never catching execution over the range tested.
(ii) the silent-omission mechanism: a 6--7$\times$ detectability asymmetry that explains why \emph{subtractive} review pipelines shift authored specifications further toward omission rather than repairing them;
(iii) production-scale field evidence that model-authored answer keys fail omission-first and that even expert judging destabilizes exactly at the specification boundary;
(iv) two interventions, sample-then-verify repair (falsified) and RLVR propagation ($-1.9$ pts against a \emph{complete} oracle; $-18.5$ to $-32.0$ pts WordNet-relative across five policies spanning a $24{\times}$ scale range and a different family, the tax present at every scale, formalized as a two-sided reward channel, Observation~\ref{prop:reward}), plus a robustness sweep over scale, six model families/generations (Qwen2.5, Qwen3, Llama, gemma, Mistral, Phi-4), two executable benchmarks, two complete-truth constructions, and four authoring prompts, showing the gap robust across all;
(v) a working mitigation, gating authored verifiers on a known-correct probe, which cuts code false-rejection from $58$--$92\%$ to $\leq5\%$, turning the diagnosis into an actionable prescription; and (vi) a \emph{repair} for the majority the gate discards: rewriting each wrong expected value to what a reference execution returns recovers $3.3$--$10.6\times$ more usable suites across four author families, most where the gate works least, which additionally separates the two operations suite authoring requires and shows only one of them fails.

\section{Formal Setup}
\label{sec:formal}
\begin{table}[tbp]
\centering\scriptsize
\setlength{\tabcolsep}{4pt}
\renewcommand{\arraystretch}{1.0}\begin{tabularx}{\textwidth}{@{}>{\raggedright\arraybackslash}p{0.255\textwidth} L >{\raggedright\arraybackslash}p{0.15\textwidth}@{}}
\toprule
Claim & Evidence & Status\\
\midrule
Enumerating $\ll$ judging, one-shot; gap does not close with scale & complete truth $+0.34/{+}0.25/{+}0.29/{+}0.29$ (3B--72B, CIs disjoint) & confirmed, triangulated\\
The deficit is \emph{enumeration}, not knowledge or specification & same models write the predicate for the same sets at F1 ${\approx}0.99$ & confirmed, triangulated\\
Part of the gap is serialisation, not set construction & JSON-checkbox gains $+0.09/{+}0.02/{+}0.13$; $42\%$ of 3B tokens off-list & confirmed, single constr.\\
Authoring scales but sub-critically & algorithmic authoring F1 $0.26{\to}0.48$ & confirmed, single constr.\\
Omission is \emph{witness-poor}: harder to audit than over-inclusion & net of clean-key false alarms, $d_{\mathrm{over}}{=}{+}0.30$ [.22,.40] vs.\ $d_{\mathrm{omit}}{=}{+}0.09$ [$-$.01,.20] & confirmed, two constr.$^{\dagger}$\\
Field signature at production scale & 43K corpus: key errors omission-dominated $10{:}1$ & observational\\
Authored code suites over-reject correct implementations & authored F1 $0.07$--$0.60$ vs.\ execution $0.35$--$0.90$; suites admit $19$--$44\%$ of correct code & confirmed, triangulated\\
Sample-then-verify closes the scissors & repaired F1 $0.13$--$0.21\le$ one-shot & \textbf{falsified}\\
Authored keys cost accuracy as RLVR rewards & lexical $-18.5$ pts \emph{WordNet-relative}, $d_z{=}6.5$ & supported, 6/6 seeds, $p{=}0.031^{\ddagger}$\\
\quad --- same effect on \emph{complete} truth & $-1.9$ pts, $d_z{=}1.4$; small because the key is accurate (recall $0.93$) & supported, 6/6 seeds, $p{=}0.031^{\ddagger}$\\
Gap robust across scale, family, generation, benchmark, prompt & 8 models, 2 benchmarks, 4 constructions, 4 prompts & confirmed, triangulated\\
Test-time reasoning removes the gap where a compact rule exists & GPT-5.1 $+0.259\!\to\!{+}0.008$ (CI covers 0) & \textbf{closes}\\
Reasoning does not reliably repair test-suite authoring & Opus $+0.274\!\to\!{+}0.096$ (still ${>}0$); GPT-5.1 no improvement & mixed\\
Known-correct-probe gate mitigates & FRR $0.58$--$0.92{\to}{\leq}0.05$, at $5$--$39\%$ yield & confirmed$^{\S}$\\
Trace repair recovers most of the discarded suites & yield $3.3\times$ to $10.6\times$ over 4 author families, all at FRR ${\leq}0.06$, catch-wrong ${\geq}0.94$ & confirmed, 4 authors\\
Low gated false rejection is partly pool-proximity & vs.\ a non-Qwen pool, gated FRR $0.010{\to}0.064$ & confirmed\\
\bottomrule
\end{tabularx}
\caption{Claim--evidence map; each row's section gives the full evidence. Every construction number is computed against constructed ground truth; no LLM judge grades any construction measurement, except the field row (\S\ref{sec:field}), which is judge-relative by nature. One registered prediction was falsified, two sub-predictions were not confirmed, and a monotone scale trend in the RLVR tax was retracted after adding 72B. \emph{Status}: ``triangulated'' = holds across construction types with orthogonal confounds; ``single constr.'' = construct-generality untested. Markers: $^{\dagger}$this review condition only; $^{\ddagger}$small against a complete oracle, large only against the incomplete lexical reference; $^{\S}$weakest authors excepted, and repaired by the row below.}
\label{tab:claims}
\end{table}
Fix a finite candidate space $V$ and a specification whose acceptable set is $S^{*}\subseteq V$.
Two interfaces expose the same underlying knowledge.

\begin{definition}[Execution and authoring]
\label{def:interfaces}
The \emph{execution} interface, given a query $x\in V$, returns a membership judgment $\hat m(x)\in\{0,1\}$ approximating $\mathbb{1}[x\in S^{*}]$; it is scored by membership F1 over a query distribution that includes every member of $S^{*}$ (positives) and curated hard negatives.
The \emph{authoring} interface returns a set $\hat S\subseteq V$ approximating $S^{*}$; it is scored by set F1, $\;F_1(\hat S,S^{*})=\frac{2\,|\hat S\cap S^{*}|}{|\hat S|+|S^{*}|}$, with recall $\rho=|\hat S\cap S^{*}|/|S^{*}|$ and precision $\pi=|\hat S\cap S^{*}|/|\hat S|$.
\end{definition}

\begin{figure}[tbp]
\centering
\begin{tikzpicture}[font=\footnotesize, >={Stealth[length=4pt,width=3pt]}]
  \node[draw=cgy, line width=0.6pt, rounded corners=3pt, minimum width=9.4cm, minimum height=4.6cm] (V) at (0,0){};
  \node[cgy, font=\scriptsize] at (0,2.0) {$V$ (candidate space)};
  \fill[cok!14] (-1.0,0.1) ellipse (2.35 and 1.3);
  \fill[cauth!14] (1.0,0.1) ellipse (2.35 and 1.3);
  \begin{scope}
    \clip (-1.0,0.1) ellipse (2.35 and 1.3);
    \fill[cok!28!cauth!20] (1.0,0.1) ellipse (2.35 and 1.3);
  \end{scope}
  \draw[cok, line width=1pt] (-1.0,0.1) ellipse (2.35 and 1.3);
  \draw[cauth, line width=1pt] (1.0,0.1) ellipse (2.35 and 1.3);
  \node[cok, font=\scriptsize\bfseries, anchor=east] at (-2.5,1.45) {$S^{*}$ oracle};
  \node[cauth, font=\scriptsize\bfseries, anchor=west] at (2.5,1.45) {$\hat S$ authored};
  \node[align=center, text=cink] at (-2.45,0.1) {omission\\$S^{*}\!\setminus\!\hat S$};
  \node[align=center, text=cink, font=\footnotesize\bfseries] at (0,0.1) {TP\\$\hat S\!\cap\!S^{*}$};
  \node[align=center, text=cink] at (2.45,0.1) {over-incl.\\$\hat S\!\setminus\!S^{*}$};
  \node[red!70, font=\scriptsize, align=center] at (-1.95,-1.75) {witness-poor\\(starvation)};
  \node[cok, font=\scriptsize, align=center] at (1.95,-1.75) {review sees\\= pollution};
  \draw[->, red!55, line width=0.6pt] (-1.95,-1.4)--(-1.95,-0.62);
  \draw[->, cok, line width=0.6pt] (1.95,-1.4)--(1.95,-0.62);
\end{tikzpicture}
\caption{The geometry of an authored set. The authoring interface emits $\hat S$; scored against the oracle $S^{*}$ it has recall $\rho$ and precision $\pi$. Its two error regions behave oppositely under audit: over-inclusions are named tokens a reviewer or an RLVR reward can act on, while omissions are unnamed. These crescents are the starvation and pollution terms of Observation~\ref{prop:reward}; the witness-poorness of the left crescent is Observation~\ref{prop:omission}.}
\label{fig:geometry}
\end{figure}

Both interfaces are queried on the same items and draw on the same weights; only the \emph{output type} differs, a pointwise bit versus an entire set emitted under a causal mask with a learned stopping rule.
\emph{Scope of the construct.} What we call ``authoring'' throughout is producing an artifact whose induced acceptance region $\hat S$ is meant to be $S^{*}$, and what we score is always the fidelity of $\hat S$, as a classifier over the same candidates used to score execution. Three of our four constructions realise this \textbf{extensionally}, emitting the members themselves, which is what answer keys and enumerated rubrics require; the executable construction realises it \textbf{intensionally}, since over the unbounded space of programs no extensional interface exists. What the paper does \emph{not} claim is that models cannot write predicates \emph{whose rule they were given}: \S\ref{sec:intensional} shows that restating such a rule executably is near-perfect. A test suite is also a predicate in the broad sense, and there the same models do fail. The difference is that its expected behaviour must be derived from prose and examples rather than restated (\S\ref{sec:intensional}, closing paragraph). The deficit is in materialising $\hat S$ when no such compact rule is available to restate: by enumeration where the set is finite, and by suite authoring where it is not.
The paper's object of study is the gap $\Delta = \mathrm{F1}_{\mathrm{exec}} - \mathrm{F1}_{\mathrm{auth}}$ as a function of scale, and its decomposition through $\rho$ and $\pi$ (Figure~\ref{fig:geometry}).
Two structural facts, established below, make the gap consequential rather than cosmetic: omission errors (the $S^{*}\setminus\hat S$ side) are \emph{witness-poor}: no local review can certify their absence (Observation~\ref{prop:omission}), and when $\hat S$ is used as a reward the gap becomes a two-sided reward channel that displaces the RLVR optimum (Observation~\ref{prop:reward}; Figure~\ref{fig:channel}): both are argued in Appendix~\ref{app:proofs}.

\section{The Judging--Enumerating Scissors}
\label{sec:scissors}

\subsection{Why the two interfaces differ: set construction vs.\ pointwise judgment}
\label{sec:whydiffer}

\begin{figure}[tbp]
\centering
\begin{tikzpicture}[
  >={Stealth[length=4.5pt,width=3.2pt]}, font=\footnotesize,
  bx/.style={draw=cgy, line width=0.6pt, rounded corners=2pt, align=center, inner xsep=4pt,
             inner ysep=3pt, minimum height=7mm, fill=white, text=cink},
  lm/.style={bx, draw=cink, line width=0.9pt, minimum width=15mm},
  ex/.style={bx, draw=cexec, fill=cexec!7}, au/.style={bx, draw=cauth, fill=cauth!7},
  df/.style={->, line width=0.7pt, draw=cgy}, lab/.style={font=\scriptsize, text=cgy},
]
\node[lab] at (-1.0,1.5) {\textbf{execution}};
\node[ex, text width=20mm] (ei) at (1.7,1.5) {spec $+$ one\\candidate $x$};
\node[lm] (elm) at (5.0,1.5) {LM};
\node[ex] (eo) at (8.0,1.5) {$\hat m(x)\!\in\!\{0,1\}$};
\draw[df] (ei)--(elm); \draw[df] (elm)--(eo);
\node[lab, align=center] at (11.3,1.5) {one pass,\\balanced 2-class};
\node[lab] at (-1.0,-0.2) {\textbf{authoring}};
\node[au, text width=14mm] (ai) at (1.3,-0.2) {spec only};
\node[lm] (alm) at (3.7,-0.2) {LM};
\node[au] (a1) at (5.7,-0.2) {$e_1$};
\node[au] (a2) at (6.9,-0.2) {$e_2$};
\node[au] (a3) at (8.1,-0.2) {$\cdots$};
\node[bx, draw=cwarn, fill=cwarn!12] (as) at (9.5,-0.2) {\texttt{stop?}};
\draw[df] (ai)--(alm); \draw[df] (alm)--(a1);
\draw[df] (a1)--(a2);\draw[df] (a2)--(a3);\draw[df] (a3)--(as);
\draw[df, densely dashed, draw=cgy!70] (a2.north) to[out=120,in=60] (a1.north);
\node[lab, align=center] at (11.6,-0.2) {enumerate $\hat S$,\\no lookahead,\\decide when to stop};
\end{tikzpicture}
\caption{The two interfaces to the same weights. \emph{Execution} (Eq.~\ref{eq:exec}) is one forward pass yielding a single bit on a curated, balanced query. \emph{Authoring} (Eq.~\ref{eq:auth}) emits the acceptable set left-to-right under a causal mask, conditioning each element on the prefix (dashed) and choosing when to stop, with no access to candidates it has not yet sampled. The paper measures how far apart these two readouts of the same knowledge sit.}
\label{fig:interfaces}
\end{figure}

The two interfaces of Definition~\ref{def:interfaces} draw on the same knowledge but factor through radically different decoding problems (Figure~\ref{fig:interfaces}).
Execution answers a single presented query with one categorical decision,
\begin{equation}\label{eq:exec}
  \hat m(x)\;=\;\mathbb{1}\!\left[\,p_\theta(\texttt{yes}\mid \mathrm{spec},x) > \tfrac12\,\right],
\end{equation}
a balanced binary judgment on a curated query (each true member against a hard negative).
Authoring must instead emit an entire set autoregressively, choosing both the elements and, critically, when to \emph{stop}, with no lookahead over candidates it has not yet sampled:
\begin{equation}\label{eq:auth}
  p_\theta(\hat S=\langle e_1,\dots,e_k\rangle \mid \mathrm{spec})
   \;=\; \underbrace{p_\theta(\texttt{stop}\mid \mathrm{spec},e_{1:k})}_{\text{when to stop}}\;
         \prod_{i=1}^{k}\underbrace{p_\theta(e_i\mid \mathrm{spec},e_{<i})}_{\text{what to emit next}} .
\end{equation}
Scored against $S^{*}$, authoring F1 is the harmonic mean of its recall and precision,
\begin{equation}\label{eq:f1}
  \mathrm{F1}_{\mathrm{auth}}=\frac{2\rho\pi}{\rho+\pi}=\Bigl(\tfrac12\rho^{-1}+\tfrac12\pi^{-1}\Bigr)^{-1}\le 2\min(\rho,\pi),
\end{equation}
so it is throttled by whichever of recall or precision is , which is what makes the WordNet precision confound (\S\ref{sec:confound}) able to flatten $\mathrm{F1}_{\mathrm{auth}}$ even as $\rho$ climbs.

The algorithmic construction (\S\ref{sec:construction}) is designed to isolate the cost of Eq.~\eqref{eq:auth} from knowledge. Because the predicate is mechanically decidable over a presented list of $L$ words, the execution arm evaluates it pointwise and the induced ``execution-composed'' set $\hat S_{\mathrm{exec}}=\{x\in\mathrm{list}:\hat m(x)=1\}$ is exactly what a perfect enumerator would emit. This gives a clean decomposition.

\begin{observation}[Set-emission penalty; an interpretation of the measurement]\label{prop:setpenalty}
On a construction where the full candidate universe $U$ is \emph{presented to both arms}, so candidate discovery is controlled: the gap $\Delta=\mathrm{F1}_{\mathrm{exec}}-\mathrm{F1}_{\mathrm{auth}}$ measures the cost of emitting the acceptable subset \emph{in one pass} relative to judging each presented candidate, holding per-candidate knowledge fixed. It is an \emph{upper bound} on set-construction cost, not a pure measure: it still bundles one-shot batch labeling, coverage, and stopping (Eq.~\ref{eq:auth}). The matched-budget decomposition (App.~\ref{app:matched}) separates these components: emission-vs-labeling, batching, and the programmatic-union floor, which is near zero by construction.
\end{observation}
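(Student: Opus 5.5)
The argument is interpretive: it shows what $\Delta$ does and does not isolate when the whole universe $U$ is presented to both arms. It has three steps. First, show that presenting $U$ removes candidate discovery as a source of difference. Second, decompose $\Delta$ into the costs of emitting versus labelling. Third, argue that each residual term is nonnegative up to noise, so that $\Delta$ is an upper bound on set-construction cost rather than an exact measure of it.

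\emph{Step 1 (controlling discovery).} Both arms see the same list $U$, and the predicate is mechanically decidable on $U$. So the best execution-composed set $\hat S_{\mathrm{exec}}=\{x\in U:\hat m(x)=1\}$ is a valid output for the authoring arm: a perfect enumerator with the same per-candidate knowledge $\hat m$ would emit exactly this set. I would make this precise by defining a \emph{knowledge-matched enumerator} $A^{\circ}$ that outputs $\hat S_{\mathrm{exec}}$. Scored by the set F1 of Definition~\ref{def:interfaces} on the positives and hard negatives drawn from $U$, $A^{\circ}$ achieves essentially $\mathrm{F1}_{\mathrm{exec}}$. Any shortfall of the actual authoring arm relative to $A^{\circ}$ therefore cannot come from missing per-candidate knowledge. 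It must come from the factorisation in Eq.~\eqref{eq:auth}.

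\emph{Step 2 (decomposition).} I would insert intermediate protocols between $A^{\circ}$ and one-shot authoring and telescope $\Delta$ across them:
\[
\Delta=\underbrace{(\mathrm{F1}_{\mathrm{exec}}-\mathrm{F1}_{\mathrm{batch}})}_{\text{batching}}+\underbrace{(\mathrm{F1}_{\mathrm{batch}}-\mathrm{F1}_{\mathrm{emit}})}_{\text{emission vs.\ labelling}}+\underbrace{(\mathrm{F1}_{\mathrm{emit}}-\mathrm{F1}_{\mathrm{auth}})}_{\text{coverage and stopping}} .
\]
The protocols are as follows. In the batch protocol, all of $U$ is labelled in a single pass, for example through a JSON checkbox. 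In the emission protocol, members are listed with a programmatic union over chunks. The authoring protocol is the full one-shot generation of Eq.~\eqref{eq:auth}. The programmatic-union floor sits near zero by construction, because the union of chunk outputs over the presented list reproduces $\hat S_{\mathrm{exec}}$. Each telescoped term then corresponds to a component named in the statement, namely one-shot batch labelling, coverage, and stopping. The measured $\Delta$ is their sum, so it bundles all three.

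\emph{Step 3 (upper bound, and the main obstacle).} To call $\Delta$ an upper bound on set-construction cost, each component that is not set construction (batching, and serialisation such as off-list tokens) must contribute nonnegatively. This is the step I expect to be hardest. It is not a theorem: batching or serialisation could in principle help, making a term negative. I would handle it empirically, by citing the matched-budget results of App.~\ref{app:matched}. There, the JSON-checkbox gains are positive ($+0.09/+0.02/+0.13$) and the off-list token rate is nonzero, so the extra terms are nonnegative within their confidence intervals. Removing them would leave a smaller, purer set-construction term, which is what ``upper bound, not pure measure'' asserts. I would also state explicitly that the observation is conditional on these measured signs.
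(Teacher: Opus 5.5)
Your overall route matches the paper's. The observation is an interpretation, discovery is controlled by showing $U$ to both arms, $\Delta$ is telescoped across matched protocols, and the signs of the pieces are taken from App.~\ref{app:matched} rather than proved. The genuine problem is Step~2, where the programmatic-union floor is misplaced and justified by a claim that fails.

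In the paper the chain is A (free authoring) $\to$ B (one-shot label-all) $\to$ C (pointwise judging) $\to$ D, where D is \emph{C's own yes-set} assembled in code. D and C are literally the same set, so $\mathrm{F1}_C-\mathrm{F1}_D=0$ identically. Hence $\Delta=(\mathrm{F1}_C-\mathrm{F1}_B)+(\mathrm{F1}_B-\mathrm{F1}_A)$, batching plus emission-vs-labelling, with the floor as a third, vanishing term. You already have D: it is your knowledge-matched enumerator $A^{\circ}$ from Step~1.

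Your ``emission protocol'' is a different object. The model still \emph{lists} members within each chunk, so the union of chunk outputs is not $\hat S_{\mathrm{exec}}$ by construction; chunking shortens each enumeration but does not replace emission by judgment. Worse, if it did reproduce $\hat S_{\mathrm{exec}}$, then $\mathrm{F1}_{\mathrm{emit}}=\mathrm{F1}_{\mathrm{exec}}$. Your first two terms would then cancel, and your emission-vs-labelling term would equal minus the batching term. That term is negative whenever batching is positive, which contradicts the nonnegativity you need in Step~3 and the measured batching cost of $0.174$ and $0.144$ at 7B and 14B.

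The fix is to drop the chunked protocol. In the paper, coverage and stopping are not split off from emission-vs-labelling; they stay bundled inside $\mathrm{F1}_B-\mathrm{F1}_A$. That bundling is precisely why the observation calls $\Delta$ an upper bound rather than a pure measure.

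In Step~3, the numbers you cite (checkbox gains $+0.09/{+}0.02/{+}0.13$ and the off-list token rate) come from \S\ref{sec:emission}, not App.~\ref{app:matched}. They also bear on a different term: they show that part of $\mathrm{F1}_B-\mathrm{F1}_A$ is serialisation, so the emission term itself overstates pure set construction. Note too that the JSON checkbox there is a separate control, not the label-all condition B. The sign you actually need for batching is measured directly in Table~\ref{tab:matched}: $\mathrm{F1}_C-\mathrm{F1}_B=0.174/0.144/0.004\ge 0$ at all three scales. With these two corrections your conditional upper-bound argument goes through, and it states the sign assumption more explicitly than the paper does.
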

\noindent This is why the complete-truth constructions matter: the 15-item list is shown to both arms, so the residual $\Delta$ is not free-recall/candidate-discovery but one-shot set emission, and on the arithmetic construction execution is near-perfect so almost all of $\Delta$ is emission cost. For the \emph{open-set} constructions (code, lexical), where $U$ is not presented, a second and larger force is candidate discovery, compounded by base-rate dilution: authoring emits into a space where true members have base rate $b\ll\tfrac12$, so precision $\pi=\frac{b\,\mathrm{TPR}}{b\,\mathrm{TPR}+(1-b)\,\mathrm{FPR}}$ collapses as $b\to0$, whereas execution is scored on a curated balanced set ($b=\tfrac12$). We therefore attribute the open-set gap to discovery$+$emission jointly, and isolate the emission component only on the complete-truth constructions.

\subsection{Judge-free constructions}
\label{sec:construction}
Every construction instantiates the same two tasks over the same items, so that only the interface to the model's knowledge differs:
\textbf{Authoring (open-set)}: produce the complete acceptable set, scored by precision/recall/F1 of the normalized predicted set against the constructed oracle set; and
\textbf{Execution (closed-set)}: judge one presented candidate's membership (yes/no), posed for every oracle member (positive) and every hard negative, scored as membership F1 on the positive class plus raw accuracy.
Authoring F1 and execution membership F1 are the two comparable axes of the scissors. Construction and training details are in Appendix~\ref{app:details}, worked examples in Appendix~\ref{app:examples}, full per-model results in Appendix~\ref{app:full}, and a per-predicate breakdown in Appendix~\ref{app:predicates}.
We use three construction types whose incompleteness confounds are mutually orthogonal (the complete-truth type is instantiated by two constructions, string and arithmetic; four in total).

\textbf{(a) Algorithmic (complete truth).}
Each item presents a finite list of $15$ real English words (drawn from WordNet lemmas, length 3--11, deterministically shuffled, seed 0) and one mechanically decidable predicate: \emph{contains a double letter}, \emph{has $\geq 8$ letters}, \emph{contains ``th''}, \emph{ends in ``s''}, cycled across $n{=}240$ items, lists resampled to keep a non-degenerate accept/reject split.
The acceptable set is exactly $\{w\in\text{list}: \text{predicate}(w)\}$: computable, finite, and \emph{complete by construction}, so incompleteness cannot inflate any error.
Authoring asks the model to list every list-word satisfying the predicate; execution asks, for each list-word, whether it satisfies the predicate.
Any authoring$<$execution gap here isolates set-enumeration and stopping from world knowledge, because the predicate needs none.

\textbf{(b) Executable (execution truth).}
HumanEval+ \citep{liu2023evalplus}: authoring a test suite from a docstring, executing candidate-correctness judgment against an oracle-verified solution pool; full design and results in \S\ref{sec:code}.

\textbf{(c) Lexical (WordNet).}
Synsets with 3--8 single-word lemmas and a usage example, excluding highly polysemous targets ($>4$ senses), $n{=}240$ after deterministic sampling (seed 0).
Each item anchors a sense by gloss $g$ and example $e$; the oracle set is the synset's lemma set; hard negatives are co-hyponyms, hypernym lemmas, and antonyms.
Authoring: \emph{``List every single English word that can mean `$g$' (as in: `$e$').''}
Execution: \emph{``Can the word `$c$' mean `$g$' (as in: `$e$')? yes/no.''}
This construction is intuitive but \emph{incomplete}, the lexicon omits some real synonyms, so we treat its precision as a lower bound and lean on its incompleteness-robust recall side (\S\ref{sec:confound}).

Models: Qwen2.5-Instruct at 3B/7B/14B/72B \citep{qwen2025qwen25} (a single family, so scale is the only variable), with cross-family replication from Qwen3-14B/32B, Llama-3.2-3B/3.1-8B, Mistral-7B, and Phi-4 (algorithmic and code) and gemma-4-12B (lexical); greedy decoding throughout.

\subsection{Results: the scissors on complete truth}
\label{sec:scissors-results}

\begin{figure}[tbp]
\centering
\includegraphics[width=\textwidth]{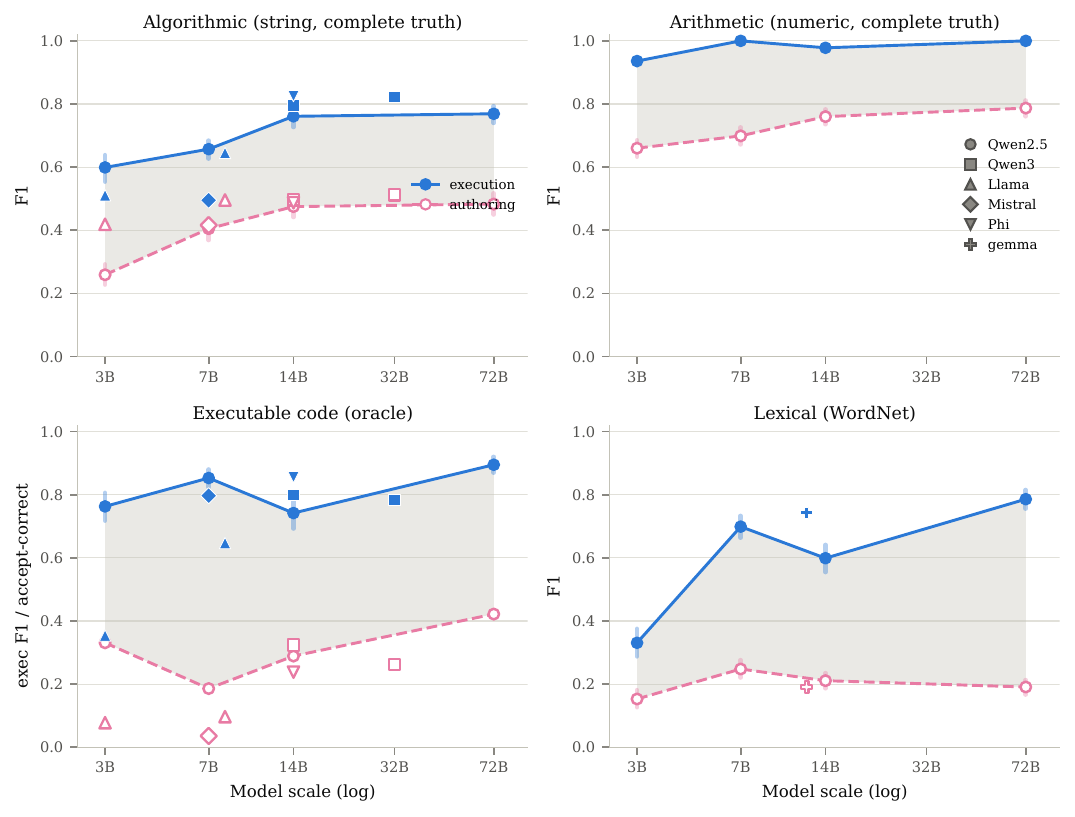}
\caption{The judging--enumerating scissors across four LLM-judge-free constructions and six model families/generations. Colour encodes interface (blue judging; magenta authoring, i.e.\ extensional enumeration on the word-list panels, test-suite authoring on the code panel, per \S\ref{sec:formal}); marker shape encodes family. Qwen2.5 is drawn as connected lines with 95\% bootstrap CI bands and the shaded gap; the other families are points at their parameter count. In every model--construction cell evaluated, judging sits above enumerating and the gap does not close over the tested scale range (coverage matrix in App.~\ref{app:full}). On executable code the separation is largest.}
\label{fig:scissors}
\end{figure}

\begin{table}[tbp]
\centering\small
\setlength{\tabcolsep}{6pt}
\begin{tabular}{llcccc}
\toprule
Family & Model & Authoring F1 [95\% CI] & Execution F1 [95\% CI] & Exec.\ acc. & Gap [95\% CI]\\
\midrule
\multirow{4}{*}{Qwen2.5}
 & 3B  & 0.259 [.23,.29] & 0.599 [.55,.64] & 0.826 & $+0.340$ [.29,.39]\\
 & 7B  & 0.405 [.37,.44] & 0.657 [.63,.68] & 0.735 & $+0.252$ [.22,.29]\\
 & 14B & 0.475 [.44,.51] & 0.761 [.73,.79] & 0.855 & $+0.286$ [.25,.32]\\
 & 72B & \best{0.483} [.45,.52] & \best{0.769} [.74,.79] & \best{0.884} & $+0.286$ [.25,.32]\\
\midrule
\multirow{2}{*}{Llama-3}
 & 3.2-3B & 0.439 [.41,.47] & 0.567 [.52,.61] & --- & $+0.128$ [.09,.17]\\
 & 3.1-8B & 0.485 [.46,.51] & 0.676 [.64,.71] & --- & $+0.190$ [.16,.22]\\
\bottomrule
\end{tabular}
\caption{Complete truth (algorithmic, $n{=}240$, greedy; bootstrap CIs). The acceptable set is computed by a mechanical predicate over a presented finite list, so no incompleteness is possible and every authoring error is genuine. Enumerating improves with scale ($0.26\!\to\!0.48$) but judging stays above it, every CI pair is disjoint, and the gap does not close (${+}0.34,{+}0.25,{+}0.29,{+}0.29$). Llama-3 reproduces the ordering at both sizes.}
\label{tab:algo}
\end{table}

Table~\ref{tab:algo} is the paper's central measurement, chosen because it is immune to the incompleteness confound (\S\ref{sec:confound}): the predicate decides membership mechanically over a list the model is shown, so a word is acceptable iff the predicate accepts it: there is no ``missing synonym'' to mislabel.
Three readings.
\emph{First}, the asymmetry is present at every scale and statistically clean: every authoring--execution CI pair is disjoint, in both families.
\emph{Second}, and correcting the impression a lexicon-only study would give, \textbf{authoring is not a flat line}: it improves substantially with scale (Qwen $0.259{\to}0.483$).
What does not happen is \emph{catching up}: the gap narrows from $0.34$ (3B) to $0.25$ (7B), then settles at $\approx0.29$ for 14B--72B rather than closing.
Authoring does improve with scale, over the small-model range even faster than execution, but the two never converge: the gap stops shrinking well above zero.
\emph{Third}, the effect is not Qwen-specific: Llama-3.2-3B and Llama-3.1-8B show the same execution$>$authoring ordering with disjoint CIs (gaps $+0.13$, $+0.19$); the gap is smaller for Llama than for Qwen because it pairs weaker execution with stronger authoring at this scale (Llama-3.2-3B authors $0.44$ vs.\ Qwen2.5-3B's $0.26$), so both differences shrink the gap.
Figure~\ref{fig:scissors} shows the same structure holding across all four constructions and all six model families/generations simultaneously.

\paragraph{A second complete-truth construction (arithmetic).}
To check that the complete-truth result is not specific to string predicates, we repeat the design with \emph{numeric} predicates (\emph{is even}, \emph{divisible by 3}, \emph{$>50$}, \emph{prime}) over presented integer lists: again mechanically decidable, again zero incompleteness, but disjoint in surface form and knowledge from the word-list construction (Table~\ref{tab:numeric}). The scissors is unchanged: execution reaches $0.94$--$1.00$ (the predicates are trivial to check pointwise) while authoring plateaus at $0.66$--$0.79$, a gap of ${+}0.28,{+}0.30,{+}0.22,{+}0.21$ with disjoint CIs. That execution is near-perfect here sharpens the reading of Observation~\ref{prop:setpenalty}: since pointwise membership is essentially solved and the candidate list is shown to both arms, the residual $\approx0.25$ gap is dominated by one-shot set emission rather than knowledge or discovery (decomposed in App.~\ref{app:matched}).

\begin{table}[htbp]
\centering\small
\setlength{\tabcolsep}{7pt}
\begin{tabular}{lccc}
\toprule
Model & Authoring F1 & Execution F1 & Gap [95\% CI]\\
\midrule
Qwen2.5-3B  & 0.660 & 0.936 & $+0.276$ [.25,.30]\\
Qwen2.5-7B  & 0.699 & 1.000 & $+0.301$ [.27,.33]\\
Qwen2.5-14B & 0.760 & 0.978 & $+0.218$ [.19,.24]\\
Qwen2.5-72B & \best{0.787} & 1.000 & $+0.213$ [.19,.24]\\
\bottomrule
\end{tabular}
\caption{A second complete-truth construction: numeric predicates over presented integer lists ($n{=}240$, 2{,}000-resample CIs). Execution is near-perfect (the predicates are pointwise-trivial), so the persistent $0.21$--$0.30$ gap is dominated by one-shot set emission (Observation~\ref{prop:setpenalty}; App.~\ref{app:matched}), independent of the algorithmic string construction.}
\label{tab:numeric}
\end{table}

\subsection{The executable construction: authored suites vs.\ oracles}
\label{sec:code}

The lexical and algorithmic constructions have sharp ground truth but a skeptic may ask whether the asymmetry survives where the community actually authors verifiers, unit tests, with correctness decided by \emph{execution}, the strongest ground truth available.
This is the executable construction of \S\ref{sec:construction}, and it produces the largest scissors in the paper.

\paragraph{Design.}
From HumanEval+ \citep{liu2023evalplus} (164 problems) we build an oracle-verified solution pool: 2{,}464 candidate solutions sampled from Qwen2.5-7B/14B ($K{=}8$, $T{=}0.8$), each graded by the full EvalPlus suite, yielding \textbf{1{,}560 oracle-correct} and 904 oracle-wrong solutions, a diverse pool of independently certified implementations.
The two arms, both scored by sandboxed execution:
\textbf{Execution}: present the docstring and one pooled solution and ask the model to judge correctness (yes/no), scored vs.\ the oracle label as membership F1;
\textbf{Authoring}, the model writes a test suite (\texttt{def check(candidate)}) from the docstring alone, and we run oracle-labeled solutions through it, reporting \emph{accept-correct} (fraction of oracle-correct solutions the suite admits), \emph{canonical acceptance} (does the suite even admit the reference solution), \emph{catch-wrong} (fraction of oracle-wrong solutions rejected; the suite must not be trivially permissive), and \emph{FRR$\mid$sane} (false-rejection rate among suites that pass the canonical check).
Both arms are run at all four Qwen scales and for Llama-3.2-3B/3.1-8B.

\begin{table}[tbp]
\centering\small
\setlength{\tabcolsep}{5pt}
\begin{tabular}{llccccc}
\toprule
Family & Model & Exec.\ F1 [95\% CI] & Accept-corr.\ & Canon.\ acc.\ [95\% CI] & Catch-wrong & FRR$\mid$sane\\
\midrule
\multicolumn{7}{l}{\emph{HumanEval+} (164 problems)}\\
\multirow{4}{*}{Qwen2.5}
 & 3B  & 0.763 [.72,.81] & 0.331 & 0.348 [.28,.42] & 0.937 & 0.043\\
 & 7B  & 0.853 [.82,.88] & 0.186 & 0.201 [.14,.27] & 0.980 & 0.051\\
 & 14B & 0.742 [.69,.79] & 0.289 & 0.268 [.20,.34] & 0.971 & 0.025\\
 & 72B & \best{0.895} [.87,.92] & \best{0.422} & \best{0.390} [.32,.46] & 0.966 & \best{0.014}\\
\midrule
\multirow{2}{*}{Llama-3}
 & 3.2-3B & 0.580 [.51,.64] & 0.209 & 0.213 [.15,.27] & 0.963 & 0.007\\
 & 3.1-8B & 0.766 [.72,.81] & 0.101 & 0.134 [.09,.19] & 0.976 & 0.000\\
\midrule
\multicolumn{7}{l}{\emph{MBPP+} (378 problems; pool built the same way)}\\
\multirow{4}{*}{Qwen2.5}
 & 3B  & 0.715 [.68,.75] & 0.357 & 0.288 [.24,.33] & 0.940 & 0.028\\
 & 7B  & 0.828 [.81,.85] & 0.211 & 0.156 [.12,.20] & 0.975 & 0.000\\
 & 14B & 0.787 [.76,.82] & 0.270 & 0.217 [.18,.26] & 0.972 & 0.026\\
 & 72B & 0.871 [.85,.89] & 0.441 & 0.339 [.29,.39] & 0.966 & 0.011\\
\bottomrule
\end{tabular}
\caption{Executable truth (HumanEval+ as a hardened executable reference, not a decision procedure for program equivalence; pool of 1{,}560 reference-passing / 904 reference-failing solutions; execution F1 with bootstrap CIs). Models that judge correctness at F1 up to $0.90$ author suites admitting only $19$--$42\%$ of correct solutions while still catching $94$--$99.8\%$ of wrong ones. Accept-correct is a recall, so it is not comparable to execution F1; Table~\ref{tab:unified} scores both arms with the same statistic. The population is bimodal: among suites that accept the reference, false rejection collapses to $\leq5\%$.}
\label{tab:code}
\end{table}

\begin{table}[tbp]
\centering\small
\setlength{\tabcolsep}{4.5pt}
\begin{tabular}{llcccccc}
\toprule
 & & \multicolumn{4}{c}{Authored suite as a classifier} & Exec.\ & Gap\\
\cmidrule(lr){3-6}\cmidrule(lr){7-7}\cmidrule(lr){8-8}
Bench & Model & Prec.\ & Rec.\ & F1 & MCC & F1 & $\Delta$F1 [95\% CI]\\
\midrule
\multirow{6}{*}{HE+}
 & Qwen 3B  & 0.900 & 0.331 & 0.484 & 0.306 & 0.763 & $+0.279$ [.18,.38]\\
 & Qwen 7B  & 0.942 & 0.186 & 0.311 & 0.243 & 0.853 & $+0.542$ [.45,.64]\\
 & Qwen 14B & 0.944 & 0.289 & 0.442 & 0.317 & 0.742 & $+0.300$ [.20,.40]\\
 & Qwen 72B & 0.955 & 0.422 & \best{0.586} & \best{0.417} & \best{0.895} & $+0.309$ [.23,.39]\\
 & Llama 3B & 0.907 & 0.209 & 0.340 & 0.236 & 0.580 & $+0.240$ [.14,.34]\\
 & Mistral 7B & 0.776 & 0.054 & 0.101 & 0.063 & 0.801 & $+0.700$ [.62,.77]\\
\midrule
\multirow{4}{*}{MBPP+}
 & Qwen 3B  & 0.913 & 0.357 & 0.514 & 0.330 & 0.715 & $+0.201$ [.14,.26]\\
 & Qwen 7B  & 0.936 & 0.211 & 0.344 & 0.254 & 0.828 & $+0.484$ [.42,.55]\\
 & Qwen 14B & 0.945 & 0.270 & 0.421 & 0.302 & 0.787 & $+0.366$ [.30,.43]\\
 & Qwen 72B & 0.958 & 0.441 & \best{0.604} & \best{0.429} & \best{0.871} & $+0.267$ [.22,.33]\\
\bottomrule
\end{tabular}
\caption{Same statistic, same pool. Both interfaces scored as binary classifiers over the identical oracle-labelled pool, so the gap is a paired difference of one quantity rather than execution F1 minus authoring recall. CIs are problem-cluster bootstraps, since solutions nest in problems. This matters asymmetrically: the execution arm contributes one independent judgment per solution, whereas the authoring arm contributes one artifact per problem whose verdicts on that problem's solutions are perfectly correlated by construction, so its effective sample size is the 164 (HumanEval+) or 378 (MBPP+) problems, not the pooled solution count. Resampling problems rather than solutions is what makes the two arms comparable; a solution-level bootstrap would understate the authoring arm's uncertainty by roughly the average cluster size. Every cell keeps a positive gap with a CI excluding zero under the clustered procedure. Authored suites are high-precision, low-recall: what they accept is almost always correct, but they admit only $4$--$44\%$ of correct solutions.}
\label{tab:unified}
\end{table}

\paragraph{Results.}
Authored suites do not err at the margin; they \emph{over-reject} (Table~\ref{tab:code}). Per the scope definition of \S\ref{sec:formal}, this arm scores the acceptance region an intensional artifact \emph{induces}, not an enumeration. The two levels then separate cleanly. At the artifact level the dominant error is over-specification: invented requirements and wrong expected outputs, with $70\%$ of suites rejecting the canonical solution by assertion (App.~\ref{app:examples}), not missing test cases. At the induced-set level the consequence is omission: correct implementations fall outside the acceptance region. The code construction therefore shares the paper's consequence, an acceptable set too small and invisibly so, while differing in mechanism, and we use it only at that consequence level: the enumeration mechanics of \S\ref{sec:whydiffer} and the detectability results of \S\ref{sec:silent} are established on the word-list arms and are not claimed for suites.
Across scale, models that judge correctness at execution F1 $0.74$--$0.90$ author suites admitting only $19$--$42\%$ of oracle-verified correct solutions while catching $94$--$99.8\%$ of wrong ones. Code authoring is non-monotone in scale (accept-correct $0.33/0.19/0.29/0.42$ at 3B/7B/14B/72B, so only 72B exceeds 3B) and stays far below judging everywhere; Llama's suites are stricter still ($8$--$10\%$).
The conditional structure is the sharpest finding: given that a suite accepts the canonical solution, its false-rejection rate on other correct implementations collapses to $\leq5\%$. The authored population is bimodal, a small sane minority and a large majority whose assertions encode incorrect expected behavior. Validating a generated verifier on known-correct solutions before trusting it, the discipline human re-annotation applies to benchmark suites \citep{chowdhury2024swebench}, is the difference between a $\leq5\%$ and a $58$--$92\%$ false-rejection instrument.

\emph{Second executable benchmark (MBPP+).}
Repeating the entire protocol on MBPP+ \citep{austin2021mbpp,liu2023evalplus} (378 problems, pool built the same way) leaves the scissors unchanged: execution F1 $0.72$--$0.87$ against accept-correct $0.21$--$0.44$, the same sub-critical scaling, the same near-zero FRR among canonical-sane suites, and the same high catch rate (Table~\ref{tab:code}, lower block).

\emph{Error-type audit.}
A dedicated one-shot 14B study (a separate, stricter prompt; canonical acceptance $20.7\%$) classifies every rejection by exception type and shows the failure is semantic, not mechanical: of 164 suites, 115 ($70\%$) run correctly and fail on assertions, meaning the suite executes but its author-computed expected outputs are wrong, while only 15 ($9\%$) crash. Among rejection events from broken suites, $87.5\%$ are assertion failures against $11.3\%$ runtime errors. Seven in ten authored verifiers are thus wrong-by-assertion: syntactically healthy artifacts confidently encoding incorrect expected behavior. This is also why catch-wrong stays high while accept-correct collapses. An over-constrained suite rejects almost everything, so its ability to reject wrong code is no evidence that it admits right code; since specificity stays $0.94$--$0.98$, the low accept-correct reflects genuine under-acceptance rather than a permissive suite inflating a balanced score.
Related SE work makes the direction plausible, finding that generated oracles encode the author's misreading of the spec \citep{oracles2024actual} and that benchmark tests unfairly reject valid solutions \citep{chowdhury2024swebench}, but it has not measured authoring against a certified-diverse correct pool with the model's own judging as the paired baseline.

\subsection{A working mitigation: gate on known-correct probes}
\label{sec:mitigation}
The repair arm (\S\ref{sec:repair}) shows the failure cannot be fixed by better authoring or self-verification.
But the bimodal structure of Table~\ref{tab:code}, in which broken suites reject the reference too while sane suites are near-perfect, implies a cheap \emph{external} gate: trust a model-authored suite only if it accepts a solution already known to be correct.
The canonical reference solution is the cheapest such probe, and the gate is one execution. We should not oversell its availability: on a benchmark a reference implementation is given, but on a real task obtaining one may be the very problem being solved, a single hand-verified input/output example is much weaker than a reference implementation, and gates correspondingly less. The honest scope is: wherever any known-correct behaviour can be exhibited, use it as a gate; that is strictly better than trusting the suite, and it is not always obtainable.
Table~\ref{tab:mitigation} and Figure~\ref{fig:mitigation} quantify the trade-off on the HumanEval+ suites (probe-strength sweep in Appendix~\ref{app:mitigation}).
Gating on the reference solution converts a 58--92\% false-rejection instrument into a $\leq5\%$ one (0\% for Llama), at the cost of \emph{yield}: only 20--39\% of Qwen suites (7--10\% of Llama's) survive the gate.
Holding out $k{=}1$--$3$ pooled correct solutions as probes instead of the canonical gives the same picture (FRR $0.01$--$0.08$).

\begin{table}[tbp]
\centering\small
\setlength{\tabcolsep}{5pt}
\begin{tabular}{llcccccc}
\toprule
Family & Model & Ungated FRR & Gated FRR & Yield & Gated catch-wrong & Gated BA & Vacuous\\
\midrule
\multirow{4}{*}{Qwen2.5}
 & 3B  & 0.669 & 0.043 & 0.348 & 0.826 & 0.892 & 0.053\\
 & 7B  & 0.814 & 0.051 & 0.201 & 0.914 & 0.932 & \best{0.000}\\
 & 14B & 0.711 & \best{0.025} & 0.268 & 0.919 & 0.947 & 0.023\\
 & 72B & 0.578 & \best{0.014} & \best{0.390} & 0.919 & \best{0.953} & 0.016\\
\midrule
\multirow{2}{*}{Llama-3}
 & 3.2-3B & 0.791 & 0.007 & 0.213 & 0.770 & 0.882 & 0.086\\
 & 3.1-8B & 0.899 & \best{0.000} & 0.134 & 0.857 & 0.929 & \best{0.000}\\
\midrule
Mistral & 7B-v0.3 & 0.946 & \best{0.000} & 0.049 & 0.542 & 0.771 & 0.250\\
\bottomrule
\end{tabular}
\caption{A working mitigation (HumanEval+, same authored suites as Table~\ref{tab:code}): discard any suite that rejects a known-correct probe. False rejection falls from $0.58$--$0.92$ to $\leq0.05$, at the cost of yield. The gate is not passed by being vacuous: survivors still reject $83$--$100\%$ of wrong solutions with $\leq5\%$ accepting everything. It is only as good as the author, though: for Mistral-7B a quarter of survivors are vacuous and gated catch-wrong falls to $0.54$. That defect is repaired rather than probed around: on an independently authored Mistral suite set whose gate leaves $43\%$ of survivors vacuous at catch-wrong $0.471$, trace repair returns catch-wrong $1.000$ with no vacuous survivors (\S\ref{sec:repair-trace}). (Llama's rows were originally far worse; that was a tokenization defect on our side, not the models'; see App.~\ref{app:bosfix}.)}
\label{tab:mitigation}
\end{table}

\begin{figure}[tbp]
\centering
\includegraphics[width=0.66\textwidth]{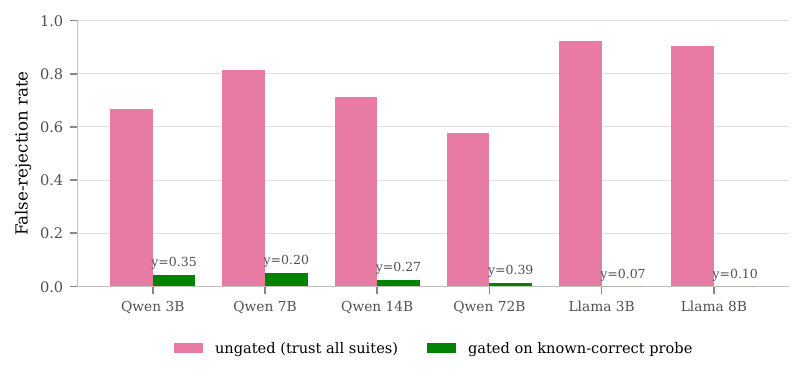}
\caption{The known-correct-probe gate (\S\ref{sec:mitigation}). Trusting all authored suites gives a $0.58$--$0.92$ false-rejection rate (magenta); discarding suites that reject the canonical reference collapses it to $\leq0.05$ (green). The annotation $y$ is yield, the fraction of authored suites that survive the gate; most do not, which is itself a restatement of the fail-closed result.}
\label{fig:mitigation}
\end{figure}

This is the actionable form of the paper's thesis: a model-authored verifier is usable only after external validation against known-correct behavior. The gate is not a fix for authoring but a filter, admitting the rare sane suite and rejecting the common broken one. Its cost is the discarded majority, and \S\ref{sec:repair-trace} shows that most of that majority need not be discarded at all.

\subsection{Repair instead of discard: execution can supply what authoring cannot}
\label{sec:repair-trace}
The gate throws away four suites in five, and our own audit says it need not: among rejection events from broken suites, $87.5\%$ are assertion failures and only $11.3\%$ runtime errors (\S\ref{sec:code}). An assertion failure means the model chose an input and then computed the wrong expected output for it, and that output is what executing a known-correct solution supplies. So rather than discard a suite that fails the gate, rewrite each wrong expectation to the value the reference returns, and re-gate. This also performs the decomposition \S\ref{sec:intensional} argues for: suite authoring needs an \emph{extensional} operation, choosing which inputs to check, and an \emph{intensional} one, computing what each should return. Repair hands the intensional half to execution, so what a repaired suite still catches measures the model's input selection with expectations held perfect.

Two rules keep the measurement honest. \textbf{No assertion is ever dropped} in the headline condition, since deletion would raise yield for free by making suites vacuous. And because repairing from the reference and then gating on it is tautological, \textbf{every reported quantity is measured on solutions the repair never saw}: false rejection on the held-out EvalPlus-correct pool, catch-wrong on the EvalPlus-wrong pool. A repair that ``succeeded'' by accepting everything would show as catch-wrong collapsing.

The discard is largely unnecessary (Table~\ref{tab:repair}). For the Qwen2.5-14B suites yield rises from $0.207$ to $0.690$, a $3.3\times$ recovery whose interval does not overlap the gate's, while false rejection ($0.010\!\to\!0.036$) and catch-wrong ($0.951\!\to\!0.946$) stay within overlapping intervals. Repair is a no-op where nothing was broken: of the $34$ suites that already passed, none regress and their metrics are unchanged to three decimals. \emph{The recovery grows as the author weakens.} Re-authoring the same $164$ specifications with three further families, the recovery factor rises monotonically as the gate's yield falls: $3.3\times$ at Qwen2.5-14B, $4.7\times$ at Llama-3.2-3B, $6.9\times$ at Llama-3.1-8B, $10.6\times$ at Mistral-7B. Every repaired population lands in the same band (FRR $0.035$--$0.059$, catch-wrong $\geq0.94$), so what differs across authors is how much there is to recover, not the quality of what is recovered. Mistral is the sharpest case and the one \S\ref{sec:mitigation} flagged as the gate's weak point: its survivors go from $43\%$ vacuous at catch-wrong $0.471$ to none vacuous at $1.000$. The known-\emph{wrong} probe we proposed for weak authors is unnecessary if their failed suites are repaired instead of filtered.

Three readings follow. \emph{(i) The extensional half is not the problem.} With expectations made perfect, the model's chosen inputs still reject $94.6\%$ of wrong solutions: it picks discriminating inputs well and computes their expected outputs badly, as \S\ref{sec:intensional} predicted. \emph{(ii) The residual failure is a different authoring error.} Every suite still failing after value repair contains an invented \emph{exception contract}, an \texttt{assert False} sentinel demanding the function raise on inputs the specification never mentions. That is a promise the spec never made rather than a miscomputed value, so substitution cannot touch it. Neutralising those sentinels deletes rather than corrects, so we report it only as a ceiling (yield $0.819$ at catch-wrong $0.942$); the neutralisation is self-limiting, since an \texttt{assert False} is reached only when the reference did \emph{not} raise. \emph{(iii) It does not rescue self-repair.} \S\ref{sec:repair} showed the model cannot fix its own suites; here the repairing agent is the \emph{interpreter}, not the model. Together the two are the paper's prescription in miniature: an authored artifact is salvageable, but only by an external oracle, never by its author.

This also bears on the word ``omission''. Our vocabulary comes from the word-list constructions, where the model genuinely cannot produce the missing member. The code failure has the same shape, an induced acceptance region that is wrong, but a different mechanism, and readers should not carry the lexical one into the code results. The scope is likewise narrow: repair requires a reference implementation, the assumption \S\ref{sec:mitigation} already flags as unavailable when obtaining one \emph{is} the task. Where it holds, discarding is the wrong default.

\emph{One caveat we can now quantify, and it cuts against us.} Both gate and repair are scored against a pool of oracle-verified solutions generated by Qwen2.5-7B/14B, so a Qwen-authored suite has been meeting code written in a style close to its own. A second pool from three non-Qwen families under an identical sampling budget ($1{,}781$ verified-correct solutions over $142$ problems) raises false rejection in every arm: gated Qwen2.5-14B $0.010\!\to\!0.064$, and repaired $0.036\!\to\!0.075$ (Qwen2.5-14B), $0.035\!\to\!0.123$ (Llama-3.1-8B), $0.059\!\to\!0.161$ (Mistral-7B). Validity is not what changed, since every solution in either pool passes the full plus-suite; the rise measures how much of the low figure came from stylistic proximity. The gate's headline should therefore read ``${\leq}0.05$ against implementations resembling the pool's authors, rising to ${\approx}0.06$--$0.16$ against more distant ones''. The qualitative conclusion is unchanged, since the worst cross-family figure is far below the $0.58$--$0.92$ of trusting suites ungated, and it sharpens what the residual over-rejection is: authored suites encode incidental properties of one implementation.

\begin{table}[tbp]
\centering\small
\setlength{\tabcolsep}{5pt}
\begin{tabular}{llccc}
\toprule
Construction & Direction & $D$ & $F$ & $d=D-F$ [95\% CI]\\
\midrule
\multirow{2}{*}{Algorithmic ($n{=}114$)}
 & over-inclusion & 0.943 & 0.640 & \best{$+0.303$} [$+.215,+.395$]\\
 & omission       & 0.636 & 0.544 & $+0.092$ [$-.013,+.197$]\\
\midrule
\multirow{2}{*}{Lexical ($n{=}240$)}
 & over-inclusion & 0.858 & 0.654 & \best{$+0.204$} [$+.133,+.273$]\\
 & omission       & 0.569 & 0.842 & $\mathbf{-0.273}$ [$-.337,-.208$]\\
\bottomrule
\end{tabular}
\caption{Repair recovers most of what the gate discards, and recovers more the worse the author is (HumanEval+, $n{=}164$ suites per author, problem-cluster bootstrap; the solution pool is held fixed so the author is the only variable). Trace repair rewrites each wrong expected value to what the reference solution actually returns. Every repaired population lands in a narrow band (FRR $0.035$--$0.059$, catch-wrong $\geq0.94$) while the gate alone keeps between $4\%$ and $21\%$ of suites, so the recovery factor is largest exactly where the gate is least usable. FRR is measured on held-out EvalPlus-correct solutions and catch-wrong on EvalPlus-wrong ones, never on the reference the repair read, so the numbers are not tautological; a repair that ``worked'' by accepting everything would show as catch-wrong collapsing. For Qwen2.5-14B, additionally neutralising the $63$ invented \texttt{assert False} contracts raises yield to $0.819$ at catch-wrong $0.942$; because that deletes rather than corrects an assertion it is a ceiling, not the method.}
\label{tab:repair}
\end{table}

\subsection{Frontier: newer generations and closed models}
\label{sec:frontier}
A natural rebuttal is that a stronger or newer model closes the gap, so it is a temporary artifact of mid-tier open models.
We test this two ways: a full \emph{generation} step on open weights (Qwen3 vs.\ Qwen2.5 at matched 14B, plus Qwen3-32B), and two \emph{closed frontier} models, OpenAI GPT-5.1 and Anthropic Claude Opus 4.8, run via API with reasoning disabled to match the one-shot protocol (Table~\ref{tab:closedfrontier}).
Table~\ref{tab:frontier} and Figure~\ref{fig:frontier} show the gap is invariant to the generation.
At matched 14B scale, Qwen3 improves both interfaces over Qwen2.5 (algorithmic execution $0.761{\to}0.795$, authoring $0.475{\to}0.497$; code execution $0.742{\to}0.800$, accept-correct $0.289{\to}0.324$) but the authoring--execution gap does not shrink, if anything it widens slightly ($+0.286{\to}+0.298$ algorithmic).
All results here are one-shot and greedy, with reasoning disabled where it is optional, so they characterize a \emph{specific protocol}: parameter count is varied, inference compute is held near-constant (the compute axis is varied separately in App.~\ref{app:matched}, where $16\times$ sampling raises union recall but not past execution), and we do not claim the gap would survive arbitrary decoding, reasoning budgets, or future training methods. Within that protocol, no open-weight axis we vary, whether scale (\S\ref{sec:scissors-results}), family (Llama, gemma), or generation (Qwen3), makes authoring catch execution; at the closed frontier the strongest models \emph{narrow} the gap on the simplest constructions (Opus algorithmic, GPT-5.1 numeric) but neither closes both, and both retain a large gap on code (Table~\ref{tab:closedfrontier}).

\begin{table}[tbp]
\centering\small
\setlength{\tabcolsep}{6pt}
\begin{tabular}{llcccc}
\toprule
& & Qwen2.5-14B & \textbf{Qwen3-14B} & Qwen2.5-72B & \textbf{Qwen3-32B}\\
\midrule
\multirow{3}{*}{Algorithmic}
 & Authoring F1   & 0.475 & 0.497 & 0.483 & 0.512\\
 & Execution F1   & 0.761 & 0.795 & 0.769 & 0.822\\
 & Gap            & $+0.286$ & $+0.298$ & $+0.286$ & $+0.309$\\
\midrule
\multirow{2}{*}{Code}
 & Execution F1     & 0.742 & 0.800 & 0.895 & 0.783\\
 & Accept-correct   & 0.289 & 0.324 & 0.422 & 0.262\\
\bottomrule
\end{tabular}
\caption{Frontier / cross-generation control. A full generation newer (Qwen3 vs.\ Qwen2.5), at matched 14B scale, lifts both interfaces but does not close the authoring--execution gap (it widens slightly on the complete-truth construction). Over the one-shot protocol tested, the gap does not shrink with scale, family, or generation (we do not claim invariance beyond that protocol; cf.\ the reasoning condition below).}
\label{tab:frontier}
\end{table}

\begin{figure}[tbp]
\centering
\includegraphics[width=0.82\textwidth]{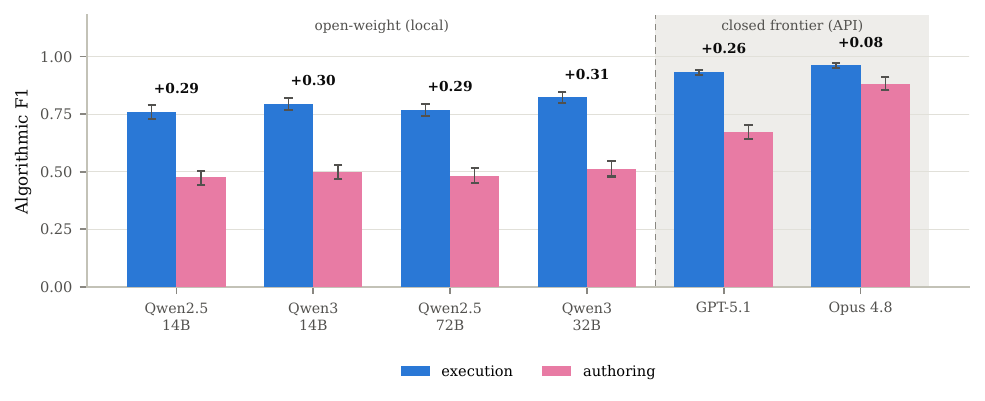}
\caption{The gap survives a model generation and the jump to the closed frontier. Left of the divider (open weights, local): Qwen3 raises both interfaces over Qwen2.5 at matched scale without shrinking the gap (bold numbers). Right (closed frontier, API, reasoning disabled to match the one-shot protocol): both models push judging to $0.93$--$0.96$ while enumerating lags, by $+0.26$ for GPT-5.1 and $+0.08$ for Opus, which nearly closes this construction but retains $+0.14$ on numeric and a large gap on code.}
\label{fig:frontier}
\end{figure}

\begin{table}[tbp]
\centering\small
\setlength{\tabcolsep}{9pt}
\begin{tabular}{llcc}
\toprule
Construction & metric & GPT-5.1 & Claude Opus 4.8\\
\midrule
\multirow{3}{*}{Algorithmic} & Authoring F1 & 0.673 & 0.881\\
 & Execution F1 & 0.932 & 0.961\\
 & Gap & $+0.259$ & $+0.080$\\
\midrule
Numeric & Gap & $+0.041$ & $+0.140$\\
\midrule
\multirow{2}{*}{Code (HE+)} & Execution F1 & 0.872 & 0.933\\
 & Accept-correct & 0.361 & 0.441\\
\midrule
\multicolumn{4}{l}{\emph{Reasoning \textbf{enabled}} (off-protocol; \texttt{effort=low}, $n{=}120$, algorithmic)}\\
\multirow{3}{*}{Algorithmic} & Authoring F1 & 0.976 & 0.964\\
 & Execution F1 & 0.984 & 0.999\\
 & Gap & $\best{+0.008}$ [$-$.01,.02] & $+0.035$ [.02,.05]\\
\bottomrule
\end{tabular}
\caption{Closed frontier (GPT-5.1, Claude Opus 4.8; $n{=}240$ per complete-truth construction, HumanEval+ pool for code). Frontier capability narrows the gap on the simplest constructions without abolishing it: each strongest model nearly closes one of them, neither closes both, and both retain a large gap on code. With reasoning enabled (bottom block) the algorithmic gap closes for GPT-5.1 (CI covers zero) and nearly so for Opus; \S\ref{sec:frontier} discusses why, and Table~\ref{tab:reasoncode} shows the same intervention on test suites. Gemini 3.x is omitted: its API mandates reasoning, so it cannot run the matched reasoning-off protocol.}
\label{tab:closedfrontier}
\end{table}

\begin{table}[tbp]
\centering\small
\setlength{\tabcolsep}{6pt}
\begin{tabular}{llcccccc}
\toprule
 & & \multicolumn{4}{c}{Authored suite as a classifier} & Exec.\ & Gap\\
\cmidrule(lr){3-6}\cmidrule(lr){7-7}\cmidrule(lr){8-8}
Model & Reasoning & Prec.\ & Rec.\ & F1 [95\% CI] & MCC & F1 & $\Delta$F1 [95\% CI]\\
\midrule
\multirow{2}{*}{GPT-5.1} & off & \best{1.000} & 0.387 & 0.558 [.42,.67] & 0.420 & 0.867 & $+0.309$ [.20,.44]\\
 & on & 0.974 & 0.316 & 0.477 [.34,.60] & 0.343 & 0.918 & $+0.441$ [.32,.58]\\
\midrule
\multirow{2}{*}{Opus 4.8} & off & 0.963 & 0.490 & 0.650 [.54,.75] & 0.454 & 0.924 & $+0.274$ [.18,.39]\\
 & on & 0.957 & \best{0.787} & \best{0.864} [.79,.92] & \best{0.683} & \best{0.960} & \best{$+0.096$} [.04,.17]\\
\bottomrule
\end{tabular}
\caption{Does test-time reasoning repair \emph{test-suite} authoring? Matched $2\times2$ on one fixed 80-problem HumanEval+ subsample, \texttt{effort=low}; both arms scored as classifiers on the same pool, CIs are problem-cluster bootstraps. It substantially helps one model and closes nothing. Opus's reduction is clean, its two $\Delta$F1 intervals being disjoint, yet the residual still excludes zero. GPT-5.1's point estimate moves the wrong way while recall and canonical acceptance fall, but its intervals overlap, so we claim only no improvement. Contrast Table~\ref{tab:closedfrontier}, where reasoning closes the algorithmic gap outright.}
\label{tab:reasoncode}
\end{table}

At the closed frontier (Table~\ref{tab:closedfrontier}), the scissors narrows but does not disappear. GPT-5.1 and Opus~4.8 execute the specification at F1 $0.93$--$0.96$ (algorithmic) and $0.87$--$0.93$ (code); their authoring improves markedly over open weights, and each strongest model nearly closes one simple construction: Opus on algorithmic ($0.881$ authoring, $+0.08$ gap) and GPT-5.1 on numeric ($+0.04$), but neither closes both (GPT-5.1 stays $+0.26$ on string predicates, Opus $+0.14$ on numeric), and both admit under half of oracle-correct code ($36$--$44\%$). Frontier capability therefore relocates the gap to the harder authoring domain rather than eliminating it; on open weights it persists at every scale.

\paragraph{Test-time reasoning closes it, where a rule can be executed.}
We then ran the strongest form of the ``a better model fixes it'' rebuttal that we had earlier only flagged: the same two models with \emph{reasoning enabled} (\texttt{effort=low}, $n{=}120$, algorithmic construction; Table~\ref{tab:closedfrontier}, bottom). It does fix it here. GPT-5.1's authoring rises $0.673\!\to\!0.976$ and its gap falls to $+0.008$ with a bootstrap CI covering zero; Opus reaches $0.964$ authoring and a $+0.035$ gap. We report this as a \emph{negative result for the universality of our headline}: on a construction whose acceptable set is generated by a mechanically executable rule, one-shot greedy decoding is not the only regime that matters, and test-time reasoning removes the deficit.
The mechanism is the one \S\ref{sec:intensional} identifies, which is why we find the result coherent rather than contradictory: a scratchpad lets the model \emph{apply the rule candidate-by-candidate}, internally reconstructing the intensional-then-execute route instead of emitting the set in a single pass. Reasoning and predicate-emission are two ways to buy the same escape.
\paragraph{But on test suites the escape is unreliable.}
We then ran the same condition where no compact rule generates the acceptable set: authoring a \emph{test suite}, in a matched $2\times2$ on a fixed 80-problem HumanEval+ subsample (Table~\ref{tab:reasoncode}). Reasoning does not transfer cleanly, and scoring both arms with the same statistic (as in Table~\ref{tab:unified}) makes the result precise. For \textbf{Opus} the improvement is large and statistically clean: authored-suite F1 $0.650\!\to\!0.864$ and $\Delta$F1 $+0.274\!\to\!+0.096$, with \emph{disjoint} problem-cluster intervals ($[.18,.39]$ vs.\ $[.04,.17]$), but the residual gap still excludes zero, so reasoning \emph{shrinks} this construction's gap without closing it. For \textbf{GPT-5.1} there is no improvement: F1 $0.558\!\to\!0.477$ and $\Delta$F1 $+0.309\!\to\!+0.441$, with recall $0.387\!\to\!0.316$ and canonical acceptance falling, consistent with a scratchpad making it \emph{over-specify} more, the failure of \S\ref{sec:code} amplified rather than repaired, though its two intervals overlap, so we claim only that reasoning does not help it, not that it significantly hurts. Precision stays $0.96$--$1.00$ and execution improves for both ($0.867\!\to\!0.918$, $0.924\!\to\!0.960$), so no suite population is becoming vacuous.
The pattern across the two constructions is therefore consistent with the mechanism and not with a blanket ``reasoning fixes authoring'': where the acceptable set is the extension of a rule the model can execute, a scratchpad recovers it (algorithmic, both models); where the set can only be approximated by enumerated cases, reasoning shrinks the gap for one frontier model without closing it and does not help the other. One boundary remains untested: WordNet, where no rule exists at all. \emph{Second}, it does not touch the downstream result: the RLVR keys of \S\ref{sec:propagation} were authored one-shot, which is how such keys are produced at scale, and the mitigation of \S\ref{sec:mitigation} is what one needs when they are. The honest summary is that the deficit is \textbf{real, mechanistically identified, and escapable}, by emitting rules or by paying test-time reasoning where a rule exists, and that it bites exactly when a pipeline does neither.

\subsection{The lexical construction and its confound}
\label{sec:confound}
The lexical (WordNet) construction is the most intuitive but the least trustworthy, and treating it carefully is what the complete-truth construction lets us do.
Its results (Table~\ref{tab:scissors}) reproduce the scissors, but with one number that must not be over-read.

\begin{table}[tbp]
\centering\small
\setlength{\tabcolsep}{5.5pt}
\begin{tabular}{lcccccc}
\toprule
 & \multicolumn{3}{c}{Authoring (open-set)} & Execution & \\
\cmidrule(lr){2-4}\cmidrule(lr){5-5}
Model & $P$ (lower bd.) & $R$ & F1 [95\% CI] & memb.\ F1 [95\% CI] & Gap [95\% CI]\\
\midrule
Qwen2.5-3B  & 0.153 & 0.233 & 0.153 [.13,.18] & 0.331 [.29,.38] & $+0.178$ [.13,.23]\\
Qwen2.5-7B  & 0.223 & 0.364 & \best{0.248} [.22,.28] & 0.699 [.66,.73] & $+0.451$ [.41,.49]\\
Qwen2.5-14B & 0.153 & 0.448 & 0.211 [.19,.24] & 0.599 [.56,.64] & $+0.388$ [.34,.43]\\
Qwen2.5-72B & 0.129 & \best{0.581} & 0.191 [.17,.21] & \best{0.786} [.76,.82] & $\best{+0.596}$ [.56,.63]\\
\midrule
gemma-4-12B & 0.132 & 0.518 & 0.191 [.17,.21] & 0.743 [.71,.78] & $+0.552$ [.52,.59]\\
\bottomrule
\end{tabular}
\caption{The scissors on \textbf{lexical} truth (WordNet, $n{=}240$, greedy; 2{,}000-resample bootstrap 95\% CIs). Reported for triangulation, with its confound made explicit: authoring \emph{precision} is a lower bound, because a valid synonym the lexicon lacks is scored as a false positive. The incompleteness-robust signal is \emph{recall} (a missing lemma cannot inflate it), which rises with scale ($0.233{\to}0.581$), so knowledge is increasingly present while F1 stays low. gemma-4-12B reproduces the gap.}
\label{tab:scissors}
\end{table}

The suspicious number is authoring precision, which \emph{falls} with scale ($0.153{\to}0.129$).
Read naively this says larger models emit more indiscriminate boundaries; but WordNet is incomplete, so a large model that correctly volunteers a real synonym the lexicon happens to lack is charged a false positive.
Precision here is therefore a \emph{lower bound}, and its apparent decline is what one expects if a stronger model proposes more genuine-but-lexicon-absent words.
We consequently do not rest any claim on WordNet precision.
The two WordNet signals we do use are both incompleteness-robust: recall (a missing lemma can only \emph{lower} it, so the observed rise $0.233{\to}0.581$ is a valid lower bound on how much the knowledge is present) and authoring recall itself. We must \emph{retract} a symmetry argument that would be convenient here: a missing lemma does not penalize the two interfaces equally. If the model volunteers a genuine-but-absent synonym while authoring, it is scored a false positive; but that same word is rarely \emph{queried} in the execution arm, whose positives are drawn from the synset itself, so execution pays no matching penalty unless the word happens to appear among the hard negatives. The lexical F1 gap can therefore be inflated by incompleteness, and we do not treat it as an unbiased comparison; we report the lexical arm as descriptive and let the complete-truth constructions carry the comparative claim.
Crucially, the complete-truth construction (Table~\ref{tab:algo}), where no synonym can be missing, shows the same gap, so the phenomenon survives removing the confound entirely, and WordNet's role is corroboration, not proof.
This is also why the ``authoring is flat'' reading of a WordNet-only study is an artifact: WordNet F1 looks flat because its confounded precision drags the rising recall back down; on complete truth, where precision is exact, authoring visibly scales.

\subsection{Prompt sensitivity: the gap narrows but does not close}
\label{sec:ablation}
A natural objection is that authoring merely needs a better prompt. We test this directly on the 14B model with four independently phrased authoring prompts per domain (the default plus three rewrites; Appendix~\ref{app:prompts}), holding the execution arm fixed (Table~\ref{tab:ablation}). Prompting does move authoring quality: code accept-correct ranges $0.29$--$0.51$ and algorithmic authoring F1 $0.46$--$0.57$ across prompts, so the effect is not a single unlucky wording. But no prompt closes the gap: the best code prompt still admits only $50.5\%$ of correct solutions against execution F1 $0.742$, and the best algorithmic prompt reaches authoring F1 $0.572$ against execution $0.761$. The scissors is prompt-sensitive in magnitude and prompt-invariant in direction; a better prompt buys a few points of authoring, not the boundary.

\begin{table}[htbp]
\centering\small
\setlength{\tabcolsep}{6pt}
\begin{tabular}{lccccc}
\toprule
 & default & rewrite 1 & rewrite 2 & rewrite 3 & execution (fixed)\\
\midrule
Algorithmic authoring F1 & 0.475 & 0.572 & 0.571 & 0.460 & 0.761\\
\quad gap vs.\ execution & $+0.286$ & $+0.188$ & $+0.190$ & $+0.301$ & ---\\
Code accept-correct & 0.289 & 0.376 & \best{0.505} & 0.387 & (F1 0.742)\\
\quad canonical acceptance & 0.268 & 0.390 & \best{0.470} & 0.384 & ---\\
\bottomrule
\end{tabular}
\caption{Prompt-sensitivity ablation (Qwen2.5-14B, four authoring prompts per domain, execution arm held fixed). Prompting shifts authoring quality but does not close the gap in any of the four phrasings: even the best prompt leaves authoring far below execution in both domains. The asymmetry is not a prompt artifact.}
\label{tab:ablation}
\end{table}

\subsection{Emission-format controls: how much is serialization?}
\label{sec:emission}
A fair objection to the authoring arm is that free-form listing bundles set construction with \emph{serialization}: the model must copy tokens, avoid duplicates, and decide when to stop. We hold the task, items, and gold sets fixed and vary only the emission format (Table~\ref{tab:emission}). \textbf{L} is the paper's free-list prompt; \textbf{J} replaces it with a JSON checkbox over every presented word, which removes both the copying-of-a-subset problem and the stopping decision, since the schema fixes the output's length and content; \textbf{S} re-runs L with the presented list permuted.

Three findings, one of which concedes part of the objection. \emph{(i) Format matters, and we were previously overcharging ``emission''.} Moving to the checkbox schema raises authoring F1 at every scale ($+0.09$/$+0.02$/$+0.13$ at 3B/7B/14B), so a real serialization component exists and the App.~\ref{app:matched} emission term should be read as set-construction \emph{plus} residual formatting. \emph{(ii) But it does not close the gap.} Even in the schema that eliminates copying and stopping, authoring stays far below the same model's pointwise execution ($0.300$ vs.\ $0.599$; $0.364$ vs.\ $0.657$; $0.528$ vs.\ $0.761$), a remaining $+0.23$ to $+0.30$. \emph{(iii) It is not an ordering artifact:} permuting the list moves F1 by $-0.02$ on average, within the bootstrap intervals.
The failure decomposition explains where the small-model deficit actually lives: at 3B, $42\%$ of emitted tokens are words \emph{not in the presented list} (hallucinated copies), against $5\%$ at 7B and $1\%$ at 14B. So a large part of the 3B gap is a copying failure, while the 14B gap, where copying is essentially clean and the schema still leaves $+0.23$, is set construction proper. The claim is therefore: the gap is partly serialization at small scale and almost entirely construction at large scale.

\begin{table}[tbp]
\centering\small
\setlength{\tabcolsep}{6pt}
\begin{tabular}{lcccc}
\toprule
Model & \shortstack{L free list\\(paper's format)} & \shortstack{J JSON checkbox\\(no copy/stop)} & \shortstack{S shuffled\\order} & \shortstack{Execution\\(pointwise)}\\
\midrule
Qwen2.5-3B  & 0.207 [.18,.24] & 0.300 [.26,.34] & 0.210 & \best{0.599}\\
Qwen2.5-7B  & 0.343 [.31,.38] & 0.364 [.33,.40] & 0.313 & \best{0.657}\\
Qwen2.5-14B & 0.396 [.36,.43] & 0.528 [.50,.56] & 0.373 & \best{0.761}\\
\bottomrule
\end{tabular}
\caption{Emission-format controls (algorithmic, $n{=}240$, 3B--14B; identical items, gold sets, and set-F1; item bootstraps). J replaces the free list with a JSON checkbox over every presented word, removing token copying and the stopping decision. It helps at every scale, so part of what App.~\ref{app:matched} charges to emission is serialisation, but it leaves a $+0.23$ to $+0.30$ deficit against the same model's judging, and shuffling the presentation order changes little. Caveats: L differs from Table~\ref{tab:algo} in token budget as well as format, so J-vs-L is the clean contrast; and J's parse success is $1.000/0.879/1.000$, so the 7B gain is partly schema non-compliance rather than set construction.}
\label{tab:emission}
\end{table}

\subsection{Intensional control: the deficit is enumeration, not specification}
\label{sec:intensional}
A central question of construct validity: does our authoring arm measure the inability to \emph{specify} a correctness boundary, or only the inability to \emph{enumerate} the set it induces? The two are separable, and separating them changes what the paper claims.
We therefore add a third interface on the same algorithmic items, gold sets, and scoring. Instead of listing the acceptable words (\emph{extensional} authoring), the model writes the specification as a Python predicate, \texttt{def accepts(w)}, which we then execute mechanically over the presented list (\emph{intensional} authoring). Nothing else changes.
The result is unambiguous (Table~\ref{tab:intensional}): intensional authoring is \textbf{near-perfect and scale-insensitive} ($\mathrm{F1}=0.999/0.978/0.999$ at 3B/7B/14B, $100\%$ of predicates executable, no runtime failures), \emph{far above} the extensional authoring of the identical specification ($0.259/0.405/0.475$) and above even pointwise execution ($0.599/0.657/0.761$).
Three consequences, stated carefully.
\emph{(i)} The gap our other constructions measure is an \textbf{enumeration/emission deficit, not a knowledge or specification deficit}: the same 3B model that emits only $26\%$ of the acceptable set can state the rule that generates it essentially perfectly. This is the mechanism behind the matched-budget decomposition (App.~\ref{app:matched}), where the residual at 72B is almost entirely the emission term.
\emph{(ii)} It bounds our claim. We do not show that LLMs cannot author correctness specifications in general; we show they fail to make an authored artifact's induced acceptance region match the intended one: extensionally for answer keys and enumerated rubrics, and intensionally for the test suites that actually require them. Where a compact intensional form exists \emph{and the model is allowed to emit it}, authoring is not the bottleneck.
\emph{(iii)} It makes an existing technique load-bearing for verifier construction. Emitting a program and delegating execution is the established idea of program-aided prompting \citep{gao2023pal,chen2023pot}; our contribution is not that technique but the \emph{measurement} that shows it is not an optimization here but the difference between $0.26$ and $0.99$, on the specific artifact, the acceptable set, that keys, rubrics, and suites consist of. The prescription (\S\ref{sec:implications}) is accordingly: never ask a model to enumerate an acceptable set it could instead \emph{characterize}. The catch is that this escape is unavailable exactly where verification is hardest: no predicate enumerates the synonyms of a gloss, and a test suite, though intensional in form, is not a restatement of any rule it was given.
\emph{Where a test suite sits on this axis.} Suite authoring is \emph{constrained} intensional authoring: it requires two operations at once, choosing which inputs to check, an extensional selection over the input space, and computing the expected output for each, which is the intensional part. Either can fail, and our data say which does. Of 164 one-shot 14B suites, $115$ ($70\%$) execute correctly and reject the canonical solution on an assertion, while only $15$ ($9\%$) crash; among rejection events from broken suites, $87.5\%$ are assertion failures against $11.3\%$ runtime errors (\S\ref{sec:code}). The dominant failure is therefore the intensional component, computing what the program should return, not the extensional one. This is also why the near-perfect predicate result and the poor suite result are consistent rather than contradictory: in \S\ref{sec:intensional} the rule to be expressed was supplied in the prompt and the model only had to restate it, whereas a suite author must \emph{derive} the expected behaviour from prose and examples. It is the derivation, not the act of writing a predicate, that these models fail at.

\begin{table}[tbp]
\centering\small
\setlength{\tabcolsep}{7pt}
\begin{tabular}{lccc}
\toprule
Model & \shortstack{Intensional authoring\\(write predicate) [95\% CI]} & \shortstack{Extensional authoring\\(enumerate set)} & \shortstack{Execution\\(pointwise)}\\
\midrule
Qwen2.5-3B  & \best{0.999} [.998,1.00] & 0.259 & 0.599\\
Qwen2.5-7B  & 0.978 [.971,.984] & 0.405 & 0.657\\
Qwen2.5-14B & \best{0.999} [.998,1.00] & 0.475 & 0.761\\
\bottomrule
\end{tabular}
\caption{The deficit is enumeration, not specification (algorithmic, identical items, gold sets, and set-F1; $n{=}240$; 3B--14B, where the effect is already saturated). Asked to \emph{write} the predicate rather than list its extension, every model is near-perfect and every generated predicate executes, above both its own extensional authoring and its own pointwise judging. This bounds the paper's claim to exhaustive acceptable-set construction and motivates the prescription of \S\ref{sec:implications}.}
\label{tab:intensional}
\end{table}

\subsection{Alignment control: the deficit predates instruction tuning}
\label{sec:base}
Every model measured so far is instruction-tuned, which admits an alternative reading of the whole paper: post-training rewards short, decisive answers, so what we call an inability to \emph{materialise} the acceptable set might be a learned disposition to \emph{stop early}. That reading would relocate the finding from a representational limit to an alignment artifact, and would change the remedy from externalised verification to a different fine-tuning recipe. It is testable, because a base checkpoint has never been under that pressure.
We therefore run the algorithmic construction on Qwen2.5-7B and Qwen2.5-7B-Instruct at one scale. A base model has no chat template, so both arms receive the \emph{identical} 3-shot plain-text prompt, the controlled comparison, and the instruct model is additionally run through its own template to show what the template is worth. Membership is computed by the predicate throughout.
The alternative reading does not survive (Table~\ref{tab:base}). Under the matched prompt the base and instruct checkpoints author the set \emph{equally well} ($0.384$ vs.\ $0.382$), and both retain a judging advantage ($+0.14$ and $+0.12$). Alignment is not the source of the deficit.
The mechanism evidence is sharper than the headline. If post-training had taught brevity, the never-aligned checkpoint should emit more of the set; instead it emits more than the set, $8.50$ words against a gold mean of $4.89$, a $74\%$ over-emission, and converts none of that surplus into accuracy. The base model is not withholding candidates; it is naming the wrong ones. Nor does the alignment stack hurt: the instruct model under its own chat template is the \emph{best} authoring arm of the three ($0.417$) and has the \emph{smallest} gap, the opposite of what the artifact hypothesis predicts.
Two caveats. First, the plain-text harness is weaker than the paper's main prompt for both arms (execution here is $0.50$--$0.52$ against $0.657$ for the same model in Table~\ref{tab:emission}), so the absolute gaps in this table are compressed and should not be compared to the headline; the load-bearing quantity is the base-vs-instruct contrast at a matched harness. Second, the base checkpoint frequently continues the few-shot pattern, fabricating further \texttt{List:}/\texttt{Rule:} blocks after its answer; we cut each completion at the first such marker, which is what keeps the fabricated blocks from inflating its score. Scoring only the first output line instead, which is correct for the base model but unfair to a chat model that opens with a preamble, leaves the base-vs-instruct comparison unchanged ($0.384$ vs.\ $0.374$) and depresses only the chat-template arm, an artifact of the parser rather than of the model. Both parsers are reported in the released receipt.

\begin{table}[tbp]
\centering\small
\setlength{\tabcolsep}{6pt}
\begin{tabular}{lccccc}
\toprule
Checkpoint & Prompt & \shortstack{Authoring\\F1} & \shortstack{Execution\\F1} & Gap & \shortstack{Mean emitted\\set size}\\
\midrule
Qwen2.5-7B (base)     & plain 3-shot   & 0.384 & \best{0.521} & $+0.138$ & 8.50\\
Qwen2.5-7B-Instruct   & plain 3-shot   & 0.382 & \best{0.499} & $+0.117$ & 6.78\\
Qwen2.5-7B-Instruct   & chat template  & 0.417 & \best{0.516} & $+0.100$ & 5.49\\
\bottomrule
\end{tabular}
\caption{The authoring deficit is not an instruction-tuning artifact (algorithmic construction, $n{=}240$, identical items, gold sets, and set-F1; gold sets average $4.89$ words). Rows 1--2 are the controlled comparison: one prompt, two checkpoints of the same family and scale, one of which has never been post-trained. They author the acceptable set equally well and both judge better than they author. The brevity hypothesis predicts the base model would emit more of the set; it emits $74\%$ \emph{more than the set} and is no more accurate, so its errors are commission rather than truncation. Absolute values are compressed relative to Table~\ref{tab:emission} because the plain-text harness required to make a base model comparable is weaker for both arms; the contrast, not the magnitude, is the result.}
\label{tab:base}
\end{table}

\subsection{What does not explain it}
Knowledge coverage cannot explain the gap at fixed scale, since both interfaces draw on the same weights over the same items, and enumerating capability itself rises with scale while judging stays ahead. Prompt format is bounded by the algorithmic construction, whose predicate is simple enough that no knowledge or format gap remains to blame.
\emph{Candidate access and compute.} The complete-truth constructions present the candidate list to both arms, so the classical free-recall-versus-recognition asymmetry is controlled by design. With candidates given and the assembly floor at zero, App.~\ref{app:matched} still finds a $0.24$--$0.26$ gap at every scale, and the decomposition is scale-revealing: the batching term vanishes by 72B while the emission term grows to $0.231$. Because the residual grows with scale it is not the length-driven enumeration failure of \citet{vinyals2016order,welleck2020consistency,hou2025seqenum}. Sampling budget does not explain it either, as $16\times$ authoring samples raise union recall only to ${\approx}0.80$; \emph{reasoning} budget partly does (\S\ref{sec:frontier}), so this claim is scoped to sampling at fixed decoding rather than to test-time compute in general. Part of the emission term is serialisation rather than set construction (\S\ref{sec:emission}).
Finally, enumeration-without-lookahead and retrieval limits are separated by the repair arm (\S\ref{sec:repair}), which finds both bind: sampled unions reach at most $55\%$ of the oracle set, and membership adjudication fails to restore precision on self-generated candidates.
\emph{Alignment} is ruled out at one scale by \S\ref{sec:base}: a never-post-trained base checkpoint authors no better than its instruct sibling under a matched prompt, and over-emits rather than truncates, so the deficit is not a learned disposition toward brevity.

\section{Why Local Review Is Asymmetric: Witness-Poor Omissions and Failed Self-Repair}
\label{sec:silent}

An authored specification can be wrong in two directions: it can include an invalid member or omit a valid one. The two are not symmetric to any reviewer, human or model, because review interrogates tokens that exist. An included error is present and can be challenged; an omitted member is an absence, and recovering it requires re-enumerating the open set, the operation \S\ref{sec:scissors} shows models perform poorly. The asymmetry is a reduction, not an accident (Figure~\ref{fig:silent}).

\begin{observation}[Omission is witness-blind to local review]
\label{prop:omission}
Grant a local reviewer even a \emph{perfect} membership oracle $m(x)=\mathbb{1}[x\in S^{*}]$, queryable only on candidates it can \emph{name}, with a query budget smaller than $|V\setminus\hat S|$ and no cardinality or coverage certificate for $S^{*}$. Then (i) every over-inclusion $x\in\hat S\setminus S^{*}$ is a named token, exposed by one query $m(x)=0$; but (ii) there exist two oracle sets $S_1^{*},S_2^{*}$ with $S_1^{*}\cap\hat S=S_2^{*}\cap\hat S$ and $S_1^{*}\setminus\hat S\neq S_2^{*}\setminus\hat S$ on which, unless the reviewer happens to name an element of $(S_1^{*}\triangle S_2^{*})\setminus\hat S$, its observations coincide, so it cannot certify recall completeness. Locating a specific omission requires producing a witness $x\in S^{*}\setminus\hat S$, an out-of-set search that is itself an instance of authoring. Stated with a perfect oracle the result is strongest: even an ideal judge cannot help, and the model's approximate judge only makes over-inclusion detection imperfect too, never improving the omission side.
\end{observation}

\noindent Three boundaries sharpen the claim rather than weaken it. First, Observation~\ref{prop:omission} is an information-theoretic indistinguishability statement, not an LLM-specific mechanism: no procedure, human or machine, can certify recall of an unknown set under a sub-universe query budget without a coverage certificate. It therefore licenses ``omission cannot be \emph{certified} absent'', not ``omission cannot be noticed''. Our own data make that distinction concrete: reviewers do volunteer out-of-set candidates, and at 72B they flood (5.46 spurious additions per item, Table~\ref{tab:silent}), so a reviewer is plainly not confined to named tokens. What we show empirically is the weaker, sufficient claim that subtractive review removes over-inclusions while generative review does not recover omissions at usable precision. Second, a cardinality or coverage certificate lets a reviewer detect incompleteness without a witness, but detection is not repair: fixing the gap still needs the missing witness, and such certificates are unavailable precisely when the acceptable set is the unknown being authored. Third, a reviewer that volunteers candidates may inject fresh errors. The net effect is a directional blind spot, so any subtractive review loop pushes an authored set toward under-acceptance. The rest of this section measures that asymmetry.

\begin{figure}[tbp]
\centering
\begin{tikzpicture}[
  >={Stealth[length=4.5pt,width=3.2pt]}, font=\footnotesize,
  tok/.style={draw=cgy, line width=0.6pt, rounded corners=2pt, minimum width=9mm, minimum height=6mm,
              align=center, fill=white, text=cink},
  good/.style={tok, draw=cok, fill=cok!8}, bad/.style={tok, draw=cwarn, fill=cwarn!12},
  ghost/.style={tok, draw=cgy!60, densely dashed, fill=gray!4, text=cgy},
  df/.style={->, line width=0.7pt, draw=cgy}, lab/.style={font=\scriptsize, text=cgy},
]
\node[good] (a1) at (0.7,1.6) {valid};
\node[good] (a2) at (2.0,1.6) {valid};
\node[good] (a3) at (3.3,1.6) {valid};
\node[bad]  (a4) at (4.9,1.6) {over-incl.};
\begin{scope}[on background layer]
\node[draw=cink!40, rounded corners=3pt, fit=(a1)(a2)(a3)(a4), inner sep=3mm] (setbox) {};
\end{scope}
\node[lab] at (3.0,2.5) {authored key $\hat S$ (tokens that exist)};
\node[tok, draw=cexec, fill=cexec!7, text width=22mm] (rev) at (7.7,1.6) {review queries $\hat m(x)$ on each \emph{named} token};
\draw[df] (a4.east) to[out=0,in=180] (rev.west);
\node[good, text width=20mm] (det) at (11.7,1.6) {over-inclusion\\\textbf{detected} \checkmark};
\draw[df] (rev.east) -- (det.west);
\node[lab] at (3.0,0.15) {oracle $S^{*}$ has a member not in $\hat S$};
\node[ghost] (om) at (3.0,-0.55) {omitted};
\node[tok, draw=cauth, fill=cauth!8, text width=22mm] (noq) at (7.7,-0.55) {no token exists to query};
\draw[df, densely dashed, draw=cgy!70] (om.east) to[out=0,in=180] (noq.west);
\node[bad, text width=20mm, draw=red!70, fill=red!5] (und) at (11.7,-0.55) {omission\\\textbf{undetected} $\times$};
\draw[df] (noq.east) -- (und.west);
\end{tikzpicture}
\caption{Why omission is witness-poor (Observation~\ref{prop:omission}). Review interrogates the tokens the artifact names, so an over-inclusion is exposed by a single membership query (top). An omitted member names no token; to query it the reviewer must first author it, the capability under audit (bottom). Subtractive review therefore removes over-inclusions and keeps omissions.}
\label{fig:silent}
\end{figure}

\subsection{Design}
For each of the 240 constructed items we present the model with a proposed answer key for the same gloss/example task under three conditions:
clean (the exact oracle set), \emph{over-corrupted} (oracle set plus two hard negatives), and \emph{omission-corrupted} (oracle set minus two valid members), instructing it to critique and repair the key in a fixed format (\texttt{REMOVE:} \dots\ / \texttt{ADD:} \dots).
Detection is scored objectively: an injected negative counts as caught iff it is named in \texttt{REMOVE}; a deleted member counts as recovered iff it is named in \texttt{ADD}.
Clean items measure false-alarm rates.

\subsection{Results}

\begin{table}[tbp]
\centering\small
\setlength{\tabcolsep}{6pt}
\begin{tabular}{lccccc}
\toprule
 & \multicolumn{2}{c}{Detection rate} & & \multicolumn{2}{c}{False alarms / clean item}\\
\cmidrule(lr){2-3}\cmidrule(lr){5-6}
Model & over-inclusion & omission & Asymmetry & \texttt{REMOVE} & \texttt{ADD}\\
\midrule
Qwen2.5-7B  & 0.706 & 0.100 & $7.1\times$ & 2.06 & 1.10\\
Qwen2.5-14B & \best{0.860} & 0.148 & $5.8\times$ & 1.83 & 1.18\\
Qwen2.5-72B & \best{0.860} & \best{0.375} & $2.3\times$ & 1.09 & \best{5.46}\\
\bottomrule
\end{tabular}
\caption{Silent omission (WordNet, $n{=}240$ per condition). Reviewers see a drafted key and may add or remove entries. Planted over-inclusions are caught $71$--$86\%$ of the time, planted omissions only $10$--$15\%$, a $6$--$7\times$ asymmetry that holds at every scale. The 72B model narrows it only by flooding the key with unsupported additions, which trades one error direction for the other rather than repairing recall. The $6$--$7\times$ figure is measured under the weakest review condition: one model, no auxiliary information; showing the full candidate pool and using a frontier reviewer changes the raw ratio but not the direction once clean-key false alarms are netted out (Table~\ref{tab:detectgen}), and further enriched conditions (\S\ref{sec:limitations}) may shrink it, and Observation~\ref{prop:omission} bounds only what local review can certify, not what a better-resourced reviewer could find.}
\label{tab:silent}
\end{table}

Table~\ref{tab:silent} confirms the prediction with a 6--7$\times$ asymmetry at 7--14B.
Three further observations sharpen it.
\emph{It is not reluctance to add:} on clean keys the models volunteer ${\approx}1.1$ additions per item (false alarms against the oracle); when two members are genuinely missing they recover them only 10--15\% of the time, the model is willing to extend the set but cannot find the specific absent members, blind search in an open space.
\emph{Scale improves the presence side first:} the 7B${\to}$14B gain concentrates on catching what is written ($+15.4$ vs.\ $+4.8$ points).
\emph{At 72B the asymmetry converts rather than closes:} the ratio drops to $2.3\times$, but the mechanism is flooding, not finding: false-alarm additions on clean specifications roughly quintuple ($4.6$--$5\times$; $1.1{\to}5.46$ per item) while genuine recovery reaches only 0.375.
A reviewer that ``fixes'' omissions by indiscriminate addition has traded the silent error for the visible one, the recall-for-precision trade of \S\ref{sec:scissors}; no scale we tested yields a reviewer that locates the specific missing members.

\paragraph{Is the asymmetry lexical-specific? A complete-truth replication.}
The measurement above uses the lexical construction, whose acceptable set is fuzzy: a reviewer
hunting a \emph{missing} synonym must also decide whether the missing word really carries the
intended sense. The asymmetry could therefore be an artifact of that fuzziness rather than a
property of omission. We close that escape by re-running the identical review protocol on the
algorithmic construction, where the reviewer is shown the full 15-word candidate list and a
mechanically decidable predicate, so every membership question has a checkable answer and finding
an omission requires only scanning the list. Both constructions are run with one frontier
reviewer (GPT-5.1) in the same session, so construction is the only variable
(Table~\ref{tab:detectgen}).

Two things follow, and the second reverses the first.
Raw detection rates make the frontier reviewer look far more symmetric than the 7--14B models
($1.4$--$1.5\times$ rather than $6$--$7\times$), and equally so on both constructions.
But those raw rates are not corrected for what the reviewer does to a clean key, and it
does a great deal: it volunteers additions on $64\%$ of clean algorithmic keys and $83\%$ of clean
lexical ones. Net of that base rate the picture inverts. On the algorithmic construction the
reviewer gains $+0.28$ over chance at catching planted over-inclusions but only $+0.04$ at
recovering planted omissions, a ratio of roughly seven; on the lexical construction the omission
figure is \emph{negative} ($-0.26$), meaning it proposes additions more readily on keys that need
none than it recovers members that are genuinely absent.
The asymmetry is therefore construct-general. It survives on complete truth, where lexical
fuzziness cannot explain it, and the frontier reviewer's apparent improvement is the same
flooding we already observed at 72B, now at the top of the capability range. What changes with
capability is the willingness to add, not the ability to find what is missing. Stated at the
strength the intervals support: on complete truth this reviewer detects over-inclusions well above
its own false-alarm rate and detects omissions at a rate we cannot distinguish from that rate.

\begin{table}[tbp]
\centering\small
\setlength{\tabcolsep}{5pt}
\begin{tabular}{lccccc}
\toprule
& \multicolumn{2}{c}{Raw detection} & \multicolumn{2}{c}{Clean-key false alarm} & \\
\cmidrule(lr){2-3}\cmidrule(lr){4-5}
Construction & $D_{\mathrm{over}}$ & $D_{\mathrm{omit}}$ & $F_{\mathrm{rem}}$ & $F_{\mathrm{add}}$ & Adjusted [95\% CI]\\
\midrule
\multirow{2}{*}{Algorithmic ($n{=}114$)}
 & \multirow{2}{*}{0.943} & \multirow{2}{*}{0.636} & \multirow{2}{*}{0.640} & \multirow{2}{*}{0.544}
 & $d_{\mathrm{over}}=\best{+0.303}$ [$+.215,+.395$]\\
 & & & & & $d_{\mathrm{omit}}=+0.092$ [$-.013,+.197$]\\
\midrule
\multirow{2}{*}{Lexical ($n{=}240$)}
 & \multirow{2}{*}{0.858} & \multirow{2}{*}{0.569} & \multirow{2}{*}{0.654} & \multirow{2}{*}{0.842}
 & $d_{\mathrm{over}}=\best{+0.204}$ [$+.133,+.273$]\\
 & & & & & $d_{\mathrm{omit}}=\mathbf{-0.273}$ [$-.337,-.208$]\\
\bottomrule
\end{tabular}
\caption{The detectability asymmetry replicated across constructions with one reviewer (GPT-5.1),
so construction is the only variable. Raw rates suggest near-symmetry, but the reviewer volunteers
removals on $64$--$65\%$ and additions on $54$--$84\%$ of clean keys, so a raw rate is
mostly base rate. $D$ is the detection rate for the planted error, $F$ the rate at which the reviewer volunteers the same edit on an uncorrupted key. We therefore report the false-alarm-adjusted quantities
$d_{\mathrm{over}}=D_{\mathrm{over}}-F_{\mathrm{rem}}$ and
$d_{\mathrm{omit}}=D_{\mathrm{omit}}-F_{\mathrm{add}}$, the hit-minus-false-alarm statistic of
signal-detection theory, with the clean condition supplying the false-alarm rate, each with a
$2{,}000$-resample item bootstrap. Reading the intervals rather than a ratio: over-inclusion
detection is reliably above its own false-alarm floor on \emph{both} constructions, whereas
omission detection is \emph{indistinguishable from} its floor on complete truth (the interval
covers zero) and significantly \emph{below} it on WordNet, where the reviewer adds spurious
members faster than it recovers real ones. We deliberately do not lead with the over/omit ratio:
its denominator's interval spans zero, so the ratio is unstable even though the ordering is not.
The asymmetry is thus not an artifact of lexical fuzziness. Item counts differ because the
algorithmic construction requires at least four accepted words per list, which its candidate pool
supplies for 114 of the 240 attempted lists. These figures come from a re-run that retains
per-item vectors so the intervals could be computed; an earlier run of the identical protocol gave
the same ordering with somewhat different magnitudes ($d_{\mathrm{over}}{=}{+}0.276$,
$d_{\mathrm{omit}}{=}{+}0.039$ algorithmic), which is run-to-run variation of the reviewer model,
not of the construction.}
\label{tab:detectgen}
\end{table}

\subsection{Consequence: subtractive review shifts error toward omission}
The asymmetry converts review pipelines from quality filters into \emph{directional} filters.
Any authored specification passed through self-critique, cross-model review, or adversarial revision will preferentially shed over-inclusions (visible, challengeable) while retaining omissions (invisible), so the surviving artifact is \emph{systematically under-accepting}, and the process that was meant to certify it is the process that skewed it.
This is a Goodhart mechanism one level above reward hacking: not a policy exploiting a reward, but the reward being born wrong in a direction its own audit cannot see.
Formally, a subtractive review round maps $\hat S_t\mapsto\hat S_{t+1}=\hat S_t\setminus D_t$, where by Observation~\ref{prop:omission} the detected set $D_t\subseteq\mathrm{OI}(\hat S_t)=\hat S_t\setminus S^{*}$ contains only over-inclusions (the sole errors it can name and challenge). Consequently
\begin{equation}\label{eq:reviewdrift}
  \pi_{t+1}\ge\pi_t,\qquad \rho_{t+1}=\rho_t,\qquad \mathrm{OM}(\hat S_{t+1})=\mathrm{OM}(\hat S_t)=S^{*}\setminus\hat S_t,
\end{equation}
so the omitted mass is a fixed point of review: iteration walks the artifact monotonically down-and-right on the precision--recall plane, sharpening precision while never recovering a missing member.
Section~\ref{sec:field} shows this signature verbatim in a production system whose generation pipeline included exactly such an adversarial review stage: over-inclusions falling round-over-round while omissions persist, exactly Eq.~\eqref{eq:reviewdrift}.

\subsection{Failed self-repair: sample-then-verify authoring}
\label{sec:repair}

If the scissors of \S\ref{sec:scissors} are a property of the one-shot generation interface (enumeration without lookahead, uncalibrated stopping) rather than of the model's knowledge, then the model's own execution ability should repair its authoring.
We re-architect authoring as \textbf{sample-then-verify}:
(1) sample the open-set task $K{=}10$ times at temperature 0.9 and take the union of normalized candidates (recall via diversity, the pass@$k$ logic);
(2) present each candidate to the same model as a closed-set membership query (precision via execution, the interface it is good at);
(3) the authored specification is the accepted subset (Figure~\ref{fig:repair}; full incident log in Appendix~\ref{app:negative}).
No external knowledge, no other model, no oracle access: the same weights, differently interfaced.

Registered predictions: if the asymmetry is interface-bound, repaired F1 substantially exceeds one-shot F1 at every scale, with precision recovered by filtering; if the union's recall stays low, retrieval binds; if filtering fails to restore precision, the boundary itself is diffusely represented.

\begin{figure}[tbp]
\centering
\begin{tikzpicture}[
  >={Stealth[length=4.5pt,width=3.2pt]}, font=\footnotesize,
  bx/.style={draw=cgy, line width=0.6pt, rounded corners=2pt, align=center, inner xsep=4pt,
             inner ysep=3pt, minimum height=8mm, fill=white, text=cink},
  au/.style={bx, draw=cauth, fill=cauth!7}, ex/.style={bx, draw=cexec, fill=cexec!7},
  ok/.style={bx, draw=cok, fill=cok!8},
  df/.style={->, line width=0.7pt, draw=cgy}, bad/.style={font=\scriptsize, text=red!70, align=center},
]
\node[au, text width=15mm] (spec) at (0,0) {spec\\(open set)};
\node[au, text width=20mm] (samp) at (3.0,0) {sample $K{=}10$\\at $T{=}0.9$};
\node[bx, text width=20mm] (uni) at (6.2,0) {union $U=\bigcup_j \hat S^{(j)}$};
\node[ex, text width=20mm] (filt) at (9.5,0) {membership\\filter $\hat m(\cdot)$};
\node[ok, text width=15mm] (out) at (12.6,0) {$\hat S_{\mathrm{rep}}$};
\draw[df] (spec)--(samp);\draw[df] (samp)--(uni);\draw[df] (uni)--(filt);\draw[df] (filt)--(out);
\node[bad] at (6.2,-1.35) {retrieval bind:\\$R_U\le0.55$};
\node[bad] at (9.5,-1.35) {filter collapses on\\self-gen.\ candidates};
\draw[->,red!55,line width=0.6pt] (6.2,-1.0)--(6.2,-0.5);
\draw[->,red!55,line width=0.6pt] (9.5,-1.0)--(9.5,-0.5);
\end{tikzpicture}
\caption{Sample-then-verify authoring (\S\ref{sec:repair}) and its two failure points. Diversity sampling supplies recall and the model's stronger membership interface supplies precision, in principle. In practice the union misses nearly half of $S^{*}$ (retrieval bind) and the filter, strong on curated contrasts, collapses on the model's own near-boundary proposals; by Eq.~\eqref{eq:repairbound} the composed F1 cannot beat one-shot.}
\label{fig:repair}
\end{figure}

The pipeline has a hard ceiling. Writing $U=\bigcup_{j=1}^{K}\hat S^{(j)}$ for the sampled union with oracle recall $R_U=|U\cap S^{*}|/|S^{*}|$, and $\pi_f$ for the precision of the membership filter on $U$, the repaired set $\hat S_{\mathrm{rep}}=\{x\in U:\hat m(x)=1\}$ satisfies
\begin{equation}\label{eq:repairbound}
  \mathrm{F1}_{\mathrm{rep}}\;\le\;\frac{2\,R_U\,\pi_f}{R_U+\pi_f}\;\le\;2\min(R_U,\pi_f),
\end{equation}
because its recall cannot exceed the union's ($R_U$) and its precision cannot exceed the filter's ($\pi_f$). Repair therefore needs both factors high; the experiment finds both low ($R_U\le0.55$; $\pi_f$ near one-shot precision), which Eq.~\eqref{eq:repairbound} converts into the falsification below.

\begin{table}[tbp]
\centering\small
\setlength{\tabcolsep}{6pt}
\begin{tabular}{lcccccccc}
\toprule
 & \multicolumn{2}{c}{One-shot} & \multicolumn{2}{c}{Union ($K{=}10$, $T{=}0.9$)} & \multicolumn{3}{c}{Sample-then-verify}\\
\cmidrule(lr){2-3}\cmidrule(lr){4-5}\cmidrule(lr){6-8}
Model & F1 & ($P$/$R$) & $P$ & $R$ & $P$ & $R$ & F1\\
\midrule
Qwen2.5-3B  & 0.169 & (.13/.28) & 0.072 & 0.425 & 0.121 & 0.188 & 0.126\\
Qwen2.5-7B  & \best{0.229} & (.19/.34) & 0.100 & \best{0.548} & 0.161 & \best{0.491} & 0.210\\
Qwen2.5-14B & 0.192 & (.14/.40) & 0.057 & 0.521 & \best{0.167} & 0.393 & 0.207\\
Qwen2.5-72B & 0.168 & (.12/.42) & 0.045 & 0.490 & 0.094 & 0.442 & 0.143\\
\bottomrule
\end{tabular}
\caption{Sample-then-verify repair (algorithmic, $K{=}10$ at $T{=}0.9$, then self-filtered by membership). The registered prediction that self-verification closes the gap is falsified: repaired F1 does not beat one-shot authoring at any scale. Union recall saturates near half the gold set, and the membership judge degrades on the very self-generated near-boundary candidates it must filter. Precision/recall of the one-shot baseline are given for reference.}
\label{tab:repair}
\end{table}

\paragraph{Result: the prediction is falsified, and the falsification is informative.}
Table~\ref{tab:repair}: repaired F1 (0.126/0.210/0.207/0.143) shows no consistent improvement over one-shot authoring (0.169/0.229/0.192/0.168): it is lower at 3 of 4 scales and essentially tied at 14B ($0.207$ vs.\ $0.192$, overlapping intervals), so sample-then-verify does not close the scissors.
The failure decomposes into two binding constraints.
\emph{Retrieval binds}: even ten diverse samples surface at most 55\% of the acceptable set, the missing members of \S\ref{sec:silent} are not hiding in the sampling distribution.
\emph{The judge collapses off-distribution}: membership execution, strong on oracle members versus curated negatives (membership F1 $0.60$--$0.79$ for 7B--72B, Table~\ref{tab:scissors}), filters the self-generated candidate pool barely above chance because a model's own high-temperature proposals are concentrated exactly at the boundary where its calibration is worst \citep{xiong2024confidence,jacobs2024cloze}.
The strong-interface hypothesis is dead: the scissors are not an artifact of the one-shot prompt that a generation$+$execution decomposition repairs.
What survives is the stronger structural reading: \textbf{the acceptable-set boundary is not sharply represented in the weights under any interface we tested}: one-shot emission, high-temperature exploration, and pointwise self-adjudication all fail on it, each in its characteristic way.
Practically, this closes the escape hatch a skeptic (or a practitioner) would reach for first: authored specifications cannot be made trustworthy by asking the same model to check itself; external ground truth is not an optimization but a requirement.

\section{Propagation: Authored Keys as RLVR Rewards}
\label{sec:propagation}

The silent-omission mechanism matters most where authored specifications are consumed automatically: as reward functions.
RLVR theory treats verifier false negatives as a noise parameter \citep{cai2025noisyrewards}, and case studies of falsely-rejecting verifiers uniformly project that, used as rewards, they would penalize correct outputs \citep{huang2025pitfalls,chowdhury2024swebench}.
We execute that projection twice: once on the \emph{lexical} task (large effect, but scored against WordNet, an admittedly incomplete reference; see the caveat below) and once on a \emph{complete-truth} numeric task where the acceptable set is decidable by construction and no incompleteness is possible (\S\ref{sec:propagation-numeric}). The complete-truth run is the clean causal estimate; the lexical run is the larger but reference-relative one.
First, what the authoring gap does to a reward is not one-sided noise but a specific two-sided channel.

\begin{observation}[The authored reward is a two-sided channel]
\label{prop:reward}
Let the oracle set be $S^{*}$ and the authored key $\hat S$. Under the reward $R(x)=\mathbb{1}[x\in\hat S]$ (vs.\ oracle $R^{*}(x)=\mathbb{1}[x\in S^{*}]$), the two disagree exactly on $\hat S\triangle S^{*}$: outputs in $S^{*}\setminus\hat S$ get zero reward (\emph{starvation}) and outputs in $\hat S\setminus S^{*}$ get positive reward (\emph{pollution}). Two distortion measures follow, and it is important not to conflate them. The \emph{set-level} rates are $1-\rho$ (fraction of $S^{*}$ starved) and $1-\pi$ (fraction of $\hat S$ that pollutes), with $\rho,\pi$ the key's recall and precision. The \emph{policy-experienced} rates are the masses the policy actually places there, $\epsilon_{\text{starve}}=\Pr_{x\sim q_\theta}[x\in S^{*}\setminus\hat S]$ and $\epsilon_{\text{pollute}}=\Pr_{x\sim q_\theta}[x\in\hat S\setminus S^{*}]$; these coincide with the set-level rates only in degenerate cases (a policy uniform on the relevant set) and in general differ; the policy-experienced rates are what enter the RLVR gradient. The empirical ``reward pollution'' we report (Table~\ref{tab:propagation}, $0.179$) is the \emph{conditional} rate $\Pr_{q_\theta}[x\notin S^{*}\mid x\in\hat S]$, the policy-weighted analog of $1-\pi$, and the directly measurable ``fraction of rewarded rollouts that are wrong'', which differs from the unconditional mass $\epsilon_{\text{pollute}}$ by the factor $q_\theta(\hat S)$. In either case $R$ is blind in both directions at once, and no on-policy optimization against $R$ can distinguish its errors from the oracle intent, because $R$ is the loop's only definition of correct.
\end{observation}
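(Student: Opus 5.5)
The argument is a direct case split on the indicator rewards, followed by bookkeeping of the probability masses. It does not rely on Observation~\ref{prop:omission}; it needs only the definitions of $\rho$ and $\pi$ from Definition~\ref{def:interfaces}.

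\emph{Step 1: where the rewards disagree.} Fix $x\in V$ and compare $R(x)=\mathbb{1}[x\in\hat S]$ with $R^{*}(x)=\mathbb{1}[x\in S^{*}]$. The four cells of the partition $\{\hat S\cap S^{*},\ S^{*}\setminus\hat S,\ \hat S\setminus S^{*},\ V\setminus(\hat S\cup S^{*})\}$ give
\[
(R,R^{*}) = (1,1),\ (0,1),\ (1,0),\ (0,0)
\]
respectively. So $R(x)\neq R^{*}(x)$ iff $x\in\hat S\triangle S^{*}$. The sign of $R-R^{*}$ is $-1$ on $S^{*}\setminus\hat S$ (starvation) and $+1$ on $\hat S\setminus S^{*}$ (pollution). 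Both regions are nonempty whenever $\rho<1$ and $\pi<1$, which is what "two-sided" means here.

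\emph{Step 2: set-level rates.} These follow from the definitions:
\[
|S^{*}\setminus\hat S|/|S^{*}|=1-\rho, \qquad |\hat S\setminus S^{*}|/|\hat S|=1-\pi.
\]

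\emph{Step 3: policy-experienced rates.} Define $\epsilon_{\text{starve}}=q_\theta(S^{*}\setminus\hat S)$ and $\epsilon_{\text{pollute}}=q_\theta(\hat S\setminus S^{*})$.

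If $q_\theta$ is uniform on $S^{*}\cup\hat S$, these masses are proportional to the cell cardinalities, so they are tied to $1-\rho$ and $1-\pi$ after renormalisation. To show they differ in general, take a policy concentrated on a single point of $\hat S\cap S^{*}$. Then both $\epsilon$ are $0$, while $1-\rho$ and $1-\pi$ stay positive.

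For the conditional rate, apply Bayes:
\[
\Pr_{q_\theta}[x\notin S^{*}\mid x\in\hat S]=\epsilon_{\text{pollute}}/q_\theta(\hat S).
\]
This exhibits the claimed factor $q_\theta(\hat S)$ and shows that the conditional rate is the policy-weighted analogue of $1-\pi$ (replace counting measure on $\hat S$ by $q_\theta$ restricted to $\hat S$).

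To justify that these masses are what enters RLVR, write the policy-gradient objective:
\[
J(\theta)=\mathbb{E}_{q_\theta}[R]=\mathbb{E}_{q_\theta}[R^{*}]-\epsilon_{\text{starve}}+\epsilon_{\text{pollute}}.
\]
This uses Step 1: $R-R^{*}$ equals $-1$ on the starved cell, $+1$ on the polluted cell, and $0$ elsewhere. The gradient therefore contains $\nabla(\epsilon_{\text{pollute}}-\epsilon_{\text{starve}})$, which in general displaces the optimum away from the oracle one.

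\emph{Step 4: indistinguishability, the main obstacle.} The informal claim "no on-policy optimization against $R$ can distinguish its errors" has to be made precise first. I would formalize it as follows: an on-policy learner observes only pairs $(x,R(x))$ for $x\sim q_\theta$.

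Construct two worlds that share the same $\hat S$ but differ in $S^{*}$. For instance, take $S^{*}_1=\hat S$ and $S^{*}_2=S^{*}$. Since $R$ depends only on $\hat S$, the observation distributions are identical in the two worlds for every $\theta$, so every learning trajectory has the same law in both. Yet the oracle objectives $\mathbb{E}_{q_\theta}[R^{*}_i]$ differ by exactly $\epsilon_{\text{pollute}}-\epsilon_{\text{starve}}$. Hence no procedure can detect, let alone correct, either error direction from $R$ alone. This is the reward-side analogue of the two-oracle-set construction in Observation~\ref{prop:omission}(ii).

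The delicate point is stating this so that it does not trivially depend on the learner having no side information. I would state it explicitly as conditional on $R$ being the sole feedback channel, matching the observation's phrase "the loop's only definition of correct."
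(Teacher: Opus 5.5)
Your Steps 1--3 are the paper's proof. The paper uses the same four-cell case split and the same identities $|S^{*}\setminus\hat S|=(1-\rho)|S^{*}|$ and $|\hat S\setminus S^{*}|=(1-\pi)|\hat S|$. It also uses the same decomposition $\mathbb{E}_{q_\theta}[R]=q_\theta(S^{*})+\epsilon_{\text{pollute}}-\epsilon_{\text{starve}}$, from which it concludes that stationary points of expected reward need not be stationary points of true accuracy.

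You genuinely differ on the final clause. The paper backs ``no on-policy optimization against $R$ can distinguish its errors'' only with that displacement argument. The indistinguishability itself is left as an assertion (``$R$ is the loop's only definition of correct''). You prove it as observational equivalence: for fixed $\hat S$, the worlds $S^{*}_1=\hat S$ and $S^{*}_2=S^{*}$ induce the same law on $(x,R(x))$ for every $\theta$, hence identical learning trajectories. Yet their true objectives differ by exactly $\epsilon_{\text{pollute}}-\epsilon_{\text{starve}}$.

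The two routes buy different things. The paper's speaks to the optimization consequence: the optimum moves, which is what the measured tax reflects. Yours establishes the epistemic claim and states its one assumption, that reward is the sole feedback channel. It also makes the observation the reward-side twin of the two-oracle construction of Observation~\ref{prop:omission}.

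You also verify two points the paper's proof leaves unargued. The first is the Bayes identity $\Pr_{q_\theta}[x\notin S^{*}\mid x\in\hat S]=\epsilon_{\text{pollute}}/q_\theta(\hat S)$ behind the ``factor $q_\theta(\hat S)$'' remark. The second is a witness (a point mass on $\hat S\cap S^{*}$) showing that policy-experienced and set-level rates can differ.

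Two minor notes. First, under a policy uniform on $S^{*}\cup\hat S$, the match with $1-\rho$ and $1-\pi$ holds only for the masses conditioned on $S^{*}$ and $\hat S$ respectively. So the statement's ``relevant set'' is $S^{*}$ for starvation and $\hat S$ for pollution. Second, neither your argument nor the paper's establishes the ``only in degenerate cases'' direction. That direction is not literally true, since a non-uniform policy can hit $1-\rho$ by accident, and it is best read informally.
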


\noindent The proposition dictates the measurements we report: set-level, the authored key has recall $\rho=0.507$ (so $1-\rho=0.493$ starved) and low precision; policy-level, we measure $\epsilon_{\text{pollute}}$ directly on trained-policy rollouts ($0.179$ under the authored key vs.\ $0.041$ under the oracle key, \S\ref{sec:propagation}). Reporting the policy-weighted mass, not just $1-\rho,1-\pi$, is what ties the identity to the observed training damage.

\begin{figure}[tbp]
\centering
\begin{tikzpicture}[
  >={Stealth[length=4.5pt,width=3.2pt]}, font=\footnotesize,
  bx/.style={draw=cgy, line width=0.6pt, rounded corners=2pt, align=center, inner xsep=4pt,
             inner ysep=3pt, minimum height=8mm, fill=white, text=cink},
  df/.style={->, line width=0.8pt, draw=cgy},
]
\node[bx, text width=17mm, draw=cink, line width=0.9pt] (pol) at (0,0) {policy $\pi_\phi$};
\node[bx, text width=18mm] (roll) at (3.1,0) {rollouts $x$};
\node[bx, draw=cwarn, fill=cwarn!12, text width=22mm] (key) at (6.6,0) {reward key\\$R(x)=\mathbb{1}[x\!\in\!\hat S]$};
\node[bx, draw=cink, line width=0.9pt, text width=17mm] (grad) at (10.4,0) {GRPO\\gradient};
\draw[df] (pol)--(roll); \draw[df] (roll)--(key); \draw[df] (key)--(grad);
\draw[df, rounded corners=3pt] (grad.north) -- (10.4,1.75)
      -- node[above, font=\scriptsize, text=cgy] {policy update} (0,1.75) -- (pol.north);
\node[red!75, font=\scriptsize, align=center] (star) at (4.6,-1.6) {$x\in S^{*}\!\setminus\!\hat S$:\\reward $0$ (\textbf{starvation}, $1{-}\rho{=}.49$)};
\node[cwarn!75!black, font=\scriptsize, align=center] (poll) at (8.7,-1.6) {$x\in\hat S\!\setminus\!S^{*}$:\\reward $1$ (\textbf{pollution}, $.18$)};
\draw[->, red!55, line width=0.6pt] (star)--(key.south west);
\draw[->, cwarn!70!black, line width=0.6pt] (poll)--(key.south east);
\node[cok, font=\scriptsize] at (6.6,0.95) {(oracle key $R^{*}=\mathbb{1}[x\!\in\!S^{*}]$: no leaks $\Rightarrow$ $+18.5$ pts)};
\end{tikzpicture}
\caption{Propagation as a two-sided reward channel (Observation~\ref{prop:reward}). An RLVR loop identical except for the reward key: the authored key withholds reward from correct rollouts in $S^{*}\setminus\hat S$ and grants it to incorrect rollouts in $\hat S\setminus S^{*}$. The reference key has neither leak. On complete truth the two arms end $1.9$ points apart; on the lexical task $18.5$ points apart in synset-membership accuracy.}
\label{fig:channel}
\end{figure}

\paragraph{Design.}
Task: given gloss and example, produce one word meaning $g$ (single-token-style rollouts, so RL is cheap and attribution clean).
Dataset: 2{,}400 WordNet items (train 2{,}040 / eval 360) built as in \S\ref{sec:construction}.
The 14B model authors an answer key per item (one-shot, greedy); measured against the reference synsets these keys have \textbf{recall 0.507}, and 2{,}157/2{,}400 items carry at least one omitted valid answer, the raw material of silent punishment.

\paragraph{What the lexical ``oracle'' is, and is not.}
Both arms here are scored by membership in the WordNet synset, which \S\ref{sec:confound} establishes is \emph{incomplete}. Three consequences follow. \emph{(i)} The evaluation metric is a \emph{reference-relative} quantity, synset-membership rate rather than semantic correctness; we name it accordingly below and reserve ``true accuracy'' for the complete-truth run. \emph{(ii)} Measured pollution is an \emph{upper bound}: a genuine, context-appropriate synonym that WordNet omits is counted as pollution. \emph{(iii)} Most importantly, the reference-key arm enjoys a \emph{criterion-alignment advantage}: it is trained on the same membership predicate used to score it, while the authored arm is trained on a proper subset of that predicate. This experiment therefore establishes that \emph{training on a key that disagrees with the evaluation criterion costs performance on that criterion}; it does not by itself establish an equally large loss in semantic correctness. We must be careful about how far this can be bounded. Writing $a$ and $r$ for the fraction of each arm's off-synset outputs that are genuinely correct, semantic accuracy is $0.722+0.278a$ (authored) versus $0.913+0.087r$ (reference). The measured gap survives for any plausible configuration; e.g.\ it persists even at $a=0.7,r=0.1$, but it is not mathematically excluded that the ordering reverses in the extreme ($a\!\to\!1$, $r\!\to\!0$ gives $1.000$ vs.\ $0.913$). Without human adjudication of off-synset answers we therefore cannot claim a one-sided bound, only that reversal requires the authored arm's off-reference outputs to be almost entirely valid while the reference arm's are almost entirely invalid. The complete-truth replication in \S\ref{sec:propagation-numeric}, where the acceptable set is exact and no such caveat applies, is what licenses the causal reading.
The authored key is wrong in both directions at once: evaluated over the base policy's own rollouts it is not only starved but \emph{polluted}: of every answer it pays reward for, $25.8\%$ are oracle-invalid (a dedicated base-policy evaluation; the multiseed base arm gives $0.264$, Table~\ref{tab:propagation}), so the training signal simultaneously withholds reward from correct behavior and grants it to incorrect behavior.
Two GRPO \citep{shao2024deepseekmath} arms on a Qwen2.5-3B policy, identical in everything except the reward:
\textbf{authored-key arm}, reward $=$ membership of the rollout in the authored key;
\textbf{oracle-key arm}, reward $=$ membership in the full synset.
Instrumented against the \emph{oracle} throughout: per-step frequency of correct-but-omitted answers (rollouts the oracle accepts but the authored key rejects), reward pollution (rollouts the authored key rewards but the oracle rejects), oracle-valid rate, and distinct-valid coverage (how many different correct answers the policy can still produce at evaluation).

Registered predictions: the authored-key arm suppresses correct-but-omitted answers and narrows distinct-valid coverage relative to the oracle arm; oracle-valid rate may remain superficially comparable.

\begin{table}[tbp]
\centering\footnotesize
\setlength{\tabcolsep}{4pt}
\begin{tabular}{lcccc}
\toprule
Arm (reward key) & Synset-membership & Reward pollution & Omitted frac.\ of valid & Distinct valid / item\\
\midrule
base (no training)   & 0.567 & 0.264 & 0.112 & 1.064\\
authored key         & 0.726 {\scriptsize$\pm$.026} & 0.179 {\scriptsize$\pm$.019} & 0.121 {\scriptsize$\pm$.008} & 1.032 {\scriptsize$\pm$.015}\\
oracle key           & \best{0.911} {\scriptsize$\pm$.003} & \best{0.041} {\scriptsize$\pm$.002} & 0.156 {\scriptsize$\pm$.001} & 0.970 {\scriptsize$\pm$.004}\\
\bottomrule
\end{tabular}
\caption{E6 lexical arm (mean$\pm$std, 6 seeds): identical GRPO runs differing only in the reward key, evaluated by WordNet synset membership. This is a \emph{reference-relative} metric, not semantic correctness, and the reference arm is additionally criterion-aligned (\S\ref{sec:propagation}); the complete-truth estimate is in \S\ref{sec:propagation-numeric}. Training on the authored key forfeits $18.5$ points relative to the reference key, in every one of six seeds. Reward pollution shows the two-sided damage. Both trained arms are \emph{less} diverse than the untrained base ($1.064\!\to\!1.032$ and $0.970$), the ordinary diversity collapse of RL fine-tuning; the reference key collapses it slightly more, which is what a sharper reward should do and is not evidence about key quality. The registered sub-predictions on omission suppression and diversity narrowing are not confirmed: at the $K{=}32$ sampling budget of App.~\ref{app:evalprec} the authored arm is in fact more diverse than the reference arm, the opposite of the prediction, and the diversity column is budget-dependent (it counts distinct valid answers among $K$ samples) so it is not comparable across $K$.}
\label{tab:propagation}
\end{table}

\begin{figure}[tbp]
\centering
\includegraphics[width=0.86\textwidth]{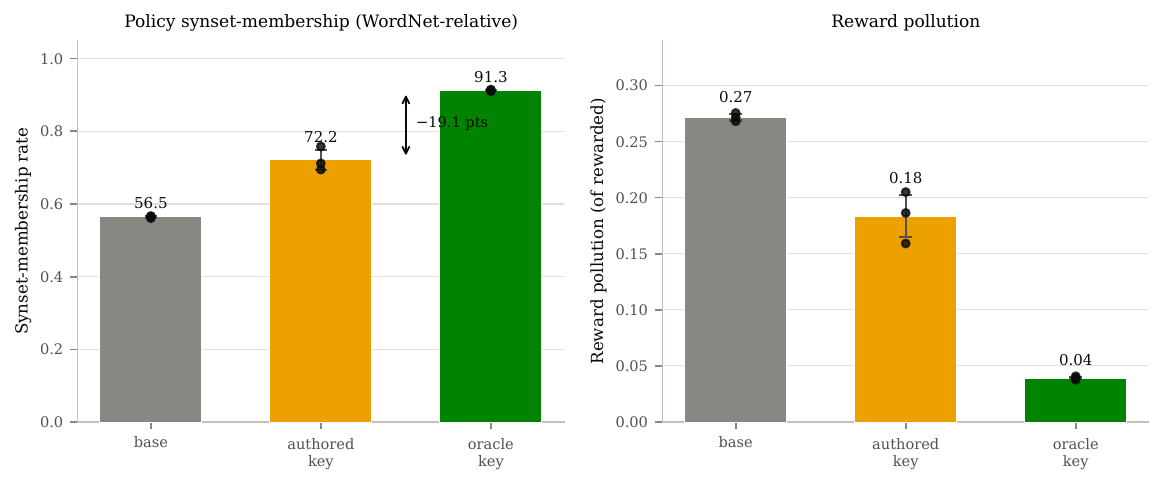}
\caption{E6 propagation (mean$\pm$std over 6 seeds; dots are per-seed values). \emph{Left:} synset-membership accuracy (WordNet-relative). Wiring the model-authored key into an otherwise-identical loop costs 18.5 points relative to the reference key. \emph{Right:} reward pollution (fraction of \emph{rewarded} rollouts the oracle rejects). The oracle key drives the trained policy to $0.04$, the authored key leaves it at $0.18$: the authored reward not only starves correct behavior (recall 0.507) but teaches the policy to emit oracle-invalid answers it happens to accept.}
\label{fig:propagation}
\end{figure}

\paragraph{Result: a 19-point WordNet-relative tax, unobservable in-loop.}
Table~\ref{tab:propagation} and Figure~\ref{fig:propagation}: with everything held fixed except which key pays the reward, the reference-key arm reaches $91.3\%$ synset-membership accuracy while the authored-key arm stops at $72.2\%$ (mean over 3 seeds; the effect is robust, the best authored seed, 0.758, lies below the worst oracle seed, 0.910; per-seed values in Appendix~\ref{app:e6seeds}).
The mechanism is the one the measured key predicts, and it is two-sided. The authored key omits $1-\rho=0.49$ of the synset lemma types, but the policy concentrates its correct rollouts on common in-key words, so the \emph{policy-experienced} starvation is milder: about $12\%$ of its correct rollouts go unrewarded (Table~\ref{tab:propagation}, omitted-fraction-of-valid). Meanwhile $25.8\%$ of the rollouts that are rewarded are oracle-invalid (pollution), so the damage is pollution-dominated and the authored arm trains on a signal that is both false-negative- and false-positive-noisy, the regime whose convergence cost RLVR theory anticipates \citep{cai2025noisyrewards}.
The pollution side leaves a fingerprint in the trained policies: training on the oracle key drives the policy's own reward pollution down to $0.041$, whereas the authored key halves the base rate only to $0.179$ (base $0.264$): it fails to purge the false-positive channel that the oracle key eliminates, leaving the trained policy nearly $5\times$ more polluted than under the oracle key.
Two honest notes.
\emph{First}, the sharper registered predictions were not confirmed at this budget: one LoRA epoch does not actively drive omitted-but-correct answers toward zero (they stay flat or rise) or collapse diversity; whether longer horizons produce that distributional damage remains open.
\emph{Second}, the 18.5-point gap is precisely the quantity the authored arm cannot observe from inside: by its own key's accounting it improved substantially, and nothing in its loop distinguishes ``the policy is wrong'' from ``the key is missing the answer.''
The cost of authoring is thus paid in capability and booked nowhere.

\paragraph{The tax persists at every scale, but does not grow without bound.}
The 18.5-point result is a single policy (Qwen2.5-3B); a natural hope is that a stronger base model escapes a weak key's flaws.
It does not.
We replicate the identical two-arm protocol (3 seeds each; same data, steps, and hyperparameters; same authored and oracle keys) across \emph{scale} (Qwen2.5-7B and -14B) and across \emph{family} (Phi-3.5-mini, a different lab's model) (Table~\ref{tab:propscale}, Figure~\ref{fig:propscale}).
The tax does not close at any scale tested: the reference-key arm reaches $0.91$--$0.97$ synset-membership accuracy while the authored-key arm never exceeds $0.72$, leaving a $19$--$32$ point gap across a $24{\times}$ parameter range, and $-28.2$ points for the different family. (These are lexical runs, so they inherit the reference-relative caveat above; the complete-truth estimate is in \S\ref{sec:propagation-numeric}.)
But it is not monotonic in scale: it rises from 3B to a peak at 14B ($-32.0$) then falls to $-26.6$ at 72B, whose much stronger base policy ($0.740$ vs.\ ${\approx}0.56$ for 3B--14B) lifts both arms (oracle $0.966$) yet still forfeits 27 points.
Reward pollution tracks the same shape: authored-key pollution $0.18/0.24/0.34/0.27$ at 3B/7B/14B/72B, peaking at 14B, against oracle pollution ${\le}0.04$ throughout, so a more capable policy optimizes harder onto what the key pays for (pollution included) up to mid scale, while the largest model's stronger prior pulls it partway back.
(An OLS fit of the tax on $\log_{10}$ parameters is positive but weak, slope $5.5$ pt/decade, $R^2{=}0.29$, but with only four scale points we report it as \emph{exploratory} and draw no scaling law from it.) the robustly supported claim is \emph{persistence, not monotone growth}: there is no scale we test at which training on a model-authored key matches training on the oracle.
Capability raises absolute accuracy but does not repair the authored-key tax.

\begin{table}[tbp]
\centering\small
\setlength{\tabcolsep}{7pt}
\begin{tabular}{llcccc}
\toprule
Policy & Axis & Base & Authored key & Oracle key & Tax (pts)\\
\midrule
Qwen2.5-3B   & anchor & 0.567 & 0.726 {\scriptsize$\pm$.026} & 0.911 {\scriptsize$\pm$.003} & $-18.5$\\
Qwen2.5-7B   & scale  & 0.562 & 0.695 {\scriptsize$\pm$.018} & 0.929 {\scriptsize$\pm$.003} & $-23.3$\\
Qwen2.5-14B  & scale  & 0.566 & 0.612 {\scriptsize$\pm$.007} & 0.932 {\scriptsize$\pm$.012} & $\mathbf{-32.0}$\\
Qwen2.5-72B  & scale  & 0.740 & 0.701 {\scriptsize$\pm$.020} & \best{0.966} {\scriptsize$\pm$.001} & $-26.6$\\
Phi-3.5-mini & family & 0.451 & 0.637 {\scriptsize$\pm$.038} & 0.919 {\scriptsize$\pm$.004} & $-28.2$\\
\bottomrule
\end{tabular}
\caption{E6 across scale and family (lexical arm: synset-membership accuracy, WordNet-relative; mean$\pm$std, 3 seeds; base is the untrained policy). Each row is an independent pair of GRPO runs differing only in the reward key. The reference arm reaches $0.91$--$0.97$ at every scale while the authored arm never exceeds $0.72$, a $19$--$32$ point gap over a $24\times$ parameter range and a different family. The tax is not monotonic: it peaks near 14B and is smaller at 72B, whose stronger base lifts both arms.}
\label{tab:propscale}
\end{table}

\begin{figure}[tbp]
\centering
\includegraphics[width=0.82\textwidth]{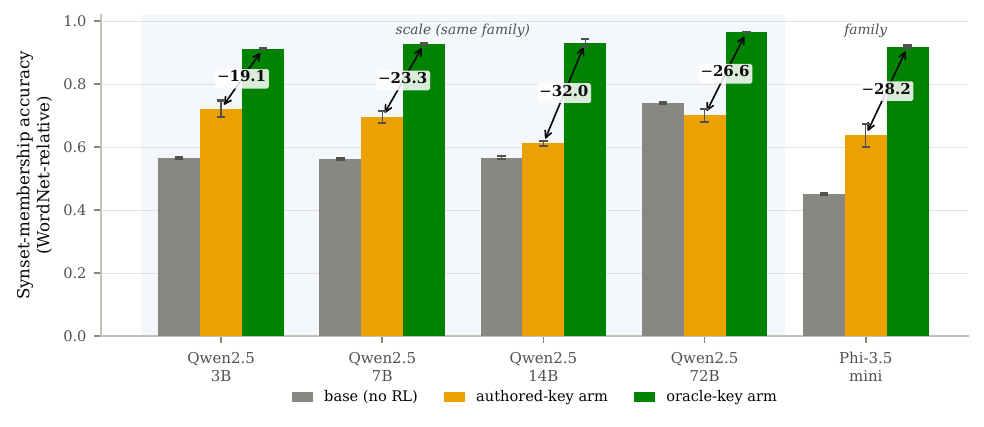}
\caption{The authoring tax persists across scale but is not monotonic. Base, authored-key, and reference-key synset-membership accuracy (WordNet-relative, $\pm$std over 3 seeds) for five policies spanning scale and family. The reference arm reaches $0.91$--$0.97$ at every scale while the authored arm never exceeds $0.72$, so the gap stays in a $19$--$32$ point band. It peaks at 14B rather than growing without bound.}
\label{fig:propscale}
\end{figure}

\subsection{Complete-truth replication: the clean causal estimate}
\label{sec:propagation-numeric}
The lexical run above is reference-relative. We therefore repeat the identical two-arm protocol on a \emph{numeric} construction, where the acceptable set is computed by a mechanical predicate, so the oracle is \emph{exact}: ``true accuracy'' is literally true, nothing can be missing, and the reference arm's criterion is ground truth rather than a proxy. Because finiteness is essential here we state the task fully (full specification in App.~\ref{app:numspec}): the candidate universe is the bounded set $V=\{1,\dots,100\}$ and every predicate is \emph{range-scoped}, e.g.\ ``between 20 and 80 that is divisible by 3 or by 5'', ``between 1 and 100 whose two digits differ by exactly 2'', so each acceptable set is finite (mean $|S^{*}|=10.3$, range $4$--$29$). Note these are compound, range-bounded predicates, not the four unbounded predicates used for the numeric scissors of \S\ref{sec:construction}. The policy is asked for one number satisfying the predicate; the reward is set membership in the arm's key. Train and evaluation predicates are \textbf{disjoint} (100 train, 60 held-out, zero overlap), so nothing is memorisable across the split. Measured on the held-out predicates the 14B-authored key has recall $0.926$ and precision $0.987$, far more accurate than its lexical counterpart ($0.507$), which is why the resulting tax is much smaller. We repeat the two-arm protocol on the \emph{numeric} construction, the same open-set generative task (``give a whole number that $P$'') but with mechanically decidable predicates, so the oracle is complete truth and the 14B-authored key is scored against it exactly (100 predicates, 6 seeds). Here authoring is far more accurate than on synonyms, and the downstream tax shrinks in lockstep:
\begin{center}\small
\begin{tabular}{lccc}
\toprule
Downstream task family & Authoring recall & Authoring pollution & RLVR tax\\
\midrule
Lexical (WordNet synonyms)   & 0.51 & 0.26         & $-18.5$\\
Numeric (arithmetic predicates) & 0.93 & ${\approx}0.04$ & $-1.9$\\
\bottomrule
\end{tabular}
\end{center}
\noindent The tax appears in both families (all six numeric seeds are positive, $+3.0/+1.6/+0.5/+3.2/+0.1/+3.1$; base $0.888\!\to\!$ authored $0.953\!\to\!$ oracle $0.972$) but is far larger where authoring is worse (lexical recall $0.51\to-18.5$; numeric recall $0.93\to-1.9$). Authoring quality is thus \emph{a} determinant of the tax, not the only one. The scale sweep of Table~\ref{tab:propscale} separates the two cleanly, and the design point is worth stating explicitly because it is easy to misread: \textbf{every arm in that sweep trains against one and the same authored key}, a single file written once by Qwen2.5-14B, so the key is a fixed input and the \emph{policy} is the only thing that varies across 3B, 7B, 14B, 72B and Phi. The tax nevertheless moves across that sweep and peaks at 14B, which it could not do if authored-key quality were the sole driver. The two families also differ in base accuracy ($0.90$ numeric vs.\ $0.57$ lexical), a second non-key determinant. We therefore read this as directional evidence, the downstream harm concentrates where authoring is hardest (fuzzy semantic sets; code suites that admit only $19$--$44\%$ of correct solutions, \S\ref{sec:code}--\ref{sec:frontier}) and is mild where the acceptable set is mechanically enumerable, rather than a precise scaling law. It still unifies the paper's two halves: the scissors (\S\ref{sec:scissors}--\ref{sec:code}) measures the authoring error, and the RLVR tax is what that error costs once the key becomes a reward.

\paragraph{What three seeds can and cannot support.}
Each arm is a set of paired runs (same seed, same data, same steps; only the reward key differs), so we report the \emph{paired} per-seed differences rather than arm means alone. The two anchor arms were extended to six seeds and re-scored with the precise evaluation of App.~\ref{app:evalprec}; the four scale-sweep arms remain at three seeds and their original scoring. Lexical 3B $\{21.8,14.7,20.3,15.7,20.4,18.1\}$, complete-truth numeric $\{3.0,1.6,0.5,3.2,0.1,3.1\}$; 7B $\{20.8,25.1,24.1\}$, 14B $\{31.5,30.5,33.8\}$, 72B $\{24.1,26.8,28.8\}$, Phi $\{31.1,23.4,30.0\}$ points.
\emph{First, the two anchor arms now clear conventional significance and the four scale arms cannot.} Both anchors have all six differences positive, giving an exact two-sided sign-permutation $p=2\cdot2^{-6}=0.031$, with paired $d_z=6.5$ (lexical, $18.51\pm2.83$ points) and $1.4$ (numeric, $1.91\pm1.40$ points). The four scale-sweep arms are still $n{=}3$, where the two-sided minimum attainable $p$ is $0.25$, so no $p<0.05$ is available for them by construction and we claim none; their evidence remains the direction, the non-overlap of seed ranges, and the effect size.
\emph{Second, precision mattered more than seed count here.} At the original $K{=}8$ sampling budget the evaluation's own binomial standard error was near $1.0$ point per arm, comparable to the entire numeric effect, and under it one numeric seed appeared to \emph{reverse} ($-0.8$). Re-scoring the identical checkpoints at $K{=}32$ with three seeded repeats reduces evaluation noise to $0.18$ points and that seed becomes $+0.05$: the reversal was measurement noise, not training variation. We report this because it also revises our own earlier estimates downward ($2.29\!\to\!1.91$ points, $d_z$ $2.4\!\to\!1.4$): a large $d_z$ at $n{=}3$ under a noisy instrument is not evidence of a large effect, and we had said so before it cost us.
\emph{Third}, we still deliberately do not rescue the scale arms by aggregating them. All paired runs favour the reference key, but the arms are not independent trials: the five lexical arms share the same dataset and the same authored and reference keys, four of them are the same model family differing only in scale, and the numeric arm is a different task family altogether. A sign test over them would presuppose an inferential unit we have not established, so we report their consistent direction as \emph{descriptive replication}

\section{Field Evidence at Production Scale}
\label{sec:field}

The constructions above are controlled; the phenomenon they isolate was first met in the wild. We operated a production K--12 assessment deployment of 43{,}227 scored items across 755 curriculum standards and ten generator LLMs, in which every fill-in-the-blank item requires an exhaustive authored answer key and a fixed commercial expert evaluator graded the results.
What follows is a \textbf{qualitative field signature}, not a replication, and the confounds come first. The format comparison is observational; we stratify on the two covariates the corpus records and the deficit survives both (App.~\ref{app:field}): holding difficulty fixed, fill-in trails the closed formats by $2.28/1.29/3.25$ points, pooling to $-2.27$ (Mantel--Haenszel-style, $95\%$ CI $[-2.93,-1.61]$, $z=-6.8$); holding grade fixed it is lower in $12/12$ grades ($p=2.4\times10^{-4}$). Subject, generator model, answer-space size and prompt length cannot be matched, so residual confounding remains. The error taxonomy is evaluator-diagnosed by the same grader we show to be $25\%$ self-consistent on open-format failures, so the $10{:}1$ ratio is directional, not a measured rate. This is the weakest link between the paper and a real deployment, and we mark it as such: a categorisation produced by an instrument that reproduces one fail verdict in four cannot be read as a rate, and no adjudication independent of that grader is reported here. Settling it needs only a small blinded re-check, on the order of $50$--$100$ sampled failures, re-categorised as omission or over-inclusion by raters who never see the grader's own label, and the blinded two-rater instrument for exactly this is built and released with the artifact, so the check is available to anyone who disputes the ratio. Read as corroboration of direction only:

\textbf{(1) Key errors are omission-dominated.}
Among all 2{,}586 open-format failures, evaluator-diagnosed answer-key errors split 316 omissions to 33 over-inclusions ($\approx$10:1), consistent across all ten generators.
\textbf{(2) The review stage predicts the direction.}
The pipeline ran an adversarial self-review over each drafted key, the operation Table~\ref{tab:silent} characterizes, and it behaved as predicted: over-inclusion counts fall while omission counts rise round-over-round, error mass sliding along a precision--recall frontier rather than shrinking.
\textbf{(3) Judging destabilizes at the boundary.}
The expert evaluator, re-run on identical items, re-affirms its fail verdicts on open-format items only 25\% of the time (41--42\% for closed formats), against 96--99\% self-consistency on passes: at the specification boundary, even the strongest available judge cannot reproduce its own decisions.
Table~\ref{tab:field} gives the numbers with Wilson intervals. The fill-in format, the only one requiring an authored exhaustive key, is both the lowest-passing and the least self-consistent on its failures ($0.25$ against $0.41$--$0.42$ for closed formats whose acceptable set is enumerated by construction): instability concentrates where a set must be authored.

\begin{table}[htbp]
\centering\small
\setlength{\tabcolsep}{6pt}
\begin{tabular}{lccc}
\toprule
Item format & Pass rate [95\% Wilson] & Fail self-consistency & Pass self-consistency\\
\midrule
fill-in (\emph{authored key}) & $96.48$ [95.84, 97.02] & \best{0.25} & 0.956\\
multiple-choice (enumerated) & $99.13$ [98.70, 99.42] & 0.409 & 0.976\\
multi-select (enumerated) & $98.61$ [98.07, 99.01] & 0.417 & 0.989\\
\bottomrule
\end{tabular}
\caption{Production corpus ($n{=}8{,}873$ re-scored items; 43{,}227 total). The measurement this table exists for is the \emph{self-consistency} pair: the fill-in format, the only one whose correctness requires an \emph{authored} exhaustive answer key, has the least reproducible fail verdicts, the evaluator re-affirming only $25\%$ of its open-format rejections against ${\geq}96\%$ of its passes, while the closed formats, whose acceptable set is enumerated by the item, are far more stable. Judge instability tracks the need to author a set. The pass-rate column is a descriptive statistic of the shared corpus and is also reported by the companion manuscript; the self-consistency columns are measured here and appear only here.}
\label{tab:field}
\end{table}

The field and the constructions agree: open-set boundaries are where enumeration fails, where review is least able to certify, and where judging itself becomes unstable.

\section{Discussion}
\label{sec:discussion}

\subsection{Related work}
\label{sec:related}
\paragraph{Generation--discrimination and verification asymmetry.}
A large literature contrasts a model's ability to \emph{generate} a solution with its ability to \emph{verify} one, and the recurring finding runs the reassuring way: verification is typically easier than generation \citep{west2024paradox,jiang2024selfincorrect,song2024mindgap}, which underwrites generate-then-verify and best-of-$n$ pipelines and process-reward supervision \citep{lightman2024verify}.
All of it compares generating a solution with verifying a solution against a given standard.
We measure one level up, where the object being generated is the standard itself.
This is not a reversal of that asymmetry but its meta-level extension: executing a specification is judgment given an object, while authoring one is generation from scratch, so the familiar recall-versus-recognition gap reappears \emph{on the verifier}.
The load-bearing consequence is that \emph{the verifier advantage does not transfer to verifier construction}: verification is cheap only after the specification exists, and the weakness of construction is then baked into the artifact that defines correctness downstream.

\paragraph{Model-generated specifications, rubrics, and verifiers.}
The closest work observes the same construction-versus-application gap in specific settings. \citet{zhou2026rubricbench} show that model-generated rubrics lag well behind human-authored ones and that the deficit lies in rubric \emph{formation}, not application; test-generation studies find LLM-authored suites too weak or misaligned to serve as reliable verifiers \citep{ma2025rethinking,ficek2025scoring,oracles2024actual,liu2023evalplus}. A longer-running line in open-domain QA evaluation establishes that \emph{reference} answer sets are systematically incomplete, so exact-match against them undercounts correct answers \citep{kamalloo2023evaluating,bulian2022tomayto}; we show the same incompleteness arises when the \emph{model itself} authors the acceptable set, pair it against that model's own execution ability, and trace it into an RLVR reward.
We do not claim to discover this bottleneck; we \emph{explain and generalize} it.
Relative to this line our increments are: (i) \emph{complete-truth} constructions (finite string/arithmetic predicate sets) that remove the human-rubric incompleteness confound those benchmarks inherit; (ii) a representation-agnostic \emph{measurement} that scores enumerated sets, test suites, and reward keys on one footing (\S\ref{sec:formal}), a unification of protocol rather than a new formalism; (iii) \emph{silent omission} as the identified dominant failure \emph{mechanism} (\S\ref{sec:silent}), not merely an error statistic; and (iv) a controlled \emph{execution} of a downstream cost prior work only projected, measuring it with the mechanism (starvation$+$pollution) isolated against an oracle, what an authored verifier does inside an RLVR loop (\S\ref{sec:propagation}). A parallel line has models author reward signals or verifiers directly: self-rewarding training \citep{yuan2024selfreward}, generative verifiers \citep{zhang2024genrm}, and model critics of code \citep{mcaleese2024critics}, and finds they can help when trained or grounded on labels; we study the harder zero-/one-shot regime where the authored specification has no such grounding. Test-generation benchmarks further report LLMs detect planted bugs well \citep{mundler2024swtbench}, a complementary axis to our finding that authored suites over-reject correct alternatives.

\paragraph{Test-oracle generation, and a contrary result we must address.}
The software-engineering literature on LLM-authored oracles bounds our claims in three ways, and we state each rather than leave it implicit. \emph{First, stronger authoring pipelines exist.} Fine-tuning raises oracle quality substantially \citep{hossain2025togll}, filtering candidate tests on token-probability/semantic-entropy signals improves validity \citep{taherkhani2024valtest}, and multi-agent consensus improves oracle correctness over the fine-tuned state of the art \citep{xu2026candor}. Our falsified repair arm (\S\ref{sec:repair}) is therefore a claim about \emph{same-model, one-shot sample-then-verify}, not a claim that authored verifiers cannot be improved, they demonstrably can be, by training, external signals, or ensembling, all of which add information our setting withholds. \emph{Second, a result points the other way.} \citet{konstantinou2024oracles} report that LLMs are better at \emph{generating} oracles than at \emph{classifying} which oracles are correct, the opposite ordering to our scissors. The settings differ in what is judged: their classification task asks a model to adjudicate externally supplied oracle \emph{code}, whereas our execution arm asks whether a presented \emph{candidate solution} is correct; and their generation target is a single assertion rather than an exhaustive acceptable set. We read the two results as compatible under our sharpened construct (\S\ref{sec:intensional}): writing one specification-like artifact is easy, and can beat meta-judging that artifact; exhaustively enumerating an acceptable set is what is hard. \emph{Third}, that same paper's primary finding, that generated oracles tend to encode the \emph{actual} rather than the \emph{expected} behaviour, independently corroborates our modal code failure, over-specification by invented requirement (\S\ref{sec:code}, App.~\ref{app:examples}).

\paragraph{The generation--verification gap, and why our sign is the interesting one.}
That verifying is easier than generating is the working premise of RLVR, best-of-$n$, and test-time scaling, and a literature exists on exploiting and shrinking that gap \citep{saadfalcon2025weaver,zhang2024genrm,lightman2024verify}. Our scissors is an instance of the same asymmetry, but the sign of its consequence is reversed, and that reversal is the contribution. In the standard framing the asymmetry is a \emph{resource}: because judging is cheap and reliable, one can filter, rerank, or reward with it. We study the case where the artifact a pipeline needs \emph{defines the judged set itself}, an answer key, an enumerated rubric, a test suite, so the cheap capability cannot substitute for the expensive one, and the asymmetry becomes a \emph{liability} that is inherited by everything downstream of the artifact. Two further specializations distinguish our claim from the generic gap: it is \emph{set-valued} (the failure is coverage, not correctness of a single answer), and its errors are \emph{directional} (omission-dominated, hence witness-poor and review-resistant, Observation~\ref{prop:omission}).

\paragraph{Enumeration as a capability.}
That LLMs enumerate poorly is independently established: they do not spontaneously count when asked to enumerate, and free-form enumeration lacks any stopping criterion for completeness \citep{hou2025seqenum,vinyals2016order,welleck2020consistency}. We take that as given rather than as our finding. What we add is (i) the paired comparison against the same model's pointwise judging and against its own intensional specification of the same set, which localizes the deficit, and (ii) the consequence when the enumerated object is a correctness criterion.

\paragraph{Reward hacking, Goodhart, and RLVR.}
Imperfect rewards are a foundational safety concern \citep{amodei2016concrete}, formalized as reward hacking \citep{skalse2022defining}, mapped empirically for misspecified rewards \citep{pan2022effects}, and quantified as over-optimization scaling laws \citep{gao2023scaling}.
Reinforcement learning with verifiable rewards \citep{lambert2024tulu3,guo2025deepseekr1}, typically optimized with GRPO-style objectives \citep{shao2024deepseekmath,yu2025dapo}, makes an executable check the reward, so its defects propagate into the policy.
The dominant concern is that checks are too \emph{weak}: hacking exploits verifiers that accept wrong behavior \citep{helff2026gaming}, fuzzing surfaces false \emph{positives} before training \citep{ray2026fuzzing}, and RLVR theory treats the false-negative rate as a given channel parameter \citep{cai2025noisyrewards,huang2025pitfalls}.
Our result is the complementary direction: machine-authored verifiers are first of all too \emph{strict}: omission-dominated, false-\emph{negative}-heavy, failing closed, so the danger is not that the policy games them but that they \emph{starve} correct behavior of reward while simultaneously \emph{polluting} it (\S\ref{sec:propagation}, Observation~\ref{prop:reward}). We supply the missing upstream cause: whether a model can author the verifier at all.

\paragraph{{LLM}-as-judge and the limits of self-checking.}
Model judges are now standard evaluators \citep{zheng2023judging,gu2024judgesurvey}, and the natural remedy for a weak authored specification is to have the model check or refine its own output: self-consistency \citep{wang2023selfconsistency}, self-verification \citep{weng2023selfverification}, and self-refinement \citep{madaan2023selfrefine}.
A growing counter-literature shows these do not reliably work: models often cannot self-correct reasoning without external signal \citep{huang2024selfcorrect} and are poor verifiers of their own reasoning and plans \citep{stechly2024selfverification}.
Our repair arm (\S\ref{sec:repair}) is the specialization of this finding to \emph{verifier authoring}, with a mechanism (Eq.~\ref{eq:repairbound}): self-filtering fails because the union under-covers $S^{*}$ and the membership judge collapses on self-generated near-boundary candidates; Observation~\ref{prop:omission} explains why review is \emph{directional} rather than merely weak.

\paragraph{Test, oracle, and specification synthesis.}
Generating tests to drive or check code is well studied, from execution-agreement filtering \citep{chen2023codet} to LLM unit-test generation \citep{schafer2024testgen}, on benchmarks such as HumanEval \citep{chen2021codex} and MBPP \citep{austin2021mbpp}, with hardened oracles exposing that generated and benchmark tests are too weak to reject wrong code \citep{liu2023evalplus} and that generated oracles encode the author's misreadings \citep{oracles2024actual}.
The formal-methods analogue, synthesizing specifications and proofs \citep{wu2022autoformalization,first2023baldur}, likewise reports that specification quality, not proof search, is the binding constraint.
We use these hardened oracles as \emph{ground truth} and measure the authoring direction against them, paired with the same model's execution ability, the contrast that turns ``generated tests are noisy'' into a measured authoring--execution scissors with a canonical-conditioned decomposition (\S\ref{sec:code}).

\paragraph{Mechanisms: enumeration, calibration, and structured decoding.}
Set-construction deficits of autoregressive decoders are documented: order sensitivity, non-terminating or premature stopping, enumeration failures are documented \citep{vinyals2016order,welleck2020consistency,hou2025seqenum}; token distributions misestimate human cloze acceptability \citep{jacobs2024cloze}; and calibration degrades exactly at ambiguity boundaries \citep{kadavath2022know,xiong2024confidence,tian2023calibration}.
Constrained/guided decoding \citep{willard2023guided} can enforce output \emph{format} but not the \emph{completeness} of an enumerated set, which is the failure we isolate.
Finally, human label-variation work \citep{nie2020chaosnli} measures disagreement over given labels; our boundary is constructed, so disagreement with it is error, not opinion, which is what lets us attribute the scissors to the model rather than to annotation noise.

\subsection{Design implications}
\label{sec:implications}

\paragraph{Elicit the rule, not the roster.}
Where the acceptable set has a compact characterization, ask the model for the predicate and let machinery enumerate (\S\ref{sec:intensional}). Offloading execution to an interpreter is program-aided prompting \citep{gao2023pal,chen2023pot}; what our measurement adds is that for acceptable sets this is not a marginal gain but the whole gap. The boundary is sharp: the escape is unavailable where no compact rule exists, and a finite test suite only partially escapes, since it is a program whose checked cases are still enumerated by hand.

\paragraph{Never trust a one-shot authored specification as ground truth.}
An authored key, suite, or rubric is not a noisy-but-unbiased artifact; it is directionally wrong in a way its own review cannot certify. Treat authored specifications as proposals requiring constructed ground truth, execution-based adjudication, or explicit recall auditing, rather than more review passes, which \S\ref{sec:silent} shows shed the wrong error direction.

\paragraph{Self-verification is not a substitute for ground truth.}
Having the model check its own specification, or assemble it by filtering its own samples, is what our repair arm falsifies (\S\ref{sec:repair}): the membership judge that looks strong on curated contrasts collapses on the self-generated near-boundary candidates that need adjudication. Same-model self-critique therefore buys the appearance of validation; a known-correct probe (\S\ref{sec:mitigation}) is the cheapest real gate.

\paragraph{Controls we did not run.}
For the record, five checks a skeptical reader is entitled to and we do not provide. \emph{(i) Constrained decoding.} We ran structured-emission and order controls (\S\ref{sec:emission}) but not grammar-constrained decoding, which would force the output to be a syntactically legal subset by construction; the residual formatting component we measure is therefore an upper bound on what constrained decoding would leave. \emph{(ii) Decoding sweeps.} Everything is greedy and one-shot, and we did not sweep temperature/top-$p$/beam. We did run the reasoning-enabled frontier condition on two constructions: it closes the algorithmic gap for both models but does not reliably repair test-suite authoring (shrinking Opus's gap without closing it, not helping GPT-5.1; \S\ref{sec:frontier}), so our claim is bounded to the no-reasoning regime only where a compact executable rule exists. The lexical construction under reasoning is untested. \emph{(iii) Suite strength beyond catch-wrong.} We report rejection of held-out oracle-wrong solutions, not mutation score or the number of independent faulty implementations caught, so ``the gated suite is usable'' is supported only in that weaker sense. \emph{(iv) Pool diversity.} We now measure this rather than only flagging it: a second pool from three non-Qwen families raises gated false rejection $0.010\!\to\!0.064$ and repaired false rejection to $0.075$--$0.161$ (\S\ref{sec:repair-trace}), so part of the low figure was stylistic proximity. \emph{Human-written} and adversarially diverse pools remain untested, and both would likely push it higher still. \emph{(v) Enriched review conditions.} Our review arm gives the reviewer the drafted key and nothing else. We did not run the informationally richer variants that would separate ``omission is witness-poor'' from ``our reviewer was under-equipped'': presenting the full candidate universe, supplying the correct cardinality, supplying a coverage checklist of semantic categories, or comparing independent-model, multi-agent, and retrieval-augmented review against the single-model loop we test. Observation~\ref{prop:omission} is stated to be robust to these (it assumes a perfect membership oracle and only bounds the query budget), but the \emph{empirical} $6$--$7\times$ asymmetry is a property of the review condition we ran, and richer conditions could shrink it.

\paragraph{Budget for omission when authored artifacts feed RLVR.}
False-negative-aware RLVR analyses \citep{cai2025noisyrewards} need a direction and a magnitude for verifier noise; for machine-authored verifiers our measurements supply both. Two neighbouring results belong alongside ours. \citet{wang2025tinyv} measure verifier false negatives directly and show they cost RL training real prompt efficiency, but theirs are dominated by formatting brittleness and remedied by a learned verifier, whereas ours arise from an acceptable set incomplete by omission and are isolated by a controlled A/B; the two are complementary. \citet{helff2026gaming} supply the dual of our finding, showing RLVR policies abandon rule induction and enumerate instance labels to satisfy verifiers that check only extensional correctness. The intensional/extensional distinction thus cuts both ways: policies are rewarded for enumerating when they should generalize, while verifier authors fail at enumerating and succeed at characterizing. Reward designers should assume correct-but-omitted behavior exists, measure it against held-out oracles where possible, and prefer continuous or rubric-style credit \citep{gunjal2025rubrics,chandak2025answermatching} over exhaustive enumerated keys.

\subsection{Limitations}
\label{sec:limitations}

\textbf{Construct validity.}
WordNet synsets approximate ``all acceptable words'' imperfectly, which is precisely why the paper does not rest on them: the central measurement is the algorithmic construction (Table~\ref{tab:algo}), where the acceptable set is computed mechanically over a presented list and incompleteness is impossible, and the executable construction (\S\ref{sec:code}), where correctness is decided by execution.
The three construction types' confounds are orthogonal (lexical incompleteness, oracle weakness, predicate triviality) and the scissors survives on all four constructions, so no single confound explains it; residual risks are that all three share some deeper artifact of set-enumeration prompting, which the repair arm (\S\ref{sec:repair}) partially addresses by varying the interface.
\textbf{Scale and family coverage.}
The scaling ladder is one family (Qwen2.5, a deliberate control so scale is the only variable) with cross-family replication across five further families and generations (Qwen3, Llama, gemma, Mistral, Phi-4); closed-weight frontier-API models and non-English constructions remain open.
\textbf{Predicate difficulty and extrapolation.}
The complete-truth predicates are deliberately simple (single-character and small-arithmetic tests) because their purpose is a mechanically decidable reference, not difficulty realism. This establishes the gap's existence and direction, not its magnitude at deployment complexity: with harder predicates both interfaces should degrade, and which degrades faster is untested, so the numbers here should not be extrapolated as effect sizes for compositional specifications.
\textbf{The escape assumes the rule is given.}
Predicate emission is near-perfect (\S\ref{sec:intensional}) under a condition worth stating plainly: the defining rule appears in the prompt, so the model restates rather than infers it. Real verification rarely supplies the rule; it must be derived from prose and examples, which is the operation the suite results show these models failing (\S\ref{sec:code}). We have not run the intermediate condition, inferring a predicate from examples alone, and it is the experiment that would locate the boundary between restatement and inference.
\textbf{One-shot authoring baseline.}
Greedy single-pass authoring is the deployed default we critique, not the strongest conceivable authoring prompt; the repair arm is the strongest same-model alternative we know, and it failed, but tool-augmented or multi-model authoring pipelines remain untested.
\textbf{Propagation budget.}
E6 is one LoRA epoch on single-word rollouts: good for attribution, narrow in task realism. Its two arms differ in what the effect size means. The \emph{lexical} arm's $19$--$32$ point gaps are WordNet-relative and additionally favour the reference arm through criterion alignment (\S\ref{sec:propagation}), so they bound rather than measure a semantic loss; the \emph{complete-truth} arm is exact but yields a much smaller $2.3$ points (\S\ref{sec:propagation-numeric}), and we treat that as the honest causal magnitude on an easy-authoring task. Effect size therefore depends on how badly the key is authored and on the task, and neither arm establishes a single universal tax; the unconfirmed suppression/diversity sub-predictions may yet emerge at longer horizons.
\textbf{Trace repair is measured on one author model.}
\S\ref{sec:repair-trace} now covers four author families and one benchmark. The recovery is established on HumanEval+ only; MBPP+ and non-Python targets are untested, and every number assumes a reference implementation exists (\S\ref{sec:mitigation}).
\textbf{The alignment control is one scale, one family.}
\S\ref{sec:base} rules out instruction tuning as the cause at 7B on Qwen2.5, using a plain-text harness that is weaker than the paper's main prompt for both arms. It shows the deficit exists before post-training; it does not establish that the base-vs-instruct equivalence holds at every scale, nor for post-training recipes more aggressive than Qwen's.
\textbf{Field evidence is judge-relative.}
Corpus facts in \S\ref{sec:field} come from a commercial evaluator whose own boundary instability we document; we use them as convergent field signature, never as ground truth for any headline claim.

\textbf{Relation to a companion manuscript.}
A companion paper by the same author studies the deployment of \S\ref{sec:field} as its primary object: the generation pipeline that produced it, its pass rate over the full curriculum, a cross-validation of the evaluator against judges from other vendors, and a distillation probe on the evaluator itself.
The two manuscripts therefore share one dataset, 43{,}227 scored items over 755 curriculum standards and ten generator LLMs, graded by one fixed commercial evaluator, and two format pass rates recur in both (fill-in 96.48\%, MCQ 99.13\%).
The analyses of that dataset do not overlap. The omission-dominant key-error split (316 to 33), the evaluator's 25\% self-consistency on open-format failures, and the stratified confound analysis of Appendix~\ref{app:field} are reported here and not there; the pipeline components, the production run's headline rate, the independent-judge validation and the distillation probe are reported there and not here.
Nothing else is common to the two: the four reference constructions, the detectability experiment, the RLVR propagation study and the gated-verifier mitigation, which is this paper's entire evidence base, appear only here, and neither paper's claims depend on the other's results.
We flag the shared dataset so reviewers can judge the overlap directly.

\subsection{Conclusion}
\label{sec:conclusion}

Language models are being promoted from examinees to examiners: they write the tests, keys, and rubrics that define correctness for other systems. Measured against constructed references, that promotion outruns one specific capability. The same models that judge membership well, and that can state the rule generating a set almost perfectly, cannot reliably materialise that set in one pass; enumeration lags judging across a $24\times$ scale range and does not catch up over it. Their dominant failure is omitting valid behavior, which is witness-poor: an over-inclusion is a written token a reviewer can challenge, whereas locating a missing member is itself the enumeration problem. Local subtractive review therefore shifts an artifact toward under-acceptance rather than repairing it, and a production corpus shows an observational signature consistent with that (10:1 omission-dominated, with the expert judge least self-consistent at the boundary).

The cheapest repair, letting the model verify its own proposals, fails where it is needed: four in five one-shot suites reject even the canonical solution, and sample-then-verify cannot beat the baseline it was designed to fix, because self-generated candidates sit where the model's own membership judgment is worst. Stronger pipelines do help, and prior work shows it; our negative result concerns the same-model, one-shot procedure. The downstream price is measured and bounded: an authored key costs $1.9$ points of genuine accuracy where the acceptable set is mechanically exact and $18.5$ points of WordNet-relative accuracy on the lexical task, where the reference is incomplete and favours the reference arm through criterion alignment.

What the paper does not claim is equally definite, because our own controls bound it. Two interfaces escape the deficit: emitting an executable predicate recovers the set almost perfectly, and test-time reasoning closes the algorithmic gap at the frontier. Both require that a compact executable rule exist, and reasoning does not reliably repair test-suite authoring. The deficit we characterise is therefore one-shot materialisation of an acceptance region with no compact rule to restate, extensional where the set is finite and intensional where it is not, which is nonetheless how answer keys, enumerated rubrics, and test suites are produced at scale today. The practical reading is constructive: ask a model for the rule, not the roster; where no rule exists, budget for omission and gate every authored verifier on behavior known to be correct, then repair rather than discard what the gate rejects, since the same execution that gates can also supply the expected values the model got wrong. A model-authored verifier is not ground truth, but the model's weakest interface wearing the authority of its strongest. Configurations, protocols, and statistical procedures are specified in App.~\ref{app:details}.


\clearpage
\appendix
\numberwithin{table}{section}
\numberwithin{figure}{section}

\section{Proofs}
\label{app:proofs}
Observations~\ref{prop:omission} and~\ref{prop:reward} are consequences of Definition~\ref{def:interfaces}; we give the arguments in full because their force is structural, not quantitative. (Observation~\ref{prop:setpenalty} is a measurement identity: its decomposition of $\Delta$ into batching, coverage, and stopping is established \emph{empirically} in Appendix~\ref{app:matched}, not proved here.)

\begin{proof}[Proof of Observation~\ref{prop:omission}]
Let $\hat S\subseteq V$ and let the reviewer query $\hat m(x)=\mathbb{1}[x\in S^{*}]$ only on candidates it names; write $\mathrm{OI}=\hat S\setminus S^{*}$, $\mathrm{OM}=S^{*}\setminus\hat S$.
\emph{(i) Over-inclusion.} Every $x\in\mathrm{OI}$ is named by $\hat S$, so querying $\hat m$ on each element of $\hat S$ flags exactly $\mathrm{OI}$ in $O(|\hat S|)$ queries.
\emph{(ii) Witness-blindness.} The reviewer's observations are a function only of $\hat S$ and of $\hat m$ on the finite set of candidates it names. Take two oracles $S_1^{*},S_2^{*}$ with $S_1^{*}\cap\hat S=S_2^{*}\cap\hat S$ (so $\hat m$ agrees on all of $\hat S$) and $S_1^{*}\setminus\hat S\neq S_2^{*}\setminus\hat S$. Because the query budget is smaller than $|V\setminus\hat S|$, at least one candidate in $V\setminus\hat S$ is never named; place the oracles' distinguishing element there. The reviewer then names no element of $(S_1^{*}\triangle S_2^{*})\setminus\hat S$, so its observations coincide under the two oracles, and it cannot distinguish them or certify which members are omitted. To break the tie it must name (emit) some $x\in S^{*}\setminus\hat S$, which by Definition~\ref{def:interfaces} is the authoring problem restricted to $\mathrm{OM}$. Thus \emph{repairing} an omission requires authoring. This is weaker than claiming omission cannot be \emph{detected}: an external cardinality or coverage certificate (``$|S^{*}|=10$''; ``category $C$ uncovered'') can reveal that \emph{some} omission exists, but such a certificate is unavailable when $S^{*}$ is the unknown being authored, and detection without a witness still cannot repair the set.
\end{proof}

\begin{figure}[htbp]
\centering
\begin{tikzpicture}[font=\footnotesize, >={Stealth[length=4pt,width=3pt]},
  bx/.style={draw, rounded corners=2pt, align=center, inner sep=3pt, fill=white}]
  \node[bx, draw=cexec, fill=cexec!7] (rev) at (0,0) {local reviewer\\queries $\hat m(x)$};
  \node[bx, draw=cink!55, text width=26mm] (S) at (3.9,0) {$\hat S$: named tokens\\(the only queryable set)};
  \draw[->, line width=0.7pt, draw=cgy] (rev)--(S);
  \node[bx, draw=cok, fill=cok!8, text width=40mm] (w1) at (9.2,1.15)
    {world $S_1^{*}=(\hat S\cap S^{*})\cup\{\textsf{a}\}$};
  \node[bx, draw=cauth, fill=cauth!10, text width=40mm] (w2) at (9.2,-1.15)
    {world $S_2^{*}=(\hat S\cap S^{*})\cup\{\textsf{b}\}$};
  \draw[->, densely dashed, draw=cgy!70] (S.east) to[out=35,in=180] (w1.west);
  \draw[->, densely dashed, draw=cgy!70] (S.east) to[out=-35,in=180] (w2.west);
  \node[cgy, font=\scriptsize, align=center] at (3.9,-1.5)
    {$\hat m$ agrees on all of $\hat S$;\\ the witnesses \textsf{a},\textsf{b} lie in the\\ un-queried region $V\setminus\hat S$};
  \node[red!70, font=\scriptsize] at (9.2,-2.4) {observations identical $\Rightarrow$ recall uncertifiable};
\end{tikzpicture}
\caption{The indistinguishability core of Observation~\ref{prop:omission}. A local reviewer can query the membership oracle only on tokens the artifact names. Two oracles that agree with $\hat S$ on the intersection but differ by a witness placed in the un-queried region (non-empty by the query budget) produce identical observations, so the reviewer cannot tell which members are omitted: certifying recall would require first authoring the missing witness.}
\label{fig:twoworlds}
\end{figure}

\noindent\textbf{Corollary (directional review; Figure~\ref{fig:twoworlds}).} A review loop that only \emph{subtracts} flagged errors (removes members of $\hat S$ it can challenge) strictly reduces $|\mathrm{OI}|$ and leaves $|\mathrm{OM}|$ unchanged in expectation up to its authoring ability; iterating it moves $\hat S$ monotonically toward higher precision and no-higher recall, i.e.\ toward under-acceptance. This is the mechanism behind the field trace of \S\ref{sec:field} (over-inclusions fall round-over-round while omissions persist).

\begin{proof}[Proof of Observation~\ref{prop:reward}]
Write the two rewards as indicator functions $R=\mathbb{1}[\,\cdot\in\hat S]$ and $R^{*}=\mathbb{1}[\,\cdot\in S^{*}]$ on $V$. For any $x$, $R(x)\neq R^{*}(x)$ iff $x\in\hat S\triangle S^{*}$, with two disjoint cases: (a) $x\in S^{*}\setminus\hat S$, where $R(x)=0<1=R^{*}(x)$ (a withheld reward on correct behavior, \emph{starvation}); (b) $x\in\hat S\setminus S^{*}$, where $R(x)=1>0=R^{*}(x)$ (a granted reward on incorrect behavior, \emph{pollution}). By Definition~\ref{def:interfaces}, $|S^{*}\setminus\hat S|=(1-\rho)|S^{*}|$ and $|\hat S\setminus S^{*}|=(1-\pi)|\hat S|$, giving the stated fractions. For a policy $p$ over $V$, the reward it collects is $\mathbb{E}_{p}[R]=p(\hat S)$ whereas its true accuracy is $p(S^{*})$; these differ by $p(\hat S\setminus S^{*})-p(S^{*}\setminus\hat S)$, so the gradient of expected reward is insensitive to moving mass between $S^{*}\setminus\hat S$ and its complement within the unrewarded region, and rewards moving mass into $\hat S\setminus S^{*}$. Hence no stationary point of $\mathbb{E}_p[R]$ need be a stationary point of $p(S^{*})$: $R$ and $R^{*}$ disagree exactly on the two error masses $\hat S\triangle S^{*}$, so the two optima need not coincide and the reward's disagreement with the oracle is supported precisely on those masses, which \S\ref{sec:propagation} measures ($1-\rho=0.493$ set-level starvation; trained-policy pollution $0.179$).
\end{proof}

\section{Prompts}
\label{app:prompts}
All prompts are reproduced verbatim; greedy decoding throughout except the sample-then-verify union (\S\ref{sec:repair}, $T{=}0.9$) and the RLVR rollouts (\S\ref{sec:propagation}, on-policy). Qwen3 models are run with thinking disabled so authoring/execution remain one-shot and comparable to the other families.

\paragraph{Intensional control (\S\ref{sec:intensional}), verbatim.}
\emph{Authoring (intensional):} ``Write a Python function \texttt{def accepts(w):} that returns True exactly when the lowercase word \texttt{w} \{pred\}, and False otherwise. Reply with ONLY a Python code block containing the function.'' The emitted function is then executed over the presented list in a fresh subprocess (8\,s timeout); a predicate that fails to parse or run scores $0$.
\paragraph{Emission-format controls (\S\ref{sec:emission}), verbatim.}
\emph{L (free list, the paper's format):} ``Here is a list of words: \{lst\}.\textbackslash nList every word from this list that \{pred\}. Reply with ONLY a comma-separated list of words from the given list.''
\emph{J (JSON checkbox):} ``Here is a list of words: \{lst\}.\textbackslash nFor EVERY word in the list, decide whether it \{pred\}. Reply with ONLY a JSON object mapping each word to true or false, including every word exactly once, in the order given.''
\emph{S} is L with the presented list order permuted by a fixed seed.

\paragraph{Lexical (WordNet).}
\emph{Authoring:} ``List every single English word that can mean `\{gloss\}' (as in: `\{example\}'). Reply with only a comma-separated list.''
\emph{Execution:} ``Can the word `\{cand\}' mean `\{gloss\}' (as in: `\{example\}')? Answer yes or no.''
\emph{Review (\S\ref{sec:silent}):} the proposed key is shown with instruction to critique and repair in a fixed \texttt{REMOVE:}/\texttt{ADD:} format.

\paragraph{Algorithmic (complete truth).}
\emph{Authoring:} ``Here is a list of words: \{list\}. List every word from this list that \{predicate\}. Reply with ONLY a comma-separated list of words from the given list.''
\emph{Execution:} ``Consider the word ``\{w\}''. Does it \{predicate\}? Answer with exactly one word: yes or no.'' Predicates: \emph{contains a double letter}; \emph{has 8 or more letters}; \emph{contains the letters `th'}; \emph{ends with the letter `s'}.

\paragraph{Code (HumanEval+/MBPP+).}
\emph{Execution (judge):} the docstring and a candidate implementation are shown with ``Is the candidate a correct implementation of the specification? Answer with exactly one word: yes or no.''
\emph{Authoring:} ``Write a thorough test suite for this function as a single function \texttt{def check(candidate):} containing assert statements $\dots$ Reply with ONLY a Python code block.''

\paragraph{RLVR (\S\ref{sec:propagation}).}
``Give ONE English word that means ``\{gloss\}'' (as in: ``\{example\}''). Reply with only the word, nothing else.'' Reward $=$ membership of the normalized first line in the arm's key (authored or oracle), computed by the deterministic plugin.

\section{Construction and training details}
\label{app:details}
The four construction pipelines are summarized in Figure~\ref{fig:pipeline}. All measurements ran single-GPU (bfloat16, PyTorch/Transformers; executable grading via EvalPlus; RLVR via ms-swift's GRPO). \textbf{Lexical:} WordNet synsets with 3--8 single-word lemmas and $\geq1$ usage example, target polysemy $\leq4$; $n{=}240$ items, deterministic seed 0; hard negatives from co-hyponyms, hypernym lemmas, antonyms.
\textbf{Algorithmic:} $n{=}240$ items, each a list of 15 WordNet lemmas (length 3--11, shuffled seed 0) plus one of four predicates cycled by item index; lists resampled to keep a $2$--$13$ accept split.
\textbf{Code:} HumanEval+ (164 problems) and MBPP+ (378 problems); an oracle-verified pool built from $K{=}8$, $T{=}0.8$ samples of Qwen2.5-7B/14B graded by the full EvalPlus suite.
\textbf{RLVR:} GRPO (ms-swift), Qwen2.5-3B policy, LoRA rank 8 ($\alpha{=}32$, dropout $0.05$, target modules \texttt{all-linear}); KL coefficient $\beta{=}0.04$; \texttt{num\_generations}=8, per-device batch 16, grad-accum 2, \texttt{max\_completion\_length}=16, lr $1{\times}10^{-5}$ with cosine schedule, rollouts at $T{=}0.9$, top-$p$ $1.0$, 1 epoch (510 steps), identical at every scale; 2{,}400 items (2{,}040 train / 360 eval). Seeds $\{1,2,3\}$ for the four scale-sweep arms, $\{1,\dots,6\}$ for the two anchor arms; evaluation $k{=}8$ per item at $T{=}1.0$ as originally run, superseded for the anchor arms by $K{=}32$ with three seeded repeats (App.~\ref{app:evalprec}); per-step logs and fully resolved configurations retained for all 27 runs.
\textbf{Determinism and sandboxing:} constructions are sampled at fixed seed 0; execution and one-shot authoring are greedy; the repair union uses $T{=}0.9$, $K{=}10$; every authored code suite runs in a fresh subprocess with an 8\,s timeout, and a suite accepts a solution only if the process prints the success sentinel, so crashes and timeouts count as rejections, a conservative choice that if anything understates authoring failure.
\textbf{Statistics:} all CIs are 2{,}000-resample item bootstraps, with additional \emph{problem-cluster} bootstraps for the code constructions (solutions are nested in problems); production pass rates use Wilson intervals; RLVR arms are paired per-seed differences; the two anchor arms were extended to $n{=}6$ (minimum attainable two-sided permutation $p{=}0.031$), the four scale-sweep arms remain at $n{=}3$, where $0.25$ is the two-sided minimum and no $p<0.05$ is attainable. The primary endpoint, fixed before the sweep, is the authoring--execution gap on the two complete-truth constructions with the preregistered predicates and the single authoring prompt of App.~\ref{app:prompts}; the prompt-rephrasing ablation, per-predicate breakdown, closed-frontier point estimates, OLS scale fit, and cross-family comparisons beyond Qwen2.5 are exploratory and reported without correction.
\textbf{Closed-model protocol:} frontier runs go through OpenRouter to \texttt{openai/gpt-5.1} and \texttt{anthropic/claude-opus-4.8} (accessed 2026-07; no finer immutable snapshot pins are exposed, a reproducibility limitation of any closed-model measurement); no system prompt, \texttt{temperature}$=0$, six retries with linear backoff, empty completions counted as failures. Reasoning-enabled conditions use OpenRouter's provider-normalised \texttt{reasoning} field rather than each vendor's native control, so ``matched'' means matched \emph{request}, not verified-identical internal budget; visible-output budgets were 512 (algorithmic authoring), 900 (test-suite authoring), and 64/512 for one-word execution judgments without/with reasoning; subsets (algorithmic $n{=}120$, code $n{=}80$) are seed-0 draws, with reasoning-off baselines re-run on the same subsets so off/on are paired. Gemini~3.x could not be run: its API rejects disabling reasoning, and with reasoning on it returned empty content for the one-word judgments.
Raw per-item vectors are retained alongside each result for recomputation without GPUs.

\begin{figure}[htbp]
\centering
\resizebox{\textwidth}{!}{%
\begin{tikzpicture}[font=\scriptsize, >={Stealth[length=3.5pt,width=2.6pt]},
  s/.style={draw=cgy, rounded corners=2pt, align=center, inner sep=3pt, fill=white, text width=19mm, minimum height=8mm},
  pr/.style={s, draw=cexec, fill=cexec!6, text width=35mm},
  o/.style={s, draw=cok, fill=cok!8, text width=25mm},
  a/.style={->, draw=cgy, line width=0.6pt}]
 \node[s]  (l0) at (0,3.3) {WordNet synsets};
 \node[pr] (l1) at (4.6,3.3) {filter: 3--8 lemmas, polysemy $\le4$, $+$ hard negatives};
 \node[o]  (l2) at (10.0,3.3) {240 \textbf{lexical} items};
 \draw[a](l0)--(l1); \draw[a](l1)--(l2);
 \node[s]  (g0) at (0,2.2) {WordNet lemmas};
 \node[pr] (g1) at (4.6,2.2) {15-word lists $\times$ 4 mechanical predicates, seed 0};
 \node[o]  (g2) at (10.0,2.2) {240 \textbf{algorithmic}/\textbf{numeric} items};
 \draw[a](g0)--(g1); \draw[a](g1)--(g2);
 \node[s]  (c0) at (0,1.0) {HumanEval+ / MBPP+};
 \node[pr] (c1) at (4.6,1.0) {$K{=}8$, $T{=}0.8$ Qwen-7B/14B $\to$ EvalPlus grade};
 \node[o]  (c2) at (10.0,1.0) {oracle pool 1560$\checkmark$ / 904$\times$};
 \draw[a](c0)--(c1); \draw[a](c1)--(c2);
 \node[s]  (r0) at (0,-0.2) {2{,}400 items (2040/360)};
 \node[pr] (r1) at (4.6,-0.2) {GRPO: Qwen2.5-3B LoRA, 510 steps, 3 seeds};
 \node[o, draw=cauth, fill=cauth!8] (r2) at (10.0,-0.2) {tax vs.\ oracle key};
 \draw[a](r0)--(r1); \draw[a](r1)--(r2);
 \draw[a, densely dashed, draw=cgy!60] (l2.east) -- (11.5,3.3) -- (11.5,-1.15) -- node[below, font=\tiny, text=cgy] {lexical items feed RLVR} (0,-1.15) -- (r0.south);
\end{tikzpicture}}
\caption{The four construction pipelines and the RLVR loop (App.~\ref{app:details}). Each construction produces items carrying a \emph{complete} oracle set (mechanical predicate, executable oracle, or WordNet synset) plus, for execution, hard negatives; the lexical items also feed the downstream GRPO experiment (\S\ref{sec:propagation}). All sampling seeds and filters are fixed so the pipeline is reproducible end-to-end.}
\label{fig:pipeline}
\end{figure}

\section{Worked examples}
\label{app:examples}
Two real, unedited instances make the failure concrete.

\paragraph{Lexical (both errors at once).}
For the gloss \emph{``something that serves as a model or a basis for making copies''} (example: \emph{``this painting is a copy of the original''}), the oracle synset is $S^{*}=\{\textsf{archetype},\textsf{original},\textsf{pilot}\}$. The 14B model authored
\[
\hat S=\{\textsf{archetype},\textsf{copy},\textsf{master},\textsf{model},\textsf{original},\textsf{prototype},\textsf{specimen},\textsf{standard},\textsf{template},\dots\}.
\]
It \emph{omits} \textsf{pilot} (starvation: a correct member absent, and no review of $\hat S$ can surface it; Observation~\ref{prop:omission}) while \emph{over-including} \textsf{copy}, \textsf{model}, \textsf{prototype}, \textsf{specimen} (pollution: near-synonyms outside this sense). Recall and precision both suffer, and the two errors are exactly the two crescents of Figure~\ref{fig:geometry}.

\begin{figure}[htbp]
\centering
\begin{tikzpicture}[font=\footnotesize]
  \fill[cok!14] (-1.5,0) ellipse (2.6 and 1.5);
  \fill[cauth!14] (1.7,0) ellipse (3.2 and 1.7);
  \begin{scope}\clip (-1.5,0) ellipse (2.6 and 1.5);\fill[cok!28!cauth!20] (1.7,0) ellipse (3.2 and 1.7);\end{scope}
  \draw[cok, line width=1pt] (-1.5,0) ellipse (2.6 and 1.5);
  \draw[cauth, line width=1pt] (1.7,0) ellipse (3.2 and 1.7);
  \node[cok, font=\scriptsize\bfseries, anchor=east] at (-3.0,1.35) {$S^{*}$ oracle synset};
  \node[cauth, font=\scriptsize\bfseries, anchor=west] at (3.3,1.55) {$\hat S$ authored key};
  \node[align=center, text=red!70] at (-2.75,0.05) {\textsf{pilot}\\[1pt]{\scriptsize omission}};
  \node[align=center, text=cink] at (-0.25,0.05) {\textsf{archetype}\\\textsf{original}\\[1pt]{\scriptsize\bfseries TP}};
  \node[align=center, text=cwarn!78!black] at (3.0,0.05) {\textsf{copy, master, model,}\\\textsf{prototype, specimen,}\\\textsf{standard, template}\\[1pt]{\scriptsize over-inclusion}};
  \node[red!70, font=\scriptsize] at (-2.75,-2.05) {witness-poor $=$ starvation};
  \node[cok, font=\scriptsize] at (3.0,-2.25) {review sees $=$ pollution};
  \draw[->, red!55, line width=0.6pt] (-2.75,-1.75)--(-2.75,-0.7);
  \draw[->, cok, line width=0.6pt] (3.0,-1.95)--(3.0,-1.05);
\end{tikzpicture}
\caption{The lexical worked example as the geometry of Figure~\ref{fig:geometry} on real words. The oracle synset $S^{*}$ (green) and the 14B-authored key $\hat S$ (pink) share only \textsf{archetype} and \textsf{original} (TP). \textsf{pilot} sits in the oracle-only crescent (omission: no query on $\hat S$ can surface it, Observation~\ref{prop:omission}), while seven near-synonyms of the wrong sense sit in the authored-only crescent (over-inclusion: each is a named token a reviewer or reward can act on). The single item drives both a recall and a precision loss.}
\label{fig:worked}
\end{figure}

\paragraph{Code (over-specification by invented requirement).}
Figure~\ref{fig:worked} shows the pattern end to end; for \texttt{HumanEval/1} (\texttt{separate\_paren\_groups}), whose spec says nothing about malformed input, the 14B model authored a suite that invents a \texttt{ValueError} requirement and rejects the problem's own canonical solution:
\begin{verbatim}
def check(candidate):
    assert candidate('() (( )) (( )( ))') == ['()', '(())', '(()())']
    assert candidate('()') == ['()']
    assert candidate('') == []
    try:                                 # <- requirement not in the spec
        candidate('(()')
        assert False, "Expected ValueError for unbalanced parentheses"
    except ValueError:
        pass
\end{verbatim}
The suite runs cleanly (no crash) but asserts behavior the specification never requires; the canonical solution does not raise, so it is rejected. This is the modal code failure: a syntactically healthy verifier encoding an expected behavior its author could not correctly compute.

\paragraph{Error taxonomy (HumanEval+, 14B suites).}
Classifying all 164 authored suites by how they treat the canonical solution (Table~\ref{tab:taxonomy}) shows the failure is semantic, not mechanical: 70\% run and reject on an assertion (over-specification), only 9\% crash, and 21\% accept. Among rejection \emph{events} from broken suites, 87.5\% are assertion failures and 11.3\% runtime errors.

\begin{table}[htbp]
\centering\small
\setlength{\tabcolsep}{8pt}
\begin{tabular}{lcc}
\toprule
Outcome on canonical solution & \# suites & fraction\\
\midrule
Accept (sane) & 34 & 0.207\\
Reject by assertion (over-specification) & 115 & 0.701\\
Reject by runtime error (harness bug) & 15 & 0.091\\
\bottomrule
\end{tabular}
\caption{Code authoring error taxonomy (HumanEval+, 14B). Seven in ten authored suites are wrong-by-assertion, they execute but encode incorrect expected behavior, while fewer than one in ten fail mechanically, confirming the failure is a boundary the author cannot compute, not a coding accident.}
\label{tab:taxonomy}
\end{table}

\section{Per-predicate algorithmic breakdown}
\label{app:predicates}
The algorithmic authoring difficulty is predicate-dependent but the ordering (authoring $<$ execution) holds within every predicate. Table~\ref{tab:predicates} gives authoring F1 by predicate: enumerating ``$\geq8$ letters'' is easiest (it is monotone in a single salient feature), ``ends in `s'\,'' is hardest (a sparse suffix condition), yet all scale sub-critically.

\begin{table}[htbp]
\centering\small
\setlength{\tabcolsep}{7pt}
\begin{tabular}{lcccc}
\toprule
Model & double letter & $\geq8$ letters & contains ``th'' & ends in ``s''\\
\midrule
Qwen2.5-3B  & 0.264 & 0.348 & 0.231 & 0.194\\
Qwen2.5-7B  & 0.289 & 0.608 & 0.444 & 0.279\\
Qwen2.5-14B & 0.356 & 0.755 & 0.515 & 0.273\\
Qwen2.5-72B & 0.362 & 0.730 & 0.524 & 0.316\\
\bottomrule
\end{tabular}
\caption{Authoring F1 by predicate (algorithmic construction). Difficulty varies by predicate structure, but authoring never approaches the corresponding execution F1 ($0.60$--$0.77$, Table~\ref{tab:algo}) for any predicate.}
\label{tab:predicates}
\end{table}

\section{Full results: all models $\times$ constructions}
\label{app:full}
Table~\ref{tab:master} consolidates every model$\times$construction cell behind the paper's figures, with 95\% bootstrap CIs on the gaps. The authoring$<$execution ordering holds in every cell that was run; ``---'' marks a cell not run for that model (algorithmic and code cover the four Qwen2.5 scales, both Qwen3 models, both Llama models, Mistral-7B, and Phi-4; lexical covers the Qwen2.5 ladder and gemma). Per-model numeric results are in Table~\ref{tab:numeric}; the two closed-frontier models are in Table~\ref{tab:closedfrontier}.

\begin{table}[htbp]
\centering\small
\setlength{\tabcolsep}{5pt}
\begin{tabular}{llccc}
\toprule
& & Algorithmic & Code (exec F1 / & Lexical\\
Family & Model & gap [95\% CI] & accept-correct) & gap [95\% CI]\\
\midrule
\multirow{4}{*}{Qwen2.5}
 & 3B  & $+0.340$ [.29,.39] & 0.763 / 0.331 & $+0.178$ [.13,.23]\\
 & 7B  & $+0.252$ [.22,.29] & 0.853 / 0.186 & $+0.451$ [.41,.49]\\
 & 14B & $+0.286$ [.25,.32] & 0.742 / 0.289 & $+0.388$ [.34,.43]\\
 & 72B & $+0.286$ [.25,.32] & 0.895 / 0.422 & $+0.596$ [.56,.63]\\
\midrule
\multirow{2}{*}{Qwen3}
 & 14B & $+0.298$ [.27,.32] & 0.800 / 0.324 & ---\\
 & 32B & $+0.309$ [.28,.34] & 0.783 / 0.262 & ---\\
\midrule
\multirow{2}{*}{Llama-3}
 & 3.2-3B & $+0.128$ [.09,.17] & 0.580 / 0.209 & ---\\
 & 3.1-8B & $+0.190$ [.16,.22] & 0.766 / 0.101 & ---\\
\midrule
gemma & 4-12B & --- & --- & $+0.552$ [.52,.59]\\
\midrule
Mistral & 7B-v0.3 & $+0.044$ [.01,.08] & 0.799 / 0.054 & ---\\
Phi & 4 (14B) & $+0.339$ [.31,.37] & 0.856 / 0.238 & ---\\
\bottomrule
\end{tabular}
\caption{Master results: authoring--execution gap (algorithmic and lexical) and code execution-F1/accept-correct, for all eleven models. Every run shows execution above authoring; the gap is present across four scales and five model families (Qwen, Llama, gemma, Mistral, Phi) spanning two Qwen generations (Qwen2.5/Qwen3).}
\label{tab:master}
\end{table}

\section{Two defects we found in our own instrumentation}
\label{app:bosfix}
Both were found by us during a late audit, not by review, and both are reported here with the
before/after numbers rather than silently corrected.

\textbf{(1) A double-BOS tokenization defect affecting three of ten open-weight models.}
Our measurement scripts build a prompt with \texttt{apply\_chat\_template(tokenize=False)} and then
encode it with the tokenizer's default \texttt{add\_special\_tokens=True}. For model families whose
chat template already emits a beginning-of-sequence token, that prepends a second one. We
established the affected set by token-level comparison rather than by assumption: with the fix, the
encoded ids are \emph{byte-identical} for Qwen2.5 (3B/7B/14B/72B), Qwen3 (14B/32B) and Phi-4,
whose tokenizers define no BOS, so those seven models provably cannot change and were not re-run.
Llama-3.2-3B, Llama-3.1-8B and Mistral-7B differ by exactly one token and were re-measured on both
constructions.

The paper's central result is unaffected, since the scaling ladder is Qwen2.5, and the direction
survives everywhere: the execution$>$authoring ordering holds for all three re-measured models,
and for both Llama models the gap \emph{widened} (algorithmic $+0.091\!\to\!+0.128$ and
$+0.148\!\to\!+0.190$), because the defect had been depressing the judging arm more than the
authoring arm. Mistral's algorithmic gap narrowed ($+0.079\!\to\!+0.044$) but stayed positive.
What did change materially is cross-family commentary: Llama-3.2-3B's code judging F1 rises
$0.354\!\to\!0.580$ and its authored suites' acceptance of correct solutions $0.078\!\to\!0.209$,
so its gated yield nearly triples ($0.073\!\to\!0.213$), its gated catch-wrong rises
$0.450\!\to\!0.770$ and its vacuous fraction falls $0.250\!\to\!0.086$. The claim that weak authors
produce vacuous survivors therefore rests on Mistral alone, not on Llama, and \S\ref{sec:mitigation}
now says so.

\textbf{(2) An evaluation too imprecise for the effect it measured.}
\label{app:evalprec}
The RLVR arms were originally scored by sampling $K{=}8$ completions per held-out item at
temperature $1.0$ with no fixed RNG seed. At an oracle-valid rate near $0.95$ over $60$ numeric
items that is $480$ samples, a binomial standard error of about $1.0$ point per arm and roughly
$1.4$ points on the paired difference, against a true effect near $2$ points. The measurement was
therefore of the same order as the quantity being measured, and under it one numeric seed appeared
to reverse sign ($-0.8$ points).

We re-scored the identical checkpoints, with no retraining, at $K{=}32$ with three explicitly seeded
repeats per adapter, which both shrinks the noise and lets it be measured rather than assumed. The
observed within-adapter spread across eval seeds is $0.18$ points (numeric) and $0.16$ points
(lexical), an order of magnitude below the effects. Under this instrument all six numeric seeds are
positive and the apparent reversal becomes $+0.05$. \textbf{To avoid any ambiguity about which
numbers are which:} the superseded $K{=}8$ numeric differences were
$\{+3.3,+2.1,+1.5,+1.9,-0.8,+4.2\}$ and the $K{=}32$ ones reported everywhere in the paper are
$\{+3.0,+1.6,+0.5,+3.2,+0.1,+3.1\}$. Wherever the body cites per-seed numeric differences it cites
the second list; the first appears only here, as the discarded measurement. The corrected estimates are \emph{smaller} than
our earlier ones: numeric $2.29\!\to\!1.91$ points with $d_z$ $2.4\!\to\!1.4$, lexical
$19.10\!\to\!18.51$, and both arms now reach two-sided $p{=}0.031$ at $n{=}6$. One column is not
comparable across the change: \texttt{distinct\_valid\_per\_item} counts distinct valid answers
among $K$ samples and so scales with $K$; at $K{=}32$ its ordering across arms reverses relative to
$K{=}8$, which is a further reason the registered diversity sub-prediction is reported as
unconfirmed.

\section{E6 per-seed results}
\label{app:e6seeds}
Table~\ref{tab:e6seeds} lists the six seeds behind Table~\ref{tab:propagation}. All figures are the precise re-evaluation of App.~\ref{app:evalprec} ($K{=}32$, three seeded repeats), so the base row is identical across training seeds by construction. The oracle arm is tight ($\sigma{=}0.003$); the authored arm varies more ($\sigma{=}0.026$) but every authored seed lies below every oracle seed (the best authored run, $0.764$, still falls short of the worst oracle run, $0.904$), so the gap is not a seed artifact. Seeds 4--6 were added after the first three to lift the arm past the $n{=}3$ significance floor (\S\ref{sec:propagation}); they were produced by the same script, hyperparameters, datasets, and keys, verified by checksum before the run.

\begin{table}[htbp]
\centering\small
\setlength{\tabcolsep}{8pt}
\begin{tabular}{lccc}
\toprule
Seed & base & authored key (pollution) & oracle key (pollution)\\
\midrule
1 & 0.567 & 0.694 (0.204) & 0.912 (0.039)\\
2 & 0.567 & 0.764 (0.155) & 0.911 (0.041)\\
3 & 0.567 & 0.711 (0.187) & 0.914 (0.039)\\
4 & 0.567 & 0.754 (0.162) & 0.911 (0.043)\\
5 & 0.567 & 0.700 (0.196) & 0.904 (0.041)\\
6 & 0.567 & 0.733 (0.170) & 0.914 (0.041)\\
\midrule
mean & 0.567 & 0.726 (0.179) & 0.911 (0.041)\\
\bottomrule
\end{tabular}
\caption{E6 per-seed oracle-valid rate (and trained-policy reward pollution). Training on the oracle key drives pollution to $\approx0.04$; the authored key holds it near $0.18$.}
\label{tab:e6seeds}
\end{table}

\section{Mitigation: probe-strength tradeoff}
\label{app:mitigation}
Beyond the canonical reference, holding out $k$ pooled known-correct solutions as probes gives the same collapse in false rejection (Table~\ref{tab:kprobe}, 14B): a single probe already suffices; additional probes trade a little yield for marginally lower FRR.

\begin{table}[htbp]
\centering\small
\setlength{\tabcolsep}{9pt}
\begin{tabular}{lccc}
\toprule
Gate & FRR & Yield & \\
\midrule
ungated              & 0.711 & 1.000 &\\
canonical reference  & 0.025 & 0.268 &\\
$k{=}1$ known-correct & 0.013 & 0.238 &\\
$k{=}2$ known-correct & 0.017 & 0.226 &\\
$k{=}3$ known-correct & 0.024 & 0.213 &\\
\bottomrule
\end{tabular}
\caption{Probe-strength tradeoff (Qwen2.5-14B, HumanEval+). One known-correct probe is enough to collapse false rejection; more probes slightly reduce yield.}
\label{tab:kprobe}
\end{table}

\section{Negative result and incident log}
\label{app:negative}
In the spirit of full reporting we record what did not work.
\textbf{Boundary entropy (N1).} We tested whether token-level entropy at the specification boundary predicts authoring error per item, as a cheap self-signal; the correlation was negligible ($r{=}{-}0.06$), so entropy is not a usable proxy for ``this authored member is wrong.''
\textbf{Registered sub-predictions (E6).} Active suppression of omitted-but-correct answers and diversity narrowing were registered but not confirmed at the one-epoch budget (omitted fraction and distinct-valid coverage stayed flat or rose); we report the first-order effect (starvation$+$pollution) and leave the distributional dynamics to longer horizons.
\textbf{Repair (E5).} The sample-then-verify prediction was registered and falsified (\S\ref{sec:repair}).
\textbf{Infrastructure incidents.} vLLM memory-profiling races on shared GPUs (mitigated by single-GPU $+$ enforce-eager $+$ free-memory guard); a remote GRPO run OOM'd under neighbor-job contention and was re-run locally; ms-swift required an added \texttt{msgspec} dependency. Full claim$\to$script$\to$log$\to$artifact provenance is maintained in the project records.

\section{Matched-information decomposition of the algorithmic gap}
\label{app:matched}
To test whether the complete-truth gap reflects verifier \emph{construction} or the classical free-recall-vs-recognition and compute asymmetries, we decompose it on the algorithmic construction with the candidate universe $U$ (the 15-word list) \emph{presented to every condition}, so candidate discovery is controlled. We score four ways of producing $\hat S$ as set-F1 against the gold set:
\textbf{A} free authoring (``list every word that $P$'', one call);
\textbf{B} one-shot label-all (``for each word say yes/no'', one call, take the yes-set);
\textbf{C} pointwise judging (one call per word, take the yes-set);
\textbf{D} $=$ C's yes-set assembled programmatically (identical set to C, so the pure-assembly floor is $0$ by construction).
The comparisons isolate: A$\to$B emission-vs-labeling, B$\to$C batching/context, C$\to$D pure assembly ($\approx0$); Table~\ref{tab:matched} reports the decomposition.

\begin{table}[htbp]
\centering\small
\setlength{\tabcolsep}{6pt}
\begin{tabular}{lccccc}
\toprule
Model & A free-author & B label-all & C pointwise & batching (C$-$B) & emission (B$-$A)\\
\midrule
Qwen2.5-7B  & 0.405 & 0.471 & 0.646 & $0.174$ & $0.067$\\
Qwen2.5-14B & 0.475 & 0.590 & 0.734 & $0.144$ & $0.116$\\
Qwen2.5-72B & 0.483 & 0.714 & 0.718 & \best{$0.004$} & \best{$0.231$}\\
\bottomrule
\end{tabular}
\caption{Matched-information decomposition (algorithmic, universe $U$ presented to all conditions; set-F1; assembly floor C$\to$D $=0$ by construction). With candidate discovery controlled, one-shot authoring still lags pointwise judging by $0.24$--$0.26$ at every scale. The decomposition is scale-revealing: the batching term vanishes ($0.174\!\to\!0.004$) while the emission term grows ($0.067\!\to\!0.231$), so at 72B the residual is almost entirely emission. Paired-bootstrap CIs support the trend. Part of the emission term is serialisation (\S\ref{sec:emission}).}
\label{tab:matched}
\end{table}

\begin{table}[htbp]
\centering\small
\setlength{\tabcolsep}{9pt}
\begin{tabular}{lcccccc}
\toprule
Model & $K{=}1$ & $2$ & $4$ & $8$ & $16$ & execution ceiling\\
\midrule
Qwen2.5-7B  & 0.381 & 0.472 & 0.556 & 0.646 & 0.707 & $\approx0.95$\\
Qwen2.5-14B & 0.594 & 0.679 & 0.719 & 0.771 & 0.797 & $\approx0.97$\\
Qwen2.5-72B & 0.544 & 0.635 & 0.714 & 0.798 & 0.842 & $\approx0.98$\\
\bottomrule
\end{tabular}
\caption{Compute--recall frontier (algorithmic; free authoring sampled $K$ times at $T{=}0.9$, union recall against gold). More authoring inference does raise recall, but it stays below the pointwise arm's \emph{recall} at every scale, even at $16\times$ budget, so the gap is not an artifact of granting judging more calls. The ceiling is a recall, not the set-F1 of Table~\ref{tab:matched}; recall upper-bounds F1, which is what makes this an a-fortiori argument.}
\label{tab:frontier_recall}
\end{table}

\noindent Three conclusions. (i) With candidates presented to both arms and the assembly floor at zero, the complete-truth gap is a set-\emph{emission} deficit, not recognition-vs-recall or programmatic assembly. (ii) The decomposition is scale-revealing: batching cost vanishes by 72B ($0.004$) while emission cost grows to $0.231$: scaling fixes labeling-under-context but not the act of emitting the set, which is the durable bottleneck. (iii) On the compute axis, scaling authoring samples raises union recall but saturates below execution at every scale, so ``the gap does not close'' is not an artifact of giving execution more calls; it holds even at $16\times$ the authoring sampling budget (Table~\ref{tab:frontier_recall}).

\section{Complete-truth numeric RLVR: full specification}
\label{app:numspec}
The complete-truth propagation run (\S\ref{sec:propagation-numeric}) carries the paper's causal claim, so we specify it exhaustively. Generator: a deterministic construction script, seed 0.

\begin{table}[htbp]
\centering\small
\setlength{\tabcolsep}{7pt}
\begin{tabular}{ll}
\toprule
Item & Value\\
\midrule
Candidate universe $V$ & $\{1,\dots,100\}$ (bounded, so every $S^{*}$ is finite)\\
Predicate form & range-scoped, ``between $a$ and $b$ that/whose \ldots''\\
Predicate families & divisible by 3 or 5; odd and not prime; two digits differ by 2;\\
 & prime; multiple of 4 not 8; perfect square; digit-sum $=S$; multiple of $K$\\
Windows $(a,b)$ & $a\in\{1,11,\dots,51\}$, widths $\{30,45,70\}$, plus $(1,100),(20,80),(30,90)$\\
Item filter & keep items with $4\leq|S^{*}|\leq30$, de-duplicated by description\\
Unique predicates & 160 total: \textbf{100 train / 60 held-out, zero overlap}\\
$|S^{*}|$ (held-out) & mean $10.3$, min $4$, max $29$\\
Train instances & 2{,}000 (the 100 train predicates replicated $\times20$, shuffled)\\
Authored key & Qwen2.5-14B, one-shot greedy, ``list every whole number \ldots'';\\
 & parsed integers intersected with $V$; empty parse backed off to $\{\min S^{*}\}$\\
Authored key quality & held-out recall $\mathbf{0.926}$, precision $\mathbf{0.987}$\\
Reference (oracle) key & the exact predicate extension, computed in code\\
Reward & $\mathbb{1}[\text{first integer in the completion}\in\text{arm's key}]$\\
Policy / algorithm & Qwen2.5-3B-Instruct, GRPO$+$LoRA, identical config to \S\ref{sec:propagation}\\
Evaluation & $K{=}32$ samples per held-out predicate at $T{=}1.0$, three seeded\\
 & repeats per adapter, averaged (App.~\ref{app:evalprec}); scored by $S^{*}$\\
Per-seed authored & $0.9432$, $0.9547$, $0.9554$, $0.9469$, $0.9731$, $0.9434$\\
Per-seed reference & $0.9734$, $0.9708$, $0.9604$, $0.9786$, $0.9736$, $0.9745$\\
Base (seed-independent) & $0.8877$\\
Paired differences & $+3.0$, $+1.6$, $+0.5$, $+3.2$, $+0.1$, $+3.1$ points; all positive,\\
 & mean $\mathbf{1.9}$, two-sided permutation $p{=}0.031$\\
\bottomrule
\end{tabular}
\caption{Full specification of the complete-truth numeric RLVR arm. Two properties matter for the causal reading: the universe is \emph{bounded}, so an authored key can in principle be complete (unlike WordNet); and train/eval predicates are \emph{disjoint}, so the gap is not memorisation. The authored key here is nearly complete (recall $0.926$), which is precisely why the tax is small, the effect scales with how badly the key is authored, and this task is easy to author.}
\label{tab:numspec}
\end{table}

\noindent\textbf{How to read the magnitude.} With 60 held-out predicates, 3 seeds, and an authored key of recall $0.926$ (full specification in Table~\ref{tab:numspec}), the $2.3$-point result should be read as a \emph{small-scale existence estimate} on an easy-authoring task, not as a general causal magnitude: it establishes that the effect is present and positive under exact ground truth (3/3 seeds), and its size is bounded by how good the key is. The larger lexical figure is reference-relative and cannot be substituted for it (\S\ref{sec:propagation}).

\section{Field corpus: stratified analysis}
\label{app:field}
The production comparison in \S\ref{sec:field} is observational. Here we hold the two covariates the corpus records fixed and ask whether the fill-in (authored-key) deficit survives within strata. It does, on both.

\begin{table}[htbp]
\centering\small
\setlength{\tabcolsep}{7pt}
\begin{tabular}{lccccc}
\toprule
Difficulty & $n$ fill-in & Pass \% & $n$ closed & Pass \% & Diff (pts)\\
\midrule
easy & 1267 & 96.53 & 1764 & 98.81 & $-2.28$\\
medium & 1261 & 97.46 & 1766 & 98.75 & $-1.29$\\
hard & 1250 & 95.44 & 1766 & 98.69 & $-3.25$\\
\midrule
\multicolumn{5}{l}{Pooled (Mantel--Haenszel-style common risk difference)} & $\mathbf{-2.27}$\\
\multicolumn{5}{l}{\quad $95\%$ CI $[-2.93,-1.61]$, $z=-6.76$} & \\
\bottomrule
\end{tabular}
\caption{Fill-in vs.\ closed formats (mcq $+$ msq pooled), stratified by item difficulty. The deficit is present at every level, so it is not an artifact of fill-in items being harder.}
\label{tab:field_diff}
\end{table}

\begin{table}[htbp]
\centering\small
\setlength{\tabcolsep}{9pt}
\begin{tabular}{cccc}
\toprule
Grade & Fill-in pass \% & MCQ pass \% & Diff (pts)\\
\midrule
1 & 95.83 & 99.07 & $-3.24$\\
2 & 96.81 & 98.41 & $-1.60$\\
3 & 95.89 & 99.58 & $-3.69$\\
4 & 96.80 & 99.26 & $-2.46$\\
5 & 97.71 & 99.09 & $-1.38$\\
6 & 97.10 & 99.52 & $-2.42$\\
7 & 96.46 & 98.99 & $-2.53$\\
8 & 98.04 & 99.02 & $-0.98$\\
9 & 95.29 & 99.49 & $-4.20$\\
10 & 95.81 & 97.95 & $-2.14$\\
11 & 97.40 & 99.48 & $-2.08$\\
12 & 96.70 & 99.46 & $-2.76$\\
\midrule
\multicolumn{3}{l}{$12/12$ grades lower; mean $-2.46$; sign test $p=2.4\times10^{-4}$} & \\
\bottomrule
\end{tabular}
\caption{Fill-in vs.\ MCQ pass rate within each grade. The authored-key format is lower in every one of the twelve grades, so the deficit is not driven by grade composition. Grade-wise pass rates are themselves flat within format ($\chi^2(11)=6.2$ mcq, $6.4$ fill-in; $p>0.84$), i.e.\ neither format's difficulty drifts systematically with grade.}
\label{tab:field_grade}
\end{table}

\noindent What this does and does not license (Tables~\ref{tab:field_diff} and~\ref{tab:field_grade}). It rules out difficulty and grade as explanations of the format gap; it does not rule out subject, generator model, answer-space size, evaluator-prompt, or length effects, which the recorded aggregates do not resolve. And the error \emph{taxonomy} (the $10{:}1$ omission ratio) remains evaluator-diagnosed, so it is directional evidence about the \emph{kind} of failure, not a calibrated rate.

\end{document}